\newtheorem{lemma}{Lemma}
\definecolor{darkred}{rgb}{1, 0.1, 0.3}
\definecolor{darkblue}{rgb}{0.1, 0.1, 1}
\definecolor{darkgreen}{rgb}{0,0.6,0.5}
\newcommand {\mm}[1] {\ifmmode{#1}\else{\mbox{\(#1\)}}\fi}
\newtheorem{proposition}{Proposition}
\newtheorem{assumption}{Assumption}
\DeclareMathOperator{\sgn}{sgn}
\begin{document}

\title{\textbf{Learning to Initialize Gradient Descent Using Gradient Descent}}
 
\author{
Kartik Ahuja$^{*}$ \and Amit Dhurandhar$^{*}$ \and Kush R. Varshney\footnote{IBM Research, Thomas J. Watson Research Center, Yortown Heights, New York} 
}
\date{}

\maketitle

\begin{abstract}
Non-convex optimization problems are challenging to solve; the success and computational expense of a gradient descent algorithm or variant depend heavily on the initialization strategy. Often, either random initialization is used or initialization rules are carefully designed by exploiting the nature of the problem class. As a simple alternative to hand-crafted initialization rules, we propose an approach for learning  ``good'' initialization rules from previous solutions. We provide theoretical guarantees that establish conditions that are sufficient in all cases and also necessary in some under which our approach performs better than random initialization. We apply our methodology to various non-convex problems such as generating adversarial examples, generating post hoc explanations for black-box machine learning models, and allocating communication spectrum, and show consistent gains over  other initialization techniques.
\end{abstract}
\section{Introduction}

In many machine learning and engineering tasks, we are often required to  solve optimization problems repeatedly. For instance, a system that generates explanations for black-box machine learning models \cite{ribeiro2016should} \cite{dhurandhar2018explanations} needs to solve a new optimization problem for every new prediction made; a recommender system solves a different optimization problem \cite{koren2009matrix,mairal2010online} every time a new subject arrives. Many of these optimization problems are non-convex, and initialization plays a crucial rule in finding a good local minimum. As is typically the case, the optimization problems for each instance are solved independently. However, if one were to learn ``good'' initializations for many of the instances based on previous solutions, it could lead to significant savings in time, money, and energy, where function and gradient evaluation has an associated cost and carbon footprint \cite{strubell2019energy}.  In this work, we develop methods that \emph{learn to initialize} based on past knowledge.

 Over the recent years, several works, an incomplete representative list --  \cite{andrychowicz2016learning}
 \cite{li2016learning}
 \cite{wichrowska2017learned}
 \cite{finn2018learning}\cite{ khalil2017learning} -- have explored  the ``learning to optimize'' paradigm. Departing from expert-driven design, these works study data-driven design of optimization algorithms for training machine learning models. They are inspired by the long line of work in meta-learning \cite{thrun2012learning, schmidhuber1987evolutionary,schmidhuber1992learning}. Gradient descent based algorithms \cite{nocedal2006numerical} typically consist of two blocks: initialization and step updates (update direction, and learning rates). Works such as \cite{andrychowicz2016learning,li2016learning,wichrowska2017learned,finn2018learning} primarily focus on learning step updates instead of following the standard step update rules.  Many works such as \cite{antoniou2018train,flennerhag2018transferring,finn2017model} focus on learning good models that serve as good initialization for different few-shot learning tasks. In these works \cite{antoniou2018train,flennerhag2018transferring,finn2017model}, the parametrization/identity of the learning task is not known. Hence, these works adopt an approach that does not require a task's identity as input to compute the initialization. However, in many optimization problems that are of interest to us (e.g., generating explanations and adversarial examples), we are given the identity of the optimization problem (e.g., prediction to be explained); we take advantage of this additional information herein. 
 
 Hand-designing initializers tailored to a specific problem class is common.  For instance,  in clustering  \cite{arthur2007k,pena1999empirical}, phase retrieval \cite{candes2015phase},  and deep learning \cite{glorot2010understanding,mishkin2015all, dauphin2019metainit}, different elegant initialization rules have been proposed for the respective problem classes that work better than random initialization. However, it is impractical to develop these rules for every new optimization problem we encounter and a more scalable strategy is highly desirable.  Hence, we aim to build a data-driven initialization approach that is scalable, adaptable, and learns  initialization rules based on the identity of the optimization task.

We propose two methods. The first method uses objective function values whereas the second method uses argument values at the solution to learn the initializers. We establish mild conditions under which the methods are guaranteed to perform well and better than the random initialization. The first method is designed for non-convex problems, especially where the variance in local minimum values is large, and does not offer advantage in convex problems. The second method is designed to work in both convex and non-convex problems. We carry out extensive experiments to show that the proposed methods perform better than many of the existing approaches on several convex and non-convex optimization problems.

\section{Problem formulation}
\label{secn: prob_formln}
 We are given an objective function $f:\Theta \times \mathcal{X} \rightarrow \mathbb{R}$ to be minimized. The input to the function is divided in two categories:
  \begin{itemize}
 \item $\theta \in \Theta \subseteq \mathbb{R}^{m}$:  the set of variables to be optimized, 
\item $x\in \mathcal{X} \subseteq \mathbb{R}^{n}$:  the parameters that specify the identity of the optimization instance.
 \end{itemize}
 The minimization problem for instance $x$ is given as
\begin{equation}
  \min_{\theta \in \Theta} f(\theta, x),
  \label{eqn: stat_opt}
\end{equation}
 For each $x\in \mathcal{X}, $ the function $f(\cdot,x)$ is differentiable in $\theta$ for all $\theta$ in the interior of $\Theta$. We use a gradient descent solver (see Algorithm \ref{alg1:GDA}), which we refer to as $\mathsf{GD}$. $\mathsf{GD}$ takes as input an initialization $\theta_{\mathsf{in}}$ and function $f(\cdot,x)$ corresponding to the instance $x$, and outputs the argument and function value at the solution. Define   $g:\Theta \times \mathcal{X} \rightarrow \mathbb{R}$  to capture the relationship between the initialization $\theta_{\mathsf{in}}$, problem instance $x$, and the function value at the solution $g(\theta_{\mathsf{in}},x)$ found by $\mathsf{GD}$.  Similarly define  $g^{\dagger}$, where $g^{\dagger}(\theta_{\mathsf{in}},x) \in \Theta$ is the argument value at the solution. If  the number of iterations in $\mathsf{GD}$ is large and certain standard conditions  are met (gradient of $f(\cdot,x)$ is Lipschitz continuous and step updates follow Wolfe conditions) \cite{nocedal2006numerical}, then $\mathsf{GD}$ converges to a \emph{stationary point}. In such cases, $g$ and $g^{\dagger}$ are the function value and argument at the stationary point.

We are presented a sequence of instances $\{X_i\}_{i=1}^{\infty}$,  drawn i.i.d.\ from a distribution $\mathbb{P}_X$. The sequence of functions corresponding to these instances are $\{f_i\}_{i=1}^{\infty}$, where $f_i = f(\cdot,X_i)$ is a function of $\theta$ for a fixed $X_i$.  Our goal is to solve equation \eqref{eqn: stat_opt} repeatedly for $\{X_i\}_{i=1}^{\infty}$. Observe that we can use the information  that we acquire from solving the optimization  for past instances to improve the current search. We now provide motivating examples.
 
 \textbf{ML applications.}
 Consider binary classification  with labels in $\{1,-1\}$.
 Given a model $u:\mathbb{R}^{m} \rightarrow \mathbb{R}$, prediction $y=\sgn(u(x))$. Consider the following problem: 
\begin{equation}
  \min_{\theta \in \Theta}\;\;\; \|\theta\|^2  + \beta \|\theta\|_1  \quad 
  \text{s.t.}\;\;\;  y(u(x+\theta))\leq 0.
  \label{eqn: stat_opt_example1}
\end{equation}

If we set $\beta=0$ and $\Theta =\mathbb{R}^{m}$, the problem is finding adversarial example for an instance $x$ input to model $u$ \cite{goodfellow2014explaining}. $\theta$ is the smallest perturbation (attack) that moves $x$ to the other side of the boundary $u(x)=0$.  If $u$ is a deep neural network, then the above problem is non-convex and it is non-trivial to find the global optimum tractably. There is a lot of interest in understanding \cite{cheng2018query,cheng2019sign} whether an adversary can generate attacks with  few queries to the model and its gradients. By exploiting the past adversarial instances, we show  that the adversary can generate good adversarial examples with few queries.

If $\beta>0$ and the set $\Theta$ is constrained to a special non-convex set referred to as ``pertinent negatives'' (PNs)  \cite{dhurandhar2018explanations}, the problem in equation \eqref{eqn: stat_opt_example1} is equivalent to finding instance-wise model explanations in the form of PNs.  For a system that generates explanations on a repeated instance-to-instance basis (such as Fiddler AI, IBM Watson Openscale, H2O), it is important to generate these explanations fast and with a few model queries to minimize the user's costs.

\textbf{Engineering application.} In a communication system \cite{chiang2007power}, users often transmit signals in the same spectrum and interfere with each other. Therefore, a base station must determine a spectrum sharing protocol. Consider a system with $N$ senders and $N$ receivers. Each sender $i$ transmits at power level $\theta^i \in [0,1]$. Define a channel matrix $x$, where the channel strength for sender $i$ to the receiver $j$ is $x[i,j] \in [0,1]$. The rate at which sender $i$'s data is transmitted to receiver $i$ is given as $r_{i}(\mathbf{\theta}, x) = \log(1 + \frac{x[i,i]\theta^i}{1+\sum_{j\not=i}x[j,i] \theta^{j}})$, where $\mathbf{\theta} = [\theta^1,...,\theta^N]$ is the power vector.  
The goal is to solve for $\mathbf{\theta} \in [0,1]^{N}$ maximizing $\sum_{i=1}^{N} r_{i}(\mathbf{\theta}, x)$. This problem is referred to as ``sum-rate optimization'.  In reality, the channel matrix $x$ is not fixed and keeps changing \cite{tse2005fundamentals}. As a result, the  problem needs to be solved repeatedly thus making learning initialization important.

In all three examples described above, currently used optimization solvers repeatedly solve problem instances independently. Next, we describe how we can capture data from previous solves to improve future searches. 

\textbf{Scope of this work.} Before proceeding, let us delineate problems which are not in the scope of current work, but could be fruitful avenues for future research. Our formulation is designed for scenarios in which individual task parametrization $x$ values are known such as  generating explanations, adversarial examples, etc. Consider we are training neural networks parametrized by  $\theta\in \Theta$ for few-shot learning, i.e., finding networks that adapt quicky to a set  prediction tasks $x\in \mathcal{X}$, where $x$ characterizes the joint distribution of the features and the labels. In such cases, the objective defined in equation \eqref{eqn: stat_opt}, $f(\theta,x)$, corresponds to the expected risk of model $\theta$ for task $x$. To apply our method to these tasks, $x$ needs to be estimated, which is beyond the scope of current work. Model-agnostic meta-learning (MAML) \cite{finn2017model} does not use $x$ and thus is better suited for such few-shot learning tasks. We also do not focus on building initializers \cite{glorot2010understanding}, \cite{dauphin2019metainit} that specifically overcome the difficulties of training deep learning models.

\begin{algorithm}[H]
  \caption{Gradient Descent Algorithm $(\mathsf{GD})$}
  \label{alg:example}
\begin{algorithmic}
  \STATE {\bfseries Input:} function $w$,   initial value $\theta_{\mathsf{in}}$, $\epsilon$, $\mathsf{iter_{max}}$, step size rules $\{t_{k}\}_{k=0}^{\infty}$,  $\Pi_{\Theta}$ projection on the set $\Theta$
\STATE Initialize: $\mathsf{iter} = 0$
  \WHILE{$\mathsf{iter} \leq \mathsf{iter_{max}}$ and $\|\nabla_{\theta}w(\theta)\|_{\theta=\theta_{\mathsf{iter}}}\geq \epsilon$}
  \STATE $\theta_{\mathsf{iter}+1} = \Pi_{\Theta} \Big[ \theta_{\mathsf{iter}} -t_{\mathsf{iter}}\nabla_{\theta}w(\theta) \Big]$
\STATE $\mathsf{iter} = \mathsf{iter} + 1$
  \ENDWHILE
  \STATE $\theta^{\dagger}= \theta_{\mathsf{iter}}$, $w^{\dagger} = w(\theta_{\mathsf{iter}})$
  \STATE {\bfseries Output:} $\theta^{\dagger}$, $w^{\dagger}$
  \label{alg1:GDA}
\end{algorithmic}
\end{algorithm}

\section{Learn to initialize  gradient descent}

\subsection{Independent random vs.\ conditional random initialization}
Define a random variable $\hat{\theta}$ with a distribution $\mathbb{P}_{\hat{\theta}}$ with support $\Theta$. Independent random initialization method works as follows. For a problem instance $X$, $\mathsf{GD}$ is initialized using an independent draw of $\hat{\theta}$ and it finds a solution with objective value $\hat{Y} = g(X, \hat{\theta})$ ($g$ was defined in Section \ref{secn: prob_formln}). 
In this work, we develop a general form of initialization where the draw is conditioned on the problem instance $X \sim \mathbb{P}_{X}$.
Define a random variable $\tilde{\theta}$ with a distribution $\mathbb{P}_{\tilde{\theta}|X}$. Conditional random initialization method works as follows. For a problem instance $X$,  $\mathsf{GD}$ is initialized using a conditionally independent draw of $\tilde{\theta}|X$ and it finds a solution with objective value $\tilde{Y} = g(X, \tilde{\theta})$.  The performance of standard random initialization is $\mathbb{E}_{X,\hat{\theta}}[\hat{Y}]$, while that of conditional random initialization  is $\mathbb{E}_{X,\tilde{\theta}}[\tilde{Y}]$.

\textbf{What are the minimum conditions on $\mathbb{P}_{\tilde{\theta}}$ to ensure that conditional random initialization $\tilde{\theta}$ is better than independent random initialization $\hat{\theta}$? }
The answer leads us to the principle underlying our main method. We carry out the analysis for a simple family of initializers $\tilde{\theta}$ that we define next.
Consider two independent random initializations $\hat{\theta}_{0} \sim \mathbb{P}_{\hat{\theta}}$ and $\hat{\theta}_{1} \sim \mathbb{P}_{\hat{\theta}}$ with corresponding solution values  $\hat{Y}_0 = g(X, \hat{\theta}_0)$ and $\hat{Y}_1 = g(X, \hat{\theta}_1)$ respectively. Define a random variable $Z \in \{0,1\}$, where $Z|\hat{\theta}_0, \hat{\theta}_1,X$ is a Bernoulli random variable where $p(\hat{\theta}_0, \hat{\theta}_1,X)$ is the probability $Z=0$. 
We define a family of initializers $\tilde{\theta}$ that select from one of the two initializers $\hat{\theta}_{0}, \hat{\theta}_{1}$  as $\tilde{\theta} = (1-Z)\hat{\theta}_0 + Z\hat{\theta}_1$.  

\begin{assumption}
(Expected probablistic ordering)
$  \mathbb{E}_{X, \hat{\theta}_{0}, \hat{\theta}_{1}}\Big[(p(\hat{\theta}_0, \hat{\theta}_1,X)-\frac{1}{2}) (\hat{Y}_0 - \hat{Y}_1) \Big] <0 $.
\label{assm1}
\end{assumption}

If Assumption \ref{assm1} holds, then  for ``some realizations'' of $\hat{\theta}_0$, $\hat{\theta}_1$ and $X$, we know either  $p(\hat{\theta}_0, \hat{\theta}_1,X) > \frac{1}{2}, \hat{Y}_0 < \hat{Y}_1 $ or $p(\hat{\theta}_0, \hat{\theta}_1,X)< \frac{1}{2}, \hat{Y}_0 > \hat{Y}_1$. For such realizations,  $\tilde{\theta}$ correctly orders the two initializations with probability more than $\frac{1}{2}$, which is just better than a random guess.

\begin{proposition} (Conditional  vs.\ independent random)
 $\mathrm{Assumption\; \ref{assm1}}\iff$  $\mathbb{E}_{X,\tilde{\theta}}[\tilde{Y}] < \mathbb{E}_{X,\hat{\theta}}[\hat{Y}]$
\label{prop1}
\end{proposition}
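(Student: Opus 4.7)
\textbf{Proof plan for Proposition \ref{prop1}.} The plan is to show that the difference $\mathbb{E}[\tilde{Y}]-\mathbb{E}[\hat{Y}]$ is, up to sign, exactly the quantity appearing in Assumption \ref{assm1}, from which the two-sided implication is immediate.

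First I would observe that since $Z\in\{0,1\}$, the definition $\tilde{\theta}=(1-Z)\hat{\theta}_0+Z\hat{\theta}_1$ gives $\tilde{\theta}=\hat{\theta}_0$ when $Z=0$ and $\tilde{\theta}=\hat{\theta}_1$ when $Z=1$. Feeding this into $g(X,\cdot)$ yields the clean pointwise identity
\[
\tilde{Y} \;=\; g(X,\tilde{\theta}) \;=\; (1-Z)\,\hat{Y}_0 \;+\; Z\,\hat{Y}_1.
\]
Next I would take expectations by conditioning on $(\hat{\theta}_0,\hat{\theta}_1,X)$. Since $Z\mid\hat{\theta}_0,\hat{\theta}_1,X$ is Bernoulli with $\Pr(Z=0)=p(\hat{\theta}_0,\hat{\theta}_1,X)$, writing $p$ for $p(\hat{\theta}_0,\hat{\theta}_1,X)$ gives
\[
\mathbb{E}\big[\tilde{Y}\,\big|\,\hat{\theta}_0,\hat{\theta}_1,X\big] \;=\; p\,\hat{Y}_0 + (1-p)\,\hat{Y}_1,
\]
so $\mathbb{E}[\tilde{Y}] = \mathbb{E}\big[p\,\hat{Y}_0 + (1-p)\,\hat{Y}_1\big]$ after taking outer expectation.

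The second ingredient is a symmetry argument for $\mathbb{E}[\hat{Y}]$. Because $\hat{\theta}_0$ and $\hat{\theta}_1$ are i.i.d.\ copies of $\hat{\theta}$, we have $\mathbb{E}[\hat{Y}]=\mathbb{E}[\hat{Y}_0]=\mathbb{E}[\hat{Y}_1]$, which I would rewrite in the symmetric form $\mathbb{E}[\hat{Y}] = \tfrac{1}{2}\mathbb{E}[\hat{Y}_0+\hat{Y}_1]$. Subtracting,
\[
\mathbb{E}[\tilde{Y}]-\mathbb{E}[\hat{Y}] \;=\; \mathbb{E}\Big[\big(p-\tfrac{1}{2}\big)\hat{Y}_0 + \big(\tfrac{1}{2}-p\big)\hat{Y}_1\Big] \;=\; \mathbb{E}\Big[\big(p-\tfrac{1}{2}\big)\big(\hat{Y}_0-\hat{Y}_1\big)\Big].
\]
This identity is exactly the expression in Assumption \ref{assm1}, so negativity of the right-hand side is equivalent to $\mathbb{E}[\tilde{Y}]<\mathbb{E}[\hat{Y}]$, establishing both directions simultaneously.

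The proof is essentially a bookkeeping argument once one sees the pointwise identity for $\tilde{Y}$, so there is no real obstacle; the only subtle step is noticing that the symmetric rewriting $\mathbb{E}[\hat{Y}]=\tfrac{1}{2}(\mathbb{E}[\hat{Y}_0]+\mathbb{E}[\hat{Y}_1])$ is what lets the difference collapse into the product $(p-\tfrac12)(\hat{Y}_0-\hat{Y}_1)$ that appears in the assumption. One should verify mild integrability of $\hat{Y}_0,\hat{Y}_1$ (guaranteed, for example, if $g$ is bounded on $\Theta\times\mathcal{X}$, or under a standard integrability hypothesis on the solver output) so that all expectations and the tower property are legitimate.
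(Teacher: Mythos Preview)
Your proposal is correct and follows essentially the same route as the paper's proof: both establish the pointwise identity $\tilde{Y}=(1-Z)\hat{Y}_0+Z\hat{Y}_1$, integrate out $Z$ to obtain $\mathbb{E}[\tilde{Y}]=\mathbb{E}[p\hat{Y}_0+(1-p)\hat{Y}_1]$, rewrite $\mathbb{E}[\hat{Y}]$ symmetrically as $\tfrac12\mathbb{E}[\hat{Y}_0+\hat{Y}_1]$, and subtract to obtain the exact expression in Assumption~\ref{assm1}. Your added remark on integrability is a reasonable caveat but not present in the paper's version.
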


The proofs to all the propositions are in the Appendix.

\textbf{Remarks.}
Suppose $\mathsf{GD}$ converges to stationary points; in such a case $\tilde{\theta}$ described above compares  stationary points. In addition, if $f(\cdot,x)$ is convex in $\theta$ for all $x \in \mathcal{X}$ and a minimzer exists in the interior of $\Theta$, then all the initializers are equally good as they reach the same global minimum value. As a result, Assumption \ref{assm1} does not hold (since $\hat{Y}_0 = \hat{Y}_1$) and thus we cannot build a $\tilde{\theta}$ better than $\hat{\theta}$. Therefore, $\tilde{\theta}$ family described above is more suited for non-convex optimization problems.

Next, we consider a stricter version of Assumption \ref{assm1} to understand the best possible performance achievable by initializers from the family $\tilde{\theta} = (1-Z)\hat{\theta}_0 + Z\hat{\theta}_1$. We remove the expectation in Assumption \ref{assm1} and  require the probability of correct ordering to be larger than $\frac{1}{2}$.

\begin{assumption}(Probablistic ordering)
 $\exists\; \gamma \in (\frac{1}{2}, 1]$
\newline 
$\hat{Y}_0 < \hat{Y}_1 \implies p(\hat{\theta}_0, \hat{\theta}_1,X)\geq \gamma$, 
$\hat{Y}_0 > \hat{Y}_1 \implies 1-p(\hat{\theta}_0, \hat{\theta}_1,X)\geq \gamma$.
\label{assm2}
\end{assumption}

\begin{proposition} (Bounds on conditional random initialization)
$\mathrm{Assumption\; \ref{assm2}} \implies$
 \begin{equation*}
 \begin{split}
 & \mathbb{E}_{X,\hat{\theta}_0,\hat{\theta}_1}[\min\{\hat{Y}_0, \hat{Y}_1\}]\leq \mathbb{E}_{X,\tilde{\theta}}[\tilde{Y}]  \leq  \gamma \mathbb{E}_{X,\hat{\theta}_0,\hat{\theta}_1}[\min\{\hat{Y}_0, \hat{Y}_1\}] + (1-\gamma)\mathbb{E}_{X,\hat{\theta}_0,\hat{\theta}_1}[ \max\{\hat{Y}_{0}, \hat{Y}_1\}].
 \end{split}
 \end{equation*}
 \label{prop2}
\end{proposition}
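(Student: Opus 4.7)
The plan is to condition on the triple $(\hat{\theta}_0,\hat{\theta}_1,X)$ and do a case analysis on the sign of $\hat{Y}_0-\hat{Y}_1$. Since $Z$ is Bernoulli given the conditioning, writing $p=p(\hat{\theta}_0,\hat{\theta}_1,X)$, the selection rule $\tilde{\theta}=(1-Z)\hat{\theta}_0+Z\hat{\theta}_1$ immediately gives
\[
\mathbb{E}\bigl[\tilde{Y}\,\bigm|\,\hat{\theta}_0,\hat{\theta}_1,X\bigr]=p\,\hat{Y}_0+(1-p)\,\hat{Y}_1,
\]
which is a convex combination of $\hat{Y}_0$ and $\hat{Y}_1$, hence automatically lies in $[\min\{\hat{Y}_0,\hat{Y}_1\},\max\{\hat{Y}_0,\hat{Y}_1\}]$. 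The lower bound in the proposition then follows directly by taking expectation via the tower property, with no use of Assumption 2 required.

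For the upper bound, I would argue pointwise. On the event $\{\hat{Y}_0<\hat{Y}_1\}$, Assumption 2 gives $p\geq\gamma>1/2$; since the convex combination $p\hat{Y}_0+(1-p)\hat{Y}_1$ is a decreasing function of $p$ when $\hat{Y}_0<\hat{Y}_1$, we get $p\hat{Y}_0+(1-p)\hat{Y}_1\le\gamma\hat{Y}_0+(1-\gamma)\hat{Y}_1=\gamma\min\{\hat{Y}_0,\hat{Y}_1\}+(1-\gamma)\max\{\hat{Y}_0,\hat{Y}_1\}$. Symmetrically, on $\{\hat{Y}_0>\hat{Y}_1\}$, Assumption 2 gives $1-p\ge\gamma$, i.e.\ $p\le 1-\gamma$, and the same convex-combination monotonicity (this time increasing in $p$) yields the identical upper bound $\gamma\min\{\hat{Y}_0,\hat{Y}_1\}+(1-\gamma)\max\{\hat{Y}_0,\hat{Y}_1\}$. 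The degenerate event $\{\hat{Y}_0=\hat{Y}_1\}$ is trivial because both sides collapse to the common value.

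Finally I would take the total expectation of the conditional inequalities over $(X,\hat{\theta}_0,\hat{\theta}_1)$, using $\mathbb{E}_{X,\tilde{\theta}}[\tilde{Y}]=\mathbb{E}\bigl[\mathbb{E}[\tilde{Y}\mid\hat{\theta}_0,\hat{\theta}_1,X]\bigr]$, to obtain both claimed inequalities simultaneously. There is no real obstacle here; the only thing worth being careful about is remembering that the direction of monotonicity of $p\hat{Y}_0+(1-p)\hat{Y}_1$ in $p$ flips with the sign of $\hat{Y}_0-\hat{Y}_1$, so the two cases must be handled separately before they merge into the single $\min/\max$ expression. Once that is noted, the rest is a one-line integration, and convexity of expectation (or linearity combined with $\min\le\max$) makes clear that the lower bound is tight when $p\equiv 1$ on the ordering event and the upper bound is tight when $p=\gamma$ exactly.
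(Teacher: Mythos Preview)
Your proposal is correct and follows essentially the same route as the paper: condition on $(\hat{\theta}_0,\hat{\theta}_1,X)$ to write $\mathbb{E}[\tilde{Y}\mid\hat{\theta}_0,\hat{\theta}_1,X]=p\hat{Y}_0+(1-p)\hat{Y}_1$, do the two-case analysis on the ordering of $\hat{Y}_0,\hat{Y}_1$ using Assumption~\ref{assm2} for the upper bound, and observe the trivial pointwise lower bound. The only cosmetic difference is that the paper obtains the lower bound by noting $\tilde{Y}\in\{\hat{Y}_0,\hat{Y}_1\}$ almost surely (hence $\min\{\hat{Y}_0,\hat{Y}_1\}\le\tilde{Y}$), whereas you phrase it as the conditional expectation being a convex combination; these are equivalent one-line observations.
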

In non-convex optimization, many solvers used in practice follow a multi-start based approach, i.e., do multiple random initializations and then use the best solution \cite{hu2009random,  marti2016multi}. The performance of a two-start based approach is $\mathbb{E}_{X,\hat{\theta}_0, \hat{\theta}_1}[\min\{\hat{Y}_0, \hat{Y}_1\}]$. From Proposition \ref{prop2}, if the initializer $\tilde{\theta}$ can order the two points sufficiently accurately, i.e. $\gamma$ is high, then the performance of initialization is close to the multi-start based approach. Next, we construct an initializer inspired from Proposition \ref{assm1} and \ref{assm2} that sequentially initializes $\mathsf{GD}$ to solve problem instances $\{X_i\}_{i=1}^{\infty}$.

\textbf{How can we construct initializers that are better than independent random initializer? }

\textbf{Vanilla approach.}  
 For an incoming instance $X$, sample $\hat{\theta}_0$ and $\hat{\theta}_1$ from $\mathbb{P}_{\hat{\theta}}$, compute the solutions $\hat{Y}_0$ and $\hat{Y}_1$ respectively using $\mathsf{GD}$ and select the solution with a smaller objective value.  Repeat this for first $N$  instances.  Train a model $\Psi$ that takes as input $X, \hat{\theta}_0, \hat{\theta}_1$ and outputs a real value  to predict  $\hat{Y}_1-\hat{Y}_0$. For every new $X$, sample $\hat{\theta}_0$ and $\hat{\theta}_1$. Define a Bernoulli  $Z_{\mathsf{vnila}}$ with $p_{\mathsf{vnila}}(\hat{\theta}_0, \hat{\theta}_1,X) = \frac{e^{\Psi(\hat{\theta}_0, \hat{\theta}_1,X)}}{1+e^{\Psi(\hat{\theta}_0, \hat{\theta}_1,X)}}$ as the probability $Z_{\mathsf{vnila}}=0$ and initialize from $\tilde{\theta}_{\mathsf{vnila}} = (1-Z_{\mathsf{vnila}}) \hat{\theta}_0 + Z_{\mathsf{vnila}} \hat{\theta}_1$. Define the corresponding solution value $\tilde{Y}_{\mathsf{vnilla}}= (1-Z_{\mathsf{vnila}}) \hat{Y}_0 + Z_{\mathsf{vnila}} \hat{Y}_1 $. The Pearson correlation between  random variables $U$, $V$ is $\rho(U,V)$.

\begin{proposition}
\label{prop3_van}
(Vanilla vs.\ independent random)
If $\rho(p_{\mathsf{vnila}}(\hat{\theta}_0, \hat{\theta}_1,X),\hat{Y}_1-\hat{Y}_0)>0$, then the Vanilla approach performs better than the independent random initializer, i.e.,  $\mathbb{E}_{X, \tilde{\theta}_{\mathsf{vnila}}}[\tilde{Y}_{\mathsf{vnila}} ]< \mathbb{E}_{X, \hat{\theta}}[\hat{Y} ]$.
\end{proposition}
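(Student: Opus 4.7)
The plan is to reduce the proposition to Proposition~\ref{prop1} by verifying that Assumption~\ref{assm1} holds for the Vanilla probability $p_{\mathsf{vnila}}$. Concretely, Proposition~\ref{prop1} tells us that
\[
\mathbb{E}_{X,\tilde{\theta}_{\mathsf{vnila}}}[\tilde{Y}_{\mathsf{vnila}}] < \mathbb{E}_{X,\hat{\theta}}[\hat{Y}]
\]
is equivalent to
\[
\mathbb{E}_{X,\hat{\theta}_0,\hat{\theta}_1}\Big[\big(p_{\mathsf{vnila}}(\hat{\theta}_0,\hat{\theta}_1,X) - \tfrac{1}{2}\big)(\hat{Y}_0 - \hat{Y}_1)\Big] < 0,
\]
so all I need to show is that the latter is implied by the positive correlation hypothesis.

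First I would rewrite the target quantity in covariance form. Expanding,
\[
\mathbb{E}\big[(p_{\mathsf{vnila}} - \tfrac{1}{2})(\hat{Y}_0 - \hat{Y}_1)\big]
= -\mathbb{E}\big[p_{\mathsf{vnila}}(\hat{Y}_1 - \hat{Y}_0)\big] + \tfrac{1}{2}\mathbb{E}[\hat{Y}_1 - \hat{Y}_0].
\]
Next I would exploit the fact that $\hat{\theta}_0$ and $\hat{\theta}_1$ are i.i.d.\ draws from $\mathbb{P}_{\hat{\theta}}$, independent of $X$. By exchangeability, $(X,\hat{\theta}_0,\hat{\theta}_1) \stackrel{d}{=} (X,\hat{\theta}_1,\hat{\theta}_0)$, so $\hat{Y}_0$ and $\hat{Y}_1$ are identically distributed and $\mathbb{E}[\hat{Y}_1 - \hat{Y}_0] = 0$. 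Hence
\[
\mathbb{E}\big[(p_{\mathsf{vnila}} - \tfrac{1}{2})(\hat{Y}_0 - \hat{Y}_1)\big] = -\mathbb{E}\big[p_{\mathsf{vnila}}(\hat{Y}_1 - \hat{Y}_0)\big] = -\mathrm{Cov}\big(p_{\mathsf{vnila}},\, \hat{Y}_1 - \hat{Y}_0\big),
\]
where in the last step I used $\mathbb{E}[\hat{Y}_1 - \hat{Y}_0]=0$ to replace the raw expectation with the covariance.

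Finally, Pearson correlation is the covariance scaled by the product of standard deviations, both of which are positive (the logistic function yields $p_{\mathsf{vnila}} \in (0,1)$ with nontrivial variance, and if $\hat{Y}_1 - \hat{Y}_0$ were a.s.\ constant, exchangeability would force it to be zero and then the correlation would be undefined rather than strictly positive). Consequently,
\[
\rho(p_{\mathsf{vnila}},\hat{Y}_1 - \hat{Y}_0) > 0 \iff \mathrm{Cov}(p_{\mathsf{vnila}},\hat{Y}_1 - \hat{Y}_0) > 0,
\]
which by the preceding display is exactly Assumption~\ref{assm1}. Invoking Proposition~\ref{prop1} then yields the conclusion.

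I do not anticipate a serious obstacle here; the only delicate point is the exchangeability step that kills the $\tfrac{1}{2}\mathbb{E}[\hat{Y}_1 - \hat{Y}_0]$ term, which lets the correlation hypothesis line up exactly with Assumption~\ref{assm1}. If $\hat{Y}_0 = \hat{Y}_1$ almost surely (e.g., convex $f$), the Pearson correlation is not well-defined and the hypothesis of the proposition is vacuous, consistent with the remark after Proposition~\ref{prop1}.
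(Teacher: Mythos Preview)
Your proposal is correct and follows essentially the same route as the paper: use $\mathbb{E}[\hat{Y}_1-\hat{Y}_0]=0$ (from $\hat{\theta}_0,\hat{\theta}_1$ being i.i.d.) to convert the positive-correlation hypothesis into Assumption~\ref{assm1}, then invoke Proposition~\ref{prop1}. Your write-up is more explicit about the covariance/correlation bookkeeping and the degenerate case, but the argument is the same.
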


Since the model $\Psi$ tries to predict $\hat{Y}_1 -\hat{Y}_0$, we expect a positive correlation between $\Psi$ and $\hat{Y}_1 -\hat{Y}_0$. As  $p_{\mathsf{vnila}}$ is an increasing invertible transformation of $\Psi$, we  expect a positive correlation between $p_{\mathsf{vnila}}$ and $\hat{Y}_1 -\hat{Y}_0$ (see Appendix for details). Proposition \ref{prop3_van} is reminiscent of the following result in binary classification: as long as a classifier's output is positively correlated with the binary label, the classifier performs better than a uniformly random classifier (see Appendix for details). The Vanilla approach only works with two initial values and requires access to the outcomes from multiple initializations for the same problem instance. Next, we propose a more general approach next that works with multiple initial values and also does not require access to multiple outcomes for the same instance.

\subsection{Val-Init: Learn from solution values } 
The approach is divided into two phases. In the first phase comprising $N$ instances, we find the solution for the $i^{th}$ instance $X_i$ using random initialization  $\hat{\theta} \sim \mathbb{P}_{\hat{\theta}}$. In the second phase, we learn a model $h_{\mathsf{val}}: \mathbb{R}^{m } \times \mathbb{R}^{n} \rightarrow \mathbb{R}$ to predict for initialization $\hat{\theta}$ and instance $X_i$   the  objective value at solution  $g(\hat{\theta},X_i)$.  We use the learned model $h_{\mathsf{val}}$ to select initializations as follows. For each new instance, generate $M$ i.i.d.\ random initializations $\{\hat{\theta}_k\}_{k=0}^{M-1}$ from $\mathbb{P}_{\hat{\theta}}$ and compute the predicted value of the final solution using $h_{\mathsf{val}}$. Select the initialization with lowest predicted value.  We provide the algorithmic description in Algorithm \ref{alg:VIA}; we refer to the approach as Val-Init  as it uses the predicted values to initialize. We define some random variables to describe the Val-Init initializer. Define $\{Z_{k}\}_{k=0}^{M-1}$ as $M$ random variables, where each $Z_k \in \{0,1\}$ and $\sum_{k=0}^{M-1} Z_{k}=1$. If $\arg\min \{h_{\mathsf{val}}(X,\hat{\theta}_{k})\}_{k=0}^{M-1} =j$, then $Z_{j}=1$, else $Z_{j}=0$, thus $Z_j$ indicates the initializer  selected. Define the Val-Init initializer as $\tilde{\theta}_{\mathsf{val}} = \sum_{k=0}^{M-1}Z_k\hat{\theta}_k$ and the corresponding solution value as $\tilde{Y}_{\mathsf{val}} = \sum_{k=0}^{M-1}Z_k\hat{Y}_k$, where $\hat{Y}_k = g(X, \hat{\theta}_k)$ is the solution value  achieved from $\hat{\theta}_k$.
\begin{proposition} \label{prop4} (Val-Init vs.\ independent random)
If   $\rho(Z_{k}, \hat{Y}_{M-1}-\hat{Y}_{k})>0$  $\forall k\in\{0,\cdots, M-2\}$, then Val-Init performs better  than the independent random initializer, i.e., $\mathbb{E}_{X, \tilde{\theta}_{\mathsf{val}}}[\tilde{Y}_{\mathsf{val}} ]< \mathbb{E}_{X, \hat{\theta}}[\hat{Y} ]$. 
\end{proposition}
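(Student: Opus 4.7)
The plan is to mirror the argument behind Proposition \ref{prop3_van} but with an $M$-way selection rather than a $2$-way one. The central trick is the partition-of-unity identity $\sum_{k=0}^{M-1} Z_k \equiv 1$, which lets me rewrite both $\mathbb{E}_{X,\hat{\theta}}[\hat{Y}]$ and $\mathbb{E}_{X,\tilde{\theta}_{\mathsf{val}}}[\tilde{Y}_{\mathsf{val}}]$ as sums of $M$ expectations indexed by the same $k$, so that their difference collapses into a covariance-weighted sum that the hypothesis controls term by term.

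First, I would observe that because the $\{\hat{\theta}_k\}_{k=0}^{M-1}$ are i.i.d.\ from $\mathbb{P}_{\hat{\theta}}$, the solution values $\hat{Y}_k = g(X,\hat{\theta}_k)$ share the marginal distribution of $\hat{Y}$; in particular, $\mathbb{E}[\hat{Y}] = \mathbb{E}[\hat{Y}_{M-1}]$. Multiplying the constant function $\hat{Y}_{M-1}$ by $\sum_k Z_k = 1$ and taking expectations gives $\mathbb{E}[\hat{Y}] = \sum_{k=0}^{M-1} \mathbb{E}[Z_k \hat{Y}_{M-1}]$. Subtracting the definition $\mathbb{E}[\tilde{Y}_{\mathsf{val}}] = \sum_{k=0}^{M-1} \mathbb{E}[Z_k \hat{Y}_k]$ and dropping the identically zero $k = M-1$ term yields the compact representation $\mathbb{E}[\hat{Y}] - \mathbb{E}[\tilde{Y}_{\mathsf{val}}] = \sum_{k=0}^{M-2} \mathbb{E}[Z_k(\hat{Y}_{M-1} - \hat{Y}_k)]$.

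Second, since $\hat{Y}_{M-1}$ and $\hat{Y}_k$ are identically distributed, $\mathbb{E}[\hat{Y}_{M-1} - \hat{Y}_k] = 0$, so each summand equals the covariance $\mathrm{Cov}(Z_k, \hat{Y}_{M-1} - \hat{Y}_k)$, which in turn equals $\rho(Z_k, \hat{Y}_{M-1} - \hat{Y}_k)\,\sigma(Z_k)\,\sigma(\hat{Y}_{M-1} - \hat{Y}_k)$. The hypothesis $\rho > 0$ for every $k \in \{0,\dots,M-2\}$ makes each term strictly positive (the implicit non-degeneracy of the two standard deviations being anyway necessary for the correlations to be defined in the first place), and summing yields the required strict inequality $\mathbb{E}_{X,\tilde{\theta}_{\mathsf{val}}}[\tilde{Y}_{\mathsf{val}}] < \mathbb{E}_{X,\hat{\theta}}[\hat{Y}]$.

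The main obstacle, to the extent there is one, is conceptual rather than computational: recognizing that the asymmetry introduced by singling out the index $M-1$ in the hypothesis is not a real asymmetry. Any index could play its role, because the $\hat{Y}_k$'s are exchangeable; the argument really only exploits that $\hat{Y}_{M-1}$ and $\hat{Y}_k$ share a common mean. Once the partition-of-unity reformulation is identified, the bookkeeping is short and no additional regularity assumptions beyond finite second moments (implicit in assuming the correlations exist) are needed.
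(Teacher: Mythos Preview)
Your proof is correct and follows essentially the same route as the paper's: both exploit the partition-of-unity constraint $\sum_k Z_k = 1$ to telescope the difference $\mathbb{E}[\tilde{Y}_{\mathsf{val}}] - \mathbb{E}[\hat{Y}]$ down to $M-1$ terms indexed by $k \in \{0,\dots,M-2\}$, then invoke $\mathbb{E}[\hat{Y}_{M-1} - \hat{Y}_k] = 0$ so that each term is a bare covariance whose sign is fixed by the correlation hypothesis. The only cosmetic difference is that the paper introduces auxiliary uniform indicators $R_k$ (independent of everything else) to represent the random baseline as $\hat{Y}' = \sum_k R_k \hat{Y}_k$ before subtracting, whereas you use $\hat{Y}_{M-1}$ directly as the baseline; since the $R_k$ are independent with mean $1/M$, this detour cancels out and the two computations coincide, with yours being the more streamlined of the two.
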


From the above Proposition it follows that if the objective value  $\hat{Y}_k$ is small, then we expect the corresponding selection variable $Z_k$ to be large.

\begin{algorithm}[tb]
   \caption{Val-Init and Arg-Init Methods}
\label{alg:VIA}
\begin{algorithmic}
   \STATE {\bfseries Input:} Instances $\{X_i\}_{i=1}^{\infty}\sim \mathbb{P}_X$, $\mathcal{H}_{\mathsf{val}}$, $\mathcal{H}_{\mathsf{arg}}$ Hypothesis classes for $h_{\mathsf{val}}$,  $h_{\mathsf{arg}}$ respectively
\FOR{$i \in \{1,..\infty\}$}
\IF{$i\leq N$}
\STATE $\hat{\theta}^{i} \sim \mathbb{P}_{\hat{\theta}}$,$f_{i} = f(\cdot,X_i)$, $\hat{\theta}^{\dagger}_{i}, \hat{Y}_i \xleftarrow{\mathsf{Algthm} \ref{alg1:GDA}} \mathsf{GD}(f_i, \hat{\theta}^{i})$, 
\ENDIF
\STATE $h_{\mathsf{val}} = \arg\min_{h \in \mathcal{H}_{\mathsf{val}}} \sum_{j=1}^{N}\ell\big(h(\hat{\theta}^{j}, X_j), \hat{Y}_j\big)$ 
\STATE $h_{\mathsf{arg}} = \arg\min_{h \in \mathcal{H}_{\mathsf{arg}}} \sum_{j=1}^{N}\ell\big(h(\hat{\theta}^{j}, X_j), \hat{\theta}^{\dagger}_{j}\big)$ 
\IF{$i \geq  N+1$}
\STATE $\{\hat{\theta}_{m}^{i}\sim \mathbb{P}_{\hat{\theta}}\}_{m=0}^{M-1}$,
\STATE $m^{*} = \arg\min \{h_{\mathsf{val}}(\hat{\theta}_{m}^{i}, X_i)\}_{m=0}^{M-1}$, \STATE $\tilde{\theta}_{\mathsf{val}} = \hat{\theta}_{m^{*}}^{i}$,$f_{i} = f(\cdot,X_i)$,
\STATE $\tilde{\theta}_{\mathsf{val}}, \tilde{Y}_{\mathsf{val}} = \mathsf{GD}(f_i, \hat{\theta}_{m^{*}}^{i})$ 
\STATE $\tilde{\theta}_{\mathsf{arg}}, \tilde{Y}_{\mathsf{arg}} = \mathsf{GD}(f_i, h_{\mathsf{arg}}(\hat{\theta}_0^{i},X_i))$
\ENDIF
\ENDFOR

\end{algorithmic}
\end{algorithm} 
\textbf{How does Val-Init compare to $M$ multi-starts?}

Define the set of objective function values achievable from all possible initializations for an instance $x$ as $\mathcal{F}(x)$.
For each problem instance $x$, define a mapping $\Phi_{x}$ that maps a real input value $y$ to the set of initializations from which $y$ can be achieved using $\mathsf{GD}$.  
\begin{assumption} (Minimum separation)
  For an $x\in \mathcal{X}$, if $y \in \mathcal{F}(x)$, $z \in \mathcal{F}(x)$, $y\not=z$, then $\exists\; \Delta>0$ s.t. $|y-z|\geq \Delta$.
\label{assm3}
\end{assumption}

If $\mathsf{GD}$ arrives at a finite set of distinct values from all initializations, then the above assumption holds.
\begin{assumption} (Bounded error)
Consider a problem instance $x\in \mathcal{X}$. For each $y$ which satisfies $\mathbb{P}_{\hat{\theta}}\Big(\hat{\theta}\in \Phi_{x}(y)\Big)>0$,  $\mathbb{E}_{\hat{\theta}}\Big[|y- h_{\mathsf{val}}(\hat{\theta},x)|^2 \Big| \hat{\theta} \in \Phi_{x}(y), X=x\Big] \leq \frac{\Delta^2 \zeta}{4}$,
where $0<\zeta < \frac{\tilde{\epsilon}}{Mf^{\mathsf{sup}}}$ and $f^{\mathsf{sup}}=\sup_{\theta \in \Theta}f(\theta,x)<\infty$  and $\tilde{\epsilon}>0$ is a small positive quantity. 
\label{assm4}
\end{assumption}

 Assumption \ref{assm4} requires the prediction error of $h_{\mathsf{val}}$ to be bounded. The bound is not restrictive, especially if $\Delta$ (the difference between  distinct objective values at solution) is not small.

\begin{proposition} (Val-Init vs.\ $M$ multi-starts) For problem instance $x$, if Assumptions \ref{assm3} and \ref{assm4} hold, then Val-Init is an $\tilde{\epsilon}$-approximation of the multi-start approach:
\begin{equation*}
\begin{split}
&\mathbb{E}_{ \{\hat{\theta}_k\}_{k=0}^{M-1}}[\tilde{Y}_{\mathsf{val}}|X=x] \leq \mathbb{E}_{\{\hat{\theta}_k\}_{k=0}^{M-1}}[\min_{k \in \{0,..,M-1\}}\{\hat{Y}_{k}\} |X=x]+ \tilde{\epsilon}.
\end{split}
\end{equation*}
\label{prop5}
\end{proposition}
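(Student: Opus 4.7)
\textbf{Proof plan for Proposition \ref{prop5}.}
The plan is to show that with high probability the value model $h_{\mathsf{val}}$ correctly identifies the best of the $M$ random initializations, and to control the rare failure event using the second-moment bound in Assumption \ref{assm4}. First, I would introduce the ``good event''
\begin{equation*}
E \;=\; \Big\{\,|\hat{Y}_k - h_{\mathsf{val}}(\hat{\theta}_k, x)| < \Delta/2 \text{ for every } k\in\{0,\dots,M-1\}\,\Big\}.
\end{equation*}
On $E$, Assumption \ref{assm3} forces the prediction ordering to agree with the true ordering: whenever $\hat{Y}_i<\hat{Y}_j$ one has $\hat{Y}_j-\hat{Y}_i\geq \Delta$, while each prediction lies within $\Delta/2$ of its target, so $h_{\mathsf{val}}(\hat{\theta}_i,x) < h_{\mathsf{val}}(\hat{\theta}_j,x)$. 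Consequently the selected index $m^{*}=\arg\min_k h_{\mathsf{val}}(\hat{\theta}_k,x)$ satisfies $\hat{Y}_{m^{*}}=\min_k \hat{Y}_k$, i.e.\ $\tilde{Y}_{\mathsf{val}}=\min_k \hat{Y}_k$ on $E$.

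Next, I would bound $\mathbb{P}(E^c\mid X=x)$. For a single sample $\hat{\theta}_k\sim \mathbb{P}_{\hat{\theta}}$ and any $y\in \mathcal{F}(x)$ with $\mathbb{P}_{\hat{\theta}}(\Phi_x(y))>0$, Markov's inequality applied to $|y-h_{\mathsf{val}}(\hat{\theta}_k,x)|^{2}$ together with Assumption \ref{assm4} yields
\begin{equation*}
\mathbb{P}\big(|y-h_{\mathsf{val}}(\hat{\theta}_k,x)|\geq \Delta/2 \,\big|\, \hat{\theta}_k\in\Phi_x(y), X=x\big)\;\leq\;\frac{\Delta^{2}\zeta/4}{(\Delta/2)^{2}}\;=\;\zeta .
\end{equation*}
Averaging over the disjoint events $\{\hat{\theta}_k\in\Phi_x(y)\}_{y\in\mathcal{F}(x)}$ removes the conditioning and gives $\mathbb{P}(|\hat{Y}_k-h_{\mathsf{val}}(\hat{\theta}_k,x)|\geq \Delta/2\mid X=x)\leq \zeta$, and a union bound over the $M$ i.i.d.\ samples yields $\mathbb{P}(E^c\mid X=x)\leq M\zeta$.

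Finally I would combine the two pieces. Since $\tilde{Y}_{\mathsf{val}}\geq \min_k \hat{Y}_k$ pointwise, with equality on $E$, and with gap bounded by $f^{\mathsf{sup}}$ on $E^c$,
\begin{equation*}
\mathbb{E}\big[\tilde{Y}_{\mathsf{val}}-\min_k \hat{Y}_k \,\big|\, X=x\big]\;=\;\mathbb{E}\big[(\tilde{Y}_{\mathsf{val}}-\min_k \hat{Y}_k)\mathbf{1}_{E^c}\,\big|\, X=x\big]\;\leq\; f^{\mathsf{sup}}\cdot M\zeta\;<\;\tilde{\epsilon},
\end{equation*}
where the last inequality uses $\zeta<\tilde{\epsilon}/(Mf^{\mathsf{sup}})$ from Assumption \ref{assm4}. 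Rearranging gives the stated bound.

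The main obstacle I anticipate is the ordering-preservation step: it is where the separation constant $\Delta$ of Assumption \ref{assm3} must match the $\Delta/2$ error tolerance of Assumption \ref{assm4} exactly, and ties among equal true values together with strict versus weak inequalities in the definition of $E$ need to be handled with some care. A minor subtlety is the crude $f^{\mathsf{sup}}$ bound on the per-sample gap, which tacitly uses non-negativity of the minimum; if $f$ can take negative values, one replaces $f^{\mathsf{sup}}$ by the range of $f$ on $\Theta$ and absorbs the constant into the definition of $\zeta$ via Assumption \ref{assm4}.
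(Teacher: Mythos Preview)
Your proposal is correct and follows essentially the same line as the paper's proof: both apply Markov's inequality to the second-moment bound of Assumption~\ref{assm4} to obtain the per-sample accuracy $\mathbb{P}(|\hat{Y}_k-h_{\mathsf{val}}(\hat{\theta}_k,x)|\geq\Delta/2)\leq\zeta$ (this is the paper's Lemma~\ref{lemma1}), then use the separation $\Delta$ from Assumption~\ref{assm3} to conclude that the prediction ordering agrees with the true ordering on the good event, and finally control the bad event by the crude bound $f^{\mathsf{sup}}$ together with $\zeta<\tilde{\epsilon}/(Mf^{\mathsf{sup}})$. The only cosmetic difference is that the paper works out the case $M=2$ by conditioning on the pair $(\hat{Y}_0,\hat{Y}_1)=(y_0,y_1)$ and multiplying the two independent per-sample probabilities, whereas you treat general $M$ directly via a union bound; your route is slightly cleaner and your observation about the tacit non-negativity assumption behind the $f^{\mathsf{sup}}$ bound is apt and applies to the paper's argument as well.
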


From the above Proposition, Val-Init can perform close to the multi-start based approach without running $\mathsf{GD}$ multiple times from different start points for same problem instance. Note that Val-Init learns from the solution's function values. Next, we propose an approach that learns from the solution's argument values.

\subsection{Arg-Init: Learn from solution arguments} 
 This approach also consists of two phases. The first phase of this approach is identical to the first phase of Val-Init: we collect data using random initializations. In the second phase, we learn a model $h_{\mathsf{arg}}: \mathbb{R}^{m } \times \mathbb{R}^{n} \rightarrow \mathbb{R}^{m}$ that maps  the initialization $\hat{\theta}$ and the instance $X$ to predict the solution argument $g^{\dagger}(\hat{\theta},X)$ (output of $\mathsf{GD}$ defined in Section \ref{secn: prob_formln}). For each new instance $X$, sample $\hat{\theta} \sim \mathbb{P}_{\hat{\theta}}$ and  input it to the learned model $h_{\mathsf{arg}}$ to generate the initialization point $h_{\mathsf{arg}}(\hat{\theta},X)$. We provide the algorithmic description in Algorithm \ref{alg:VIA}; we refer to this approach as Arg-Init as it uses solution arguments to initialize.  Arg-Init predicts an initialization, which it hopes is closer to the solution of a random initialization $\hat{\theta}$  as that can lead to faster convergence.  Define this condition as an event  called ``$\eta$-factor reduction'' given as  $\|h_{\mathsf{arg}}(\hat{\theta},X) - g^{\dagger}(\hat{\theta}, X)\| \leq \eta
\|\hat{\theta} - g^{\dagger}(\hat{\theta}, X)\|$, where $\eta<1$.  See Appendix for how this event leads to $\eta$-factor reduction in the number of iterations to convergence.

\begin{assumption}  (Uniform continuity of $\mathbb{P}_{\hat{\theta}}$)  For each $\epsilon>0$, $\exists$  a $\delta$ such that for any $\theta$ in the interior of $\Theta$ and a ball $B_{\delta}= \{\bar{\theta} \in \Theta \;|\; \|\bar{\theta} -\theta\| \leq \delta\}$ of radius $\delta$ around $\theta^{\dagger}$ the probability  $\mathbb{P}_{\hat{\theta}}(\hat{\theta}  \in B_{\delta}) \leq \epsilon$.
\label{assm5}
\end{assumption} 

\begin{proposition} \label{prop6} (Arg-Init vs.\ independent random) If Assumption \ref{assm5} holds and the prediction error of $h_{\mathsf{arg}}$ is small, i.e.  $0<s<1$, $0<\eta<1$, 
$\mathbb{E}_{ X,\hat{\theta}}\big[\|h_{\mathsf{arg}}(\hat{\theta},X) - g^{\dagger}(\hat{\theta}, X)\|^2\big] \leq s (1-\epsilon)(\delta \eta)^2$, where $\epsilon$, $\delta$ are from Assumption \ref{assm5}, then with probability $(1-s)(1-\epsilon)$  Arg-Init  achieves $\eta$-factor reduction.
\end{proposition}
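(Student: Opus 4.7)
The plan is to decompose the $\eta$-factor reduction event into two simpler events and bound each with a standard concentration/measure argument, then combine by a Bonferroni-type inequality. Concretely, define $A = \{\|h_{\mathsf{arg}}(\hat{\theta}, X) - g^{\dagger}(\hat{\theta}, X)\| \leq \delta\eta\}$ and $B = \{\|\hat{\theta} - g^{\dagger}(\hat{\theta}, X)\| > \delta\}$. On the intersection $A \cap B$ we immediately get $\|h_{\mathsf{arg}}(\hat{\theta}, X) - g^{\dagger}(\hat{\theta}, X)\| \leq \delta\eta < \eta\|\hat{\theta} - g^{\dagger}(\hat{\theta}, X)\|$, which is exactly the $\eta$-factor reduction event. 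So the task reduces to showing $\mathbb{P}(A \cap B) \geq (1-s)(1-\epsilon)$.

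The first step is a straightforward Markov's inequality on the non-negative random variable $\|h_{\mathsf{arg}}(\hat{\theta}, X) - g^{\dagger}(\hat{\theta}, X)\|^2$ at threshold $(\delta\eta)^2$. The hypothesized mean-squared-error bound $s(1-\epsilon)(\delta\eta)^2$ then yields $\mathbb{P}(A^c) \leq s(1-\epsilon)$, so $\mathbb{P}(A) \geq 1 - s(1-\epsilon)$. The second step invokes Assumption~\ref{assm5}: with $\theta^{\dagger} = g^{\dagger}(\hat{\theta}, X)$, which under the standard conditions stated in Section~\ref{secn: prob_formln} is a stationary point in the interior of $\Theta$, the assumption gives $\mathbb{P}_{\hat{\theta}}(\hat{\theta} \in B_{\delta}) \leq \epsilon$, i.e.\ $\mathbb{P}(B) \geq 1-\epsilon$. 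Combining via $\mathbb{P}(A \cap B) \geq \mathbb{P}(A) + \mathbb{P}(B) - 1$ delivers $\mathbb{P}(A\cap B) \geq \bigl(1 - s(1-\epsilon)\bigr) + (1-\epsilon) - 1 = (1-s)(1-\epsilon)$, which is the claimed bound.

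The main obstacle is the second step: $g^{\dagger}(\hat{\theta}, X)$ is itself a function of $\hat{\theta}$, so the $\delta$-ball around the solution is a \emph{random} set coupled to the very random variable whose distribution we are probing. To justify applying Assumption~\ref{assm5} cleanly, I would first condition on $X$ and on the realization of the (deterministic) map $\hat{\theta} \mapsto g^{\dagger}(\hat{\theta}, X)$, and observe that its range is a collection of stationary points $\{\theta^{\dagger}_j\}$. For each fixed center $\theta^{\dagger}_j$ in the interior of $\Theta$, Assumption~\ref{assm5} bounds the mass of the corresponding $\delta$-ball by $\epsilon$; because the events $\{\hat{\theta} \in B_{\delta}(\theta^{\dagger}_j)\}$ on different basins are disjoint, their union still has probability at most $\epsilon$, giving the uniform bound needed. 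The rest of the argument — Markov plus Bonferroni — is routine, so I would keep that section brief in the final write-up and concentrate the exposition on this measurability/uniformity point.
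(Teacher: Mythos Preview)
Your core argument is correct and reaches the same $(1-s)(1-\epsilon)$ bound, but via a slightly different decomposition than the paper. The paper applies Markov's inequality \emph{conditionally} on the event $\hat{\theta}\notin B_\delta$, bounds the conditional expectation by $\tfrac{1}{1-\epsilon}\,\mathbb{E}_{\hat\theta,X}\bigl[\|h_{\mathsf{arg}}-g^\dagger\|^2\bigr]$ via the law of total expectation (dropping the non-negative $\hat\theta\in B_\delta$ term), obtains $\mathbb{P}(A^c\mid B)\le s$, and then uses the product form $\mathbb{P}(A\cap B)=\mathbb{P}(A\mid B)\,\mathbb{P}(B)\ge(1-s)(1-\epsilon)$. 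You instead apply Markov unconditionally to get $\mathbb{P}(A)\ge 1-s(1-\epsilon)$ and combine with $\mathbb{P}(B)\ge 1-\epsilon$ through the Bonferroni bound $\mathbb{P}(A\cap B)\ge\mathbb{P}(A)+\mathbb{P}(B)-1$, which collapses algebraically to the same product $(1-s)(1-\epsilon)$. Your route is arguably cleaner since it avoids the conditional-expectation detour; the paper's route makes the role of the conditioning event a bit more explicit. Either is fine.

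One caveat on your final paragraph: disjointness of the per-basin events $\{\hat\theta\in B_\delta(\theta_j^\dagger)\}\cap\{\hat\theta\in\text{basin}_j\}$ does \emph{not} give that their union has probability at most $\epsilon$ --- for disjoint events probabilities add, so with $k$ stationary points you would only get $k\epsilon$. The paper's own proof does not address this coupling either and simply invokes $\mathbb{P}[\hat\theta\in B_\delta]\le\epsilon$ as though $B_\delta$ were a single fixed ball, so you are not missing anything relative to what the paper proves; but the justification you sketch for that step is not the right one.
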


 If $\epsilon$ decreases, then $\delta$ in Assumption \ref{assm5} decreases, the upper bound on the error in the Proposition \ref{prop6} should decrease and the probability of $\eta$-factor reduction increases. 
 Before we proceed to the experiments, we close with a comparison of Arg-Init and Val-Init in Figure \ref{fig_arg_val}. We expect Arg-Init to work in both convex and non-convex problems. Since Val-Init is based on comparing local minima (stationary points) it is not suited for convex problems. Val-Init it can offer advantage over Arg-Init whenever the local minima values differ substantially (as shown in Figure \ref{fig_arg_val} and Ackley function experiment Section \ref{secn: expmts}).  From a learning point of view Val-Init \ref{alg:VIA} needs to learn a function with scalar output, while Arg-Init needs to learn a function with output dimesions same as the optimization variable $\theta$.

\begin{figure}[ht]
\begin{center}
\centerline{\includegraphics[trim=0.5cm 2.1cm 1.5cm 2cm,clip=true,width=3in]{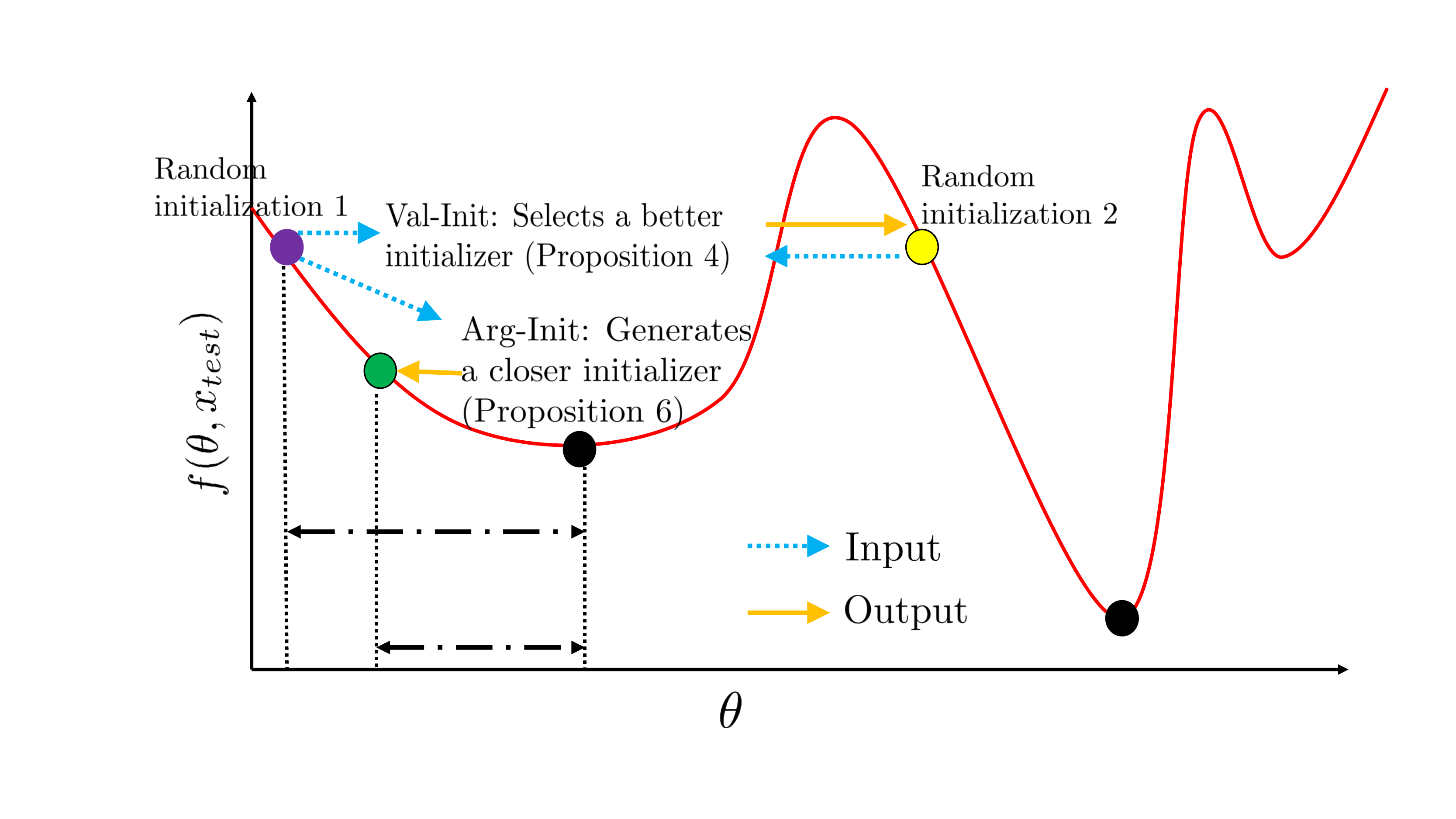}}
\caption{Arg-Init vs.\ Val-Init overview}
\label{fig_arg_val}
\end{center}
\end{figure}

\section{Experiments}
\label{secn: expmts}

In this section, we apply our methods (Val-Init and Arg-Init) to solve various optimization problems. We first illustrate our methods on a a standard non-convex optimization problem. (We also provide synthetic convex optimization experiments in the Appendix.) We then illustrate our methods on real applications such as generating adversarial examples, generating contrastive explanations, and sum-rate optimization. We compare our initialization methods with the random initialization, zero initialization, and initializers learned using a MAML-based approach \cite{finn2017model}. For all methods, the choice of architectures, hyperparameters, train and test sizes, validation error  of $h_{\mathsf{val}}$ and $h_{\mathsf{arg}}$ (during training) are in the Appendix. We use all the initialization methods  as single-start methods (comparing single-start with multi-start is not fair as multi-start method has a higher computational cost). For each problem family (generate explanations for a model), we are provided different problem instances (data points) and we need to generate solutions to them (explanations). For each problem instance, we initialize gradient descent using different initialization methods to search the solutions. The performance is defined as the average of the objective function values (at the solution) across different problem instances. We compare the methods in terms of the performance achieved  vs.\ the number of iterations of gradient descent. Histograms showing the distribution of performance across problem instances is in the Appendix.

\begin{figure*}[htbp]
\centering
  \includegraphics[width=0.4\textwidth]{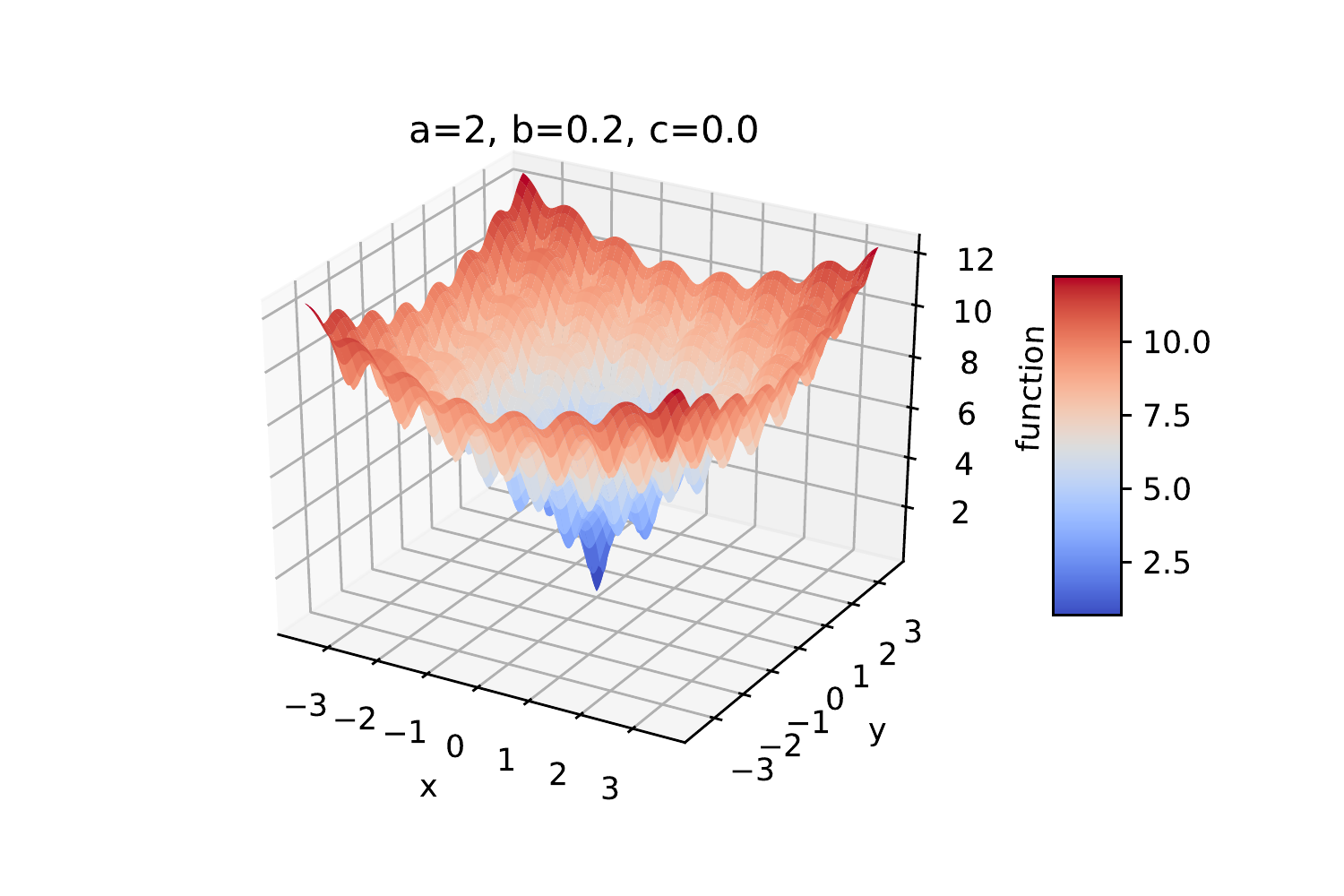}
  \includegraphics[width=0.4\textwidth]{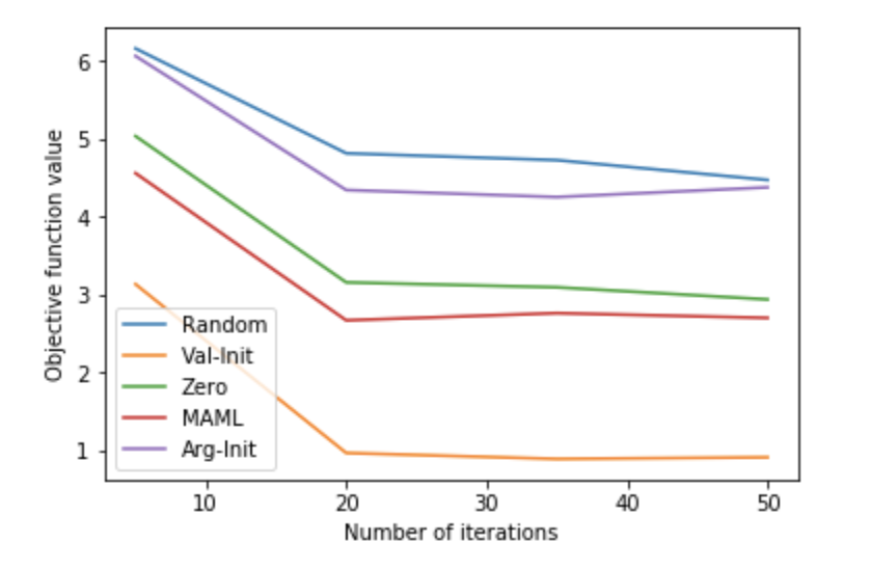}
\caption{Illustration of Ackley function (left), and objective function value vs.\ number of iterations (right)}
\label{fig1}
\end{figure*}




\begin{figure}[ht]
\begin{center}
\centerline{\includegraphics[trim=0cm 0cm 1.5cm 0.5cm,clip=true,width=2.5in]{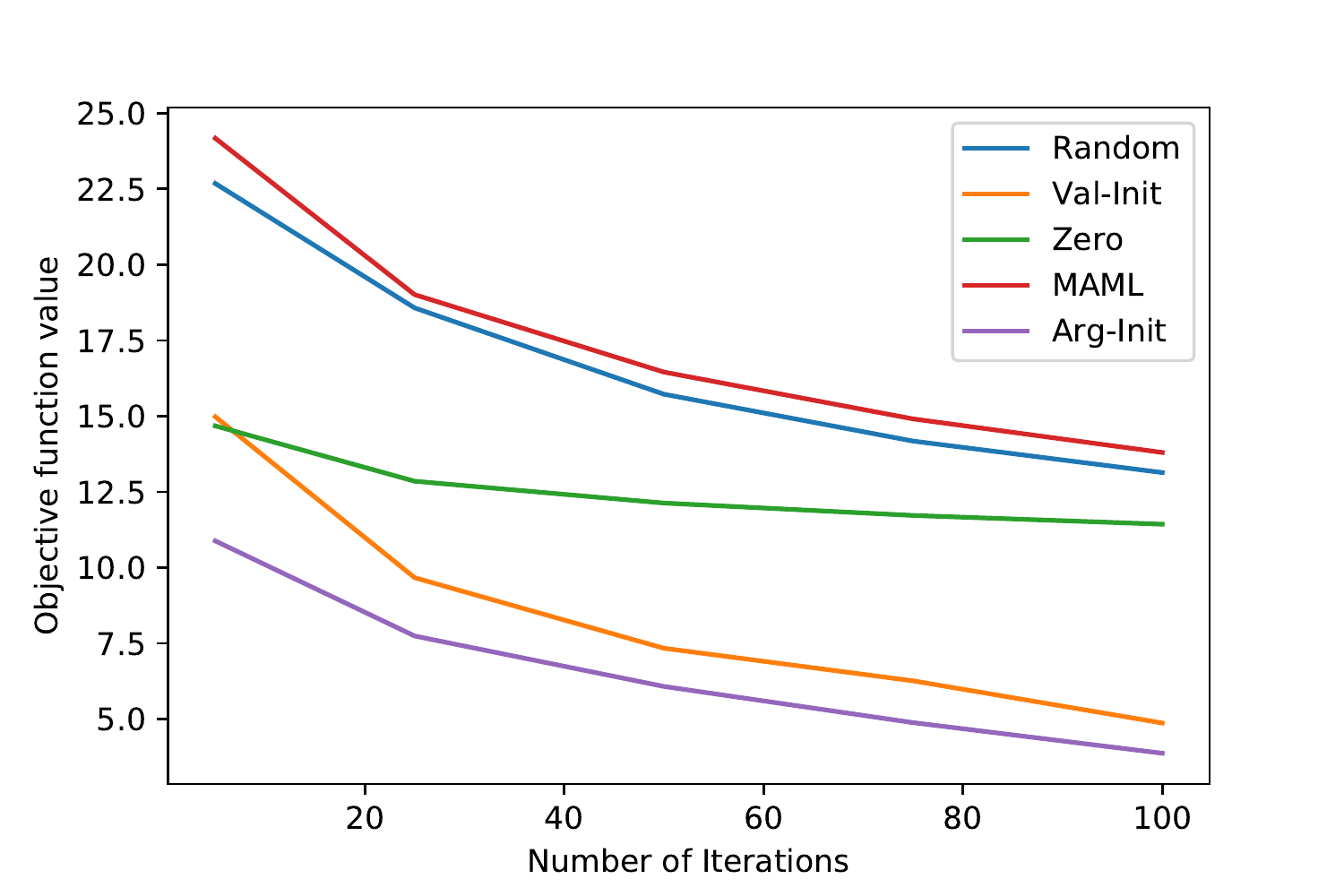}}
\caption{Adversarial examples on CIFAR-10: Objective function value vs.\ number of iterations}
\label{fig0}
\end{center}
\end{figure}

\subsection{Ackley function minimization} 
Ackley functions are used to study non-convex optimization problems \cite{ackley2012connectionist}. Ackley function is defined as $\mathsf{A(x,y,a,b,c)}= -\mathsf{a exp(-b\frac{\sqrt{(x-c)^2+(y-c)^2}}{2})}  -\mathsf{exp(\frac{cos(2\pi(x-c)) + cos(2\pi(y-c))}{2}) +exp(1) +a}  $. The global minimum of the function is at $\mathsf{(c,c)}$ and the minimum value is zero. In Figure \ref{fig1}, we show an instantiation of Ackley function. The function has many minima that are worse than the global minimum. We are given a set of Ackley functions with parameters $\mathsf{a,b,c}$ drawn i.i.d. from $\mathsf{a\sim 20 + U[0,10]}$, $\mathsf{b\sim 0.2 + U[0,0.1]}$, and $\mathsf{c\sim U[0,2]}$, where $\mathsf{U}$ is a uniform distribution. We want to learn an initializer that takes $\mathsf{a,b,c}$ as input and generates an initialization. We train Arg-Init, Val-Init (Algorithm \ref{alg:VIA}) using 1500 i.i.d draws of  $\mathsf{a,b,c}$ from the above uniform distributions. In Figure 2, we compare the  methods in terms of the average objective function value vs.\ iterations in gradient descent, where the average is taken over 500 unseen  instances $\mathsf{a,b,c}$. Val-Init  performs better across different iteration values. Arg-Init did not generate a good initialization in this case suggesting Val-Init is potentially more useful in highly non-convex problems.

\subsection{Datasets, models for adversarial examples and contrastive explanations}

$\;\;$ \textbf{CIFAR-10.} We trained a convolutional neural network (on the default train-test split of CIFAR-10) that achieves  70\% accuracy in predicting the classes.

 \textbf{MNIST.} We trained a 2 layer neural network (on the default train-test split of MNIST digits) that achieves  97\% accuracy in predicting the digits.

 \textbf{Waveform.}  The dataset consists of 40 attributes that help identify the wave type (from three types of waves). We divided the data into 75\% training and 25\% testing.  We trained a 2 layer neural network that achieves 85\% accuracy to predict the wave type. 
 
\textbf{HELOC.} Home equity line of credit (HELOC) data contains the information about the applicant in their credit report and their loan repayment history. We  build a model to predict whether applicants will make timely payments. We divided the data into 75\% training and  25\% testing. We trained a 2 layer neural network using the training data that achieves 72\% accuracy in predicting repayments. 

For each model trained above and the respective train split, we  run gradient descent  with random initialization to generate adversarial example (contrastive expalanation) for each point in the train split  and learn $h_{\mathsf{val}}$ and $h_{\mathsf{arg}}$. Next, for each point in the respective test split we run gradient descent with different initializers  to generate adversarial example (contrastive explanations). We compare the methods in terms of the average quality of the adversarial examples (explanations)  across the test instances. Details on the objective used for gradient descent are provided next.

\begin{table}[t]
\caption{\textbf{Adversarial examples.} Comparing performance in terms of the penalty-based objective. Average distance value, the fraction of instances when constraints are not satisfied are mentioned below the penalty-based objective value. The best methods in terms of the penalty-based objective are in bold. 
All methods are run for 100 iterations in Waveform and MNIST and 10 iterations in HELOC. 
}

\begin{center}
\begin{small}
\begin{sc}
\begin{tabular}{lcccr}
\toprule
Method & HELOC & Waveform & MNIST   \\
\midrule
               Arg-Init & \textbf{-0.37} & \textbf{-0.89} & 1.66 \\
               &(0.44, 0.016)&(0.45, 0.03)& (1.15, 0.17) \\
                &&&\\
                          Val-Init & 1.39 & -0.67 & \textbf{0.98}\\
          &(1.85, 0.25)&(0.85, 0.22)& (0.81, 0.00) \\
          &&&\\
         Random & 1.42 & -0.66& 3.41  \\ 
         & (1.86, 0.26) & (0.85, 0.21) & (1.90, 0.00) \\ 
         &&&\\
                  Zero     & 0.021 & \textbf{-0.89} & 4.60 \\
                  &(0.50, 0.013)&(0.32, 0.09)& (0.97, 0.02) \\ 
                     &&&\\
                           MAML     & -0.017 & \textbf{-0.89} & 7.09 \\ 
                           &(0.94, 0.00)&(0.69, 0.00)& (3.76, 0.00) \\ 
\bottomrule
\end{tabular}
\end{sc}
\end{small}
\end{center}
\label{table3}
\end{table}

\begin{table}[t]
\caption{\textbf{Contrastive examples.} Comparing performance in terms of the penalty-based objective defined in \cite{dhurandhar2018explanations}. Average distance value, the fraction of instances when constraints are not satisfied are mentioned below the objective value. The best methods in terms of the optimization objective are in bold.
All methods are run for 100 iterations in Waveform and MNIST and 10 iterations in HELOC.
}
\begin{center}
\begin{small}
\begin{sc}
\begin{tabular}{lccr}
\toprule
Method & Waveform & MNIST   \\
\midrule
         Arg-Init &  \textbf{0.40}& \textbf{-0.03} \\ 
         &(0.42, 0.19)&(0.93, 0.02)  \\
         && \\ 
                 Val-Init &  2.23&  0.54  \\
                 &(1.95, 0.23)& (2.16, 0.01)\\ 
         && \\ 
         Random &  3.51& 1.49  \\ 
         &(1.95, 0.44) &  (3.90, 0.00) \\ 
         && \\ 
         Zero     & 0.61 & 4.29 \\
         & (0.36, 0.23) & (0.95, 0.69) \\ 
         && \\ 
         MAML & \textbf{0.40}     & 2.32 \\ 
         & (0.41, 0.19) & (4.56, 0.00)\\ 
\bottomrule
\end{tabular}
\end{sc}
\end{small}
\end{center}
\label{table3_n}
\end{table}

\subsection{Adversarial examples}
 Recall, if $\beta=0$ and  $\Theta=\mathbb{R}^{m}$ in equation \eqref{eqn: stat_opt_example1}, we obtain the problem of finding an adversarial example for a binary classifier.  We then reformulate equation \eqref{eqn: stat_opt_example1} into a commonly used penalty-based objective  as follows $\|\theta\| + \mathcal{L}\big(y(u(x+\theta))\big)$ ($\mathcal{L}$ is from equation 3 in \cite{cheng2018query}). We compare all the methods in terms of this penalized objective function as it reflects the quality of the adversarial example. 
In Figure \ref{fig0}, we compare the performance of the methods (penalized objective averaged across test instances in the data) vs.\ the number of iterations in gradient descent. Arg-Init is the best closely followed by Val-Init. Except for zero initialization, every method was able to generate an example from a different class than the data instance, i.e., satisfy the constraint in equation \eqref{eqn: stat_opt_example1}, for each input instance.  Before moving to next comparisons, we contrast the validation mean square error at the end of first  last epoch in training to show that even in a relatively more complex setting of CIFAR-10 it is possible to learn $h_{\mathsf{val}}$ (1 million parameters) and $h_{\mathsf{arg}}$ (1.5 million parameters). We find that validation  MSE values decrease from 47.68 to 24.92 and from 0.0244 to 0.0171 for $h_{\mathsf{val}}$ and $h_{\mathsf{arg}}$ respectively. We provide these values for all the other experiments in the Appendix.  For other datasets we provide a table of comparisons (Table \ref{table3}) for a fixed number of iterations and the plots for objective vs.\ iterations are given in the Appendix. In Table \ref{table3}, we see that Val-Init is the best  on MNIST, Arg-Init is the best on HELOC, and Arg-Init is tied  with Zero and MAML on Waveform.

 In Figure 4, we contrast our approach to random initialization for adversarial examples on MNIST. The final solutions shown all satisfy the constraint that they are classified from a different class than the original image's class 1. The solution generated from our approach is visually indistinguishable from class 1, while that is not the case for the solution generated from random approach. We found this to be the case across most examples.  In Figure 4, we also do a visual contrast of the initial values used by Val-Init, Arg-Init, and random initialization. Our approaches finds points that  morph easily into  class 1 (class for which adversarial examples are generated).  
  
  \begin{figure*}
\begin{minipage}{.5\textwidth}
  \centering
  \includegraphics[trim=1.5cm 0.7cm 1.5cm 1.5cm, width=2.5in]{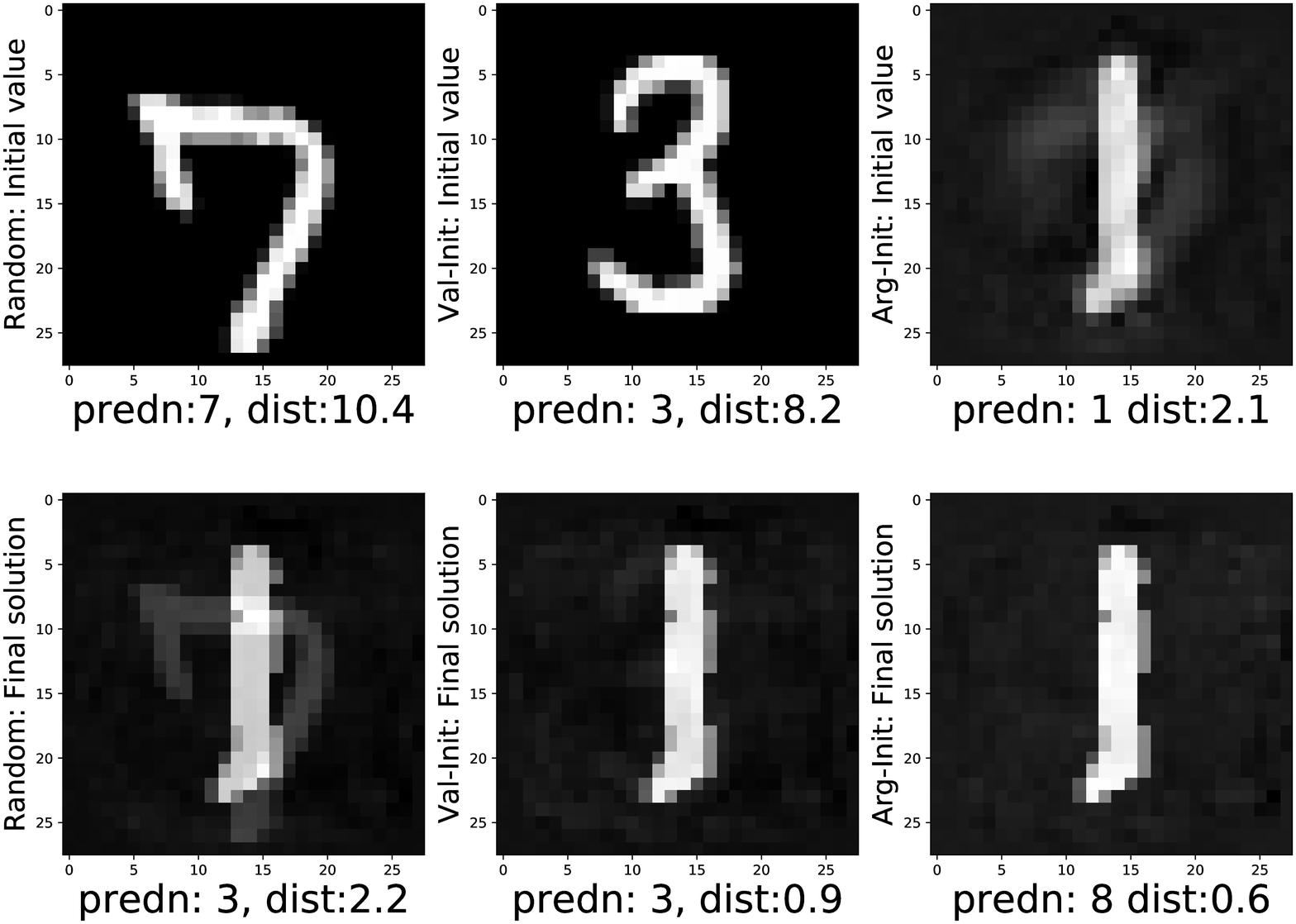}
  \text{a. Adversarial examples}
\end{minipage} 
\begin{minipage}{.5\textwidth}
  \centering
  \includegraphics[trim=1.5cm 0.7cm 1.5cm 1.5cm, width=2.5in]{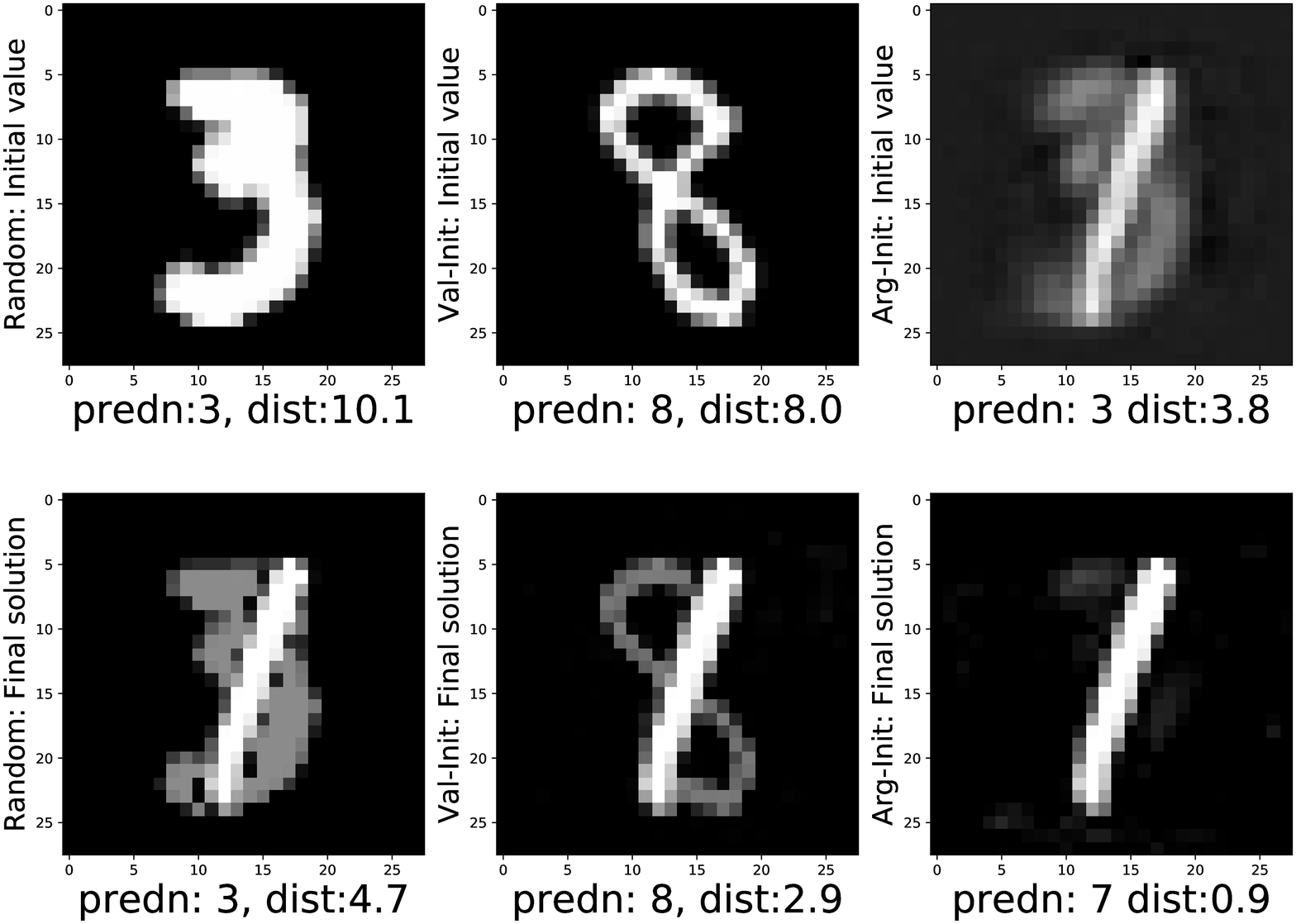}
  \text{b. Contrastive explanations}
\end{minipage}
  \label{fig:test2_n}
\caption{Comparing  solution quality for different initializations  for same iterations. Top row: initializations, bottom row: respective final solns. Left column (random), middle column (Val-Init), right column (Arg-Init).  Below each image we state predicted class \& distance from original instance. }
\end{figure*}

\begin{figure}[ht]
\begin{center}
\centerline{\includegraphics[trim=1cm 0.7cm 1.5cm 1cm,clip=true,width=2.5in]{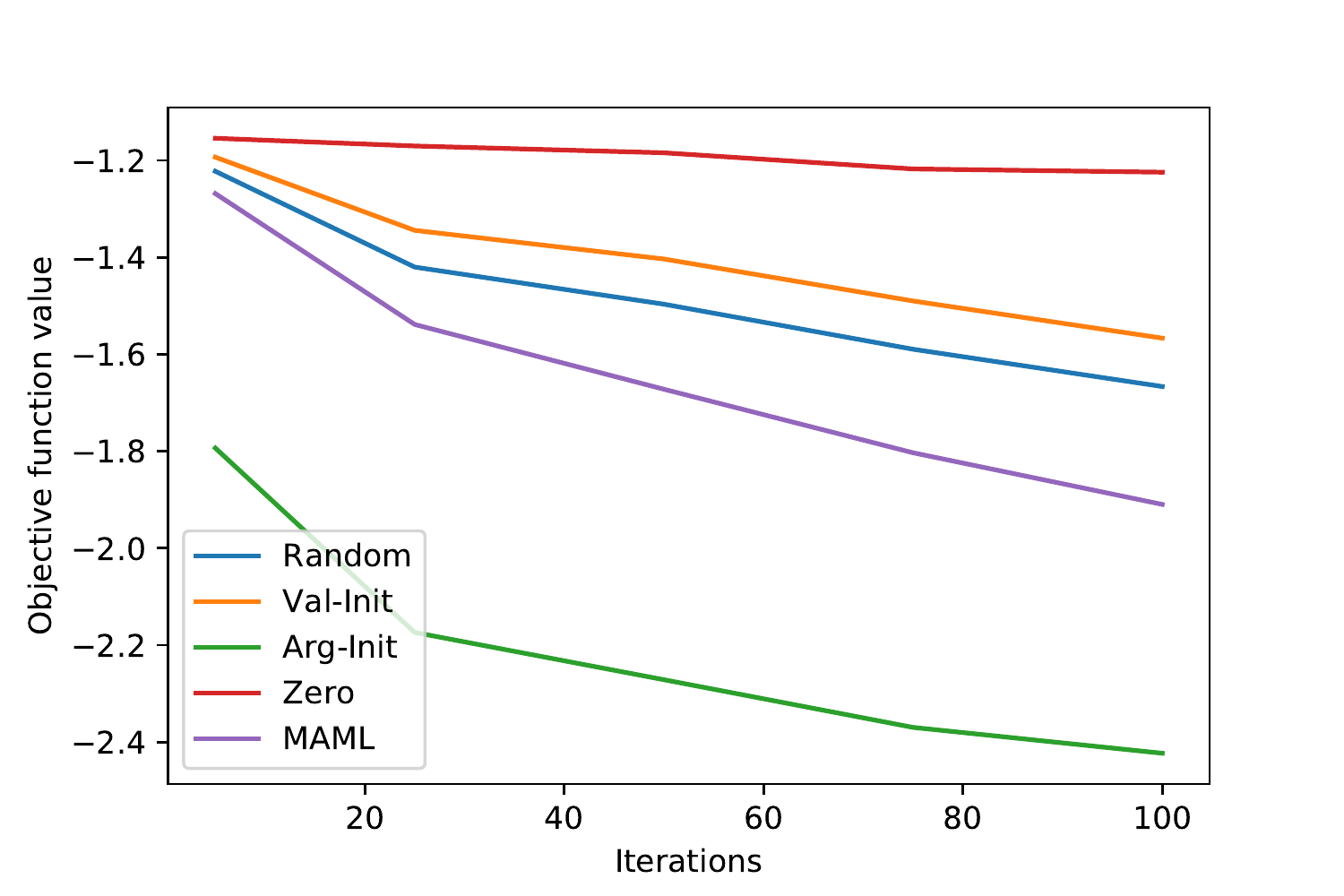}}
\caption{ $-$Sum rate vs.\ number of iterations}
\label{fig1_n1}
\end{center}
\end{figure}

\subsection{Contrastive explanations (PNs)}

Recall, if $\beta>0$ and  $\Theta$ is constrained to PNs  \cite{dhurandhar2018explanations} in equation \eqref{eqn: stat_opt_example1}, we obtain the problem of finding PNs based explanations. We then reformulate equation \eqref{eqn: stat_opt_example1} into a penalty-based objective minimization in \cite{dhurandhar2018explanations}.  We compare the methods in terms of the average penalty-based objective. No PNs were found in the HELOC data and the PN framework does not apply to colored images as in CIFAR-10, thus no comparisons were possible on HELOC and CIFAR-10 dataset.   In Table \ref{table3_n}, we summarize the comparisons. On MNIST digits, Arg-Init is the best and Val-Init also does well in generating PNs. On Waveform,  Arg-Init and MAML perform the best.  In Figure 4, we contrast our approaches to random initialization for PNs on MNIST. The final solutions shown satisfy the constraint that they are classified from a different class than the original image's class 1. The solution generated from our approach are sparser and thus better PNs; this is true across many examples.  In Figure 4, we also show the initial values used. Our approaches  start from points that  easily morph into sparse PNs.

\subsection{Sum-rate optimization}

We compare the different approaches to initialize gradient descent to  solve the sum-rate optimization problem (described in Section 2). We train Val-Init and Arg-Init using 5000 problem instances (solved using gradient descent with random initialization) for 15 users. Each problem instance is a new independent draw of the channel matrix from a standard uniform random distribution (each element in the matrix is i.i.d. drawn from uniform distribution over [0,10]).  We test on the 500 instances from the same distribution; see Figure \ref{fig1_n1} for comparisons. Arg-Init beats all methods (25 \% gain).

\subsection{Overall comparison summary}

Let us start with Arg-Init approach. In seven out of the eight instances, it is at least as good (two times) or better (five times) than all the existing methods. Next, consider the Val-Init approach.  It performs better than the existing methods in four out of eight instances. Recall the Val-Init is not designed for problems where the local minimum values do not differ by a lot for e.g., convex optimization problems, where all the minima take the same value. In the instances where Val-Init did not outperform others --  tabular datasets for adversarial examples and sum-rate optimization -- it is possible that the different local-minimum values do not differ by a lot thus not giving much room for Val-Init method to perform well.  In contrast, see the example of Ackley function and relatively more complex datasets such as  CIFAR-10 and MNIST, Val-Init was able to exploit the variance in different local minimum values.

\section{Conclusion}

In optimization solvers that use gradient descent, initialization is crucial to finding ``good'' local minima. Therefore, it is common practice to hand-engineer initialization rules. In this work, we developed theory that led to two approaches that learn how to initialize for an input problem class. The first approach learns to discriminate different initial values and selects the best, and the second uses an initial value to create another value that is closer to the local minima. We carried out experiments on several datasets (real and synthetic) and on a variety of optimization problems (convex and non-convex) where at least one of these simple approaches has state-of-the-art performance. In the future, it would be interesting to extend these approaches to more general settings (viz. initializing NNs) with the possibility of even intelligently combining them. 

\section{Appendix}
We organize the Appendix as follows.  In the first section, we provide the proofs to all the propositions. In the second section, we provide supplementary materials for the experiments.

\subsection{Proofs}

In this section, we provide the proofs to the propositions. We restate the propositions for reader's convenience. 
We restate Proposition \ref{prop1} below.
\begin{proposition}
Assumption \ref{assm1} $\iff$ $\mathbb{E}_{X,\tilde{\theta}}[\tilde{Y}] < \mathbb{E}_{X,\hat{\theta}}[\hat{Y}]$
\label{prop1_append}
\end{proposition}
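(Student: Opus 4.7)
The plan is to reduce both sides to a single identity relating the difference in expected solution values to the covariance-like quantity appearing in Assumption \ref{assm1}. The equivalence will then be immediate because the identity gives $\mathbb{E}[\tilde{Y}] - \mathbb{E}[\hat{Y}]$ equal to (a constant times) the expression Assumption \ref{assm1} constrains.

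First, since $Z \in \{0,1\}$ and $\tilde{\theta} = (1-Z)\hat{\theta}_0 + Z\hat{\theta}_1$, the corresponding solution value satisfies $\tilde{Y} = g(X,\tilde{\theta}) = (1-Z)\hat{Y}_0 + Z\hat{Y}_1$. Conditioning on $(\hat{\theta}_0, \hat{\theta}_1, X)$ and using $\mathbb{E}[Z \mid \hat{\theta}_0, \hat{\theta}_1, X] = 1 - p(\hat{\theta}_0, \hat{\theta}_1, X)$, the tower rule yields
\[
\mathbb{E}_{X,\tilde{\theta}}[\tilde{Y}] \;=\; \mathbb{E}_{X,\hat{\theta}_0,\hat{\theta}_1}\!\left[ p\,\hat{Y}_0 + (1-p)\,\hat{Y}_1\right].
\]

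Next I would use the i.i.d.\ nature of $\hat{\theta}_0, \hat{\theta}_1$ to symmetrize. Since $\hat{\theta}_0$ and $\hat{\theta}_1$ share the distribution $\mathbb{P}_{\hat{\theta}}$, we have $\mathbb{E}[\hat{Y}_0] = \mathbb{E}[\hat{Y}_1] = \mathbb{E}_{X,\hat{\theta}}[\hat{Y}]$, so $\mathbb{E}_{X,\hat{\theta}}[\hat{Y}] = \tfrac{1}{2}\mathbb{E}[\hat{Y}_0 + \hat{Y}_1]$. Subtracting this from the previous display,
\[
\mathbb{E}_{X,\tilde{\theta}}[\tilde{Y}] - \mathbb{E}_{X,\hat{\theta}}[\hat{Y}]
= \mathbb{E}\!\left[\bigl(p - \tfrac{1}{2}\bigr)\hat{Y}_0 + \bigl(\tfrac{1}{2} - p\bigr)\hat{Y}_1\right]
= \mathbb{E}_{X,\hat{\theta}_0,\hat{\theta}_1}\!\left[\bigl(p(\hat{\theta}_0,\hat{\theta}_1,X) - \tfrac{1}{2}\bigr)(\hat{Y}_0 - \hat{Y}_1)\right].
\]

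With this identity in hand, the conclusion is immediate in both directions: the right-hand side is precisely the quantity in Assumption \ref{assm1}, so it is negative if and only if $\mathbb{E}_{X,\tilde{\theta}}[\tilde{Y}] < \mathbb{E}_{X,\hat{\theta}}[\hat{Y}]$. There is no real obstacle here; the only point requiring minor care is ensuring the integrability needed to apply Fubini/tower so that all the conditional expectations combine as written, which holds under the standing assumption that $g(\cdot,\cdot)$ produces a well-defined integrable random variable (as is already implicit in writing $\mathbb{E}_{X,\hat{\theta}}[\hat{Y}]$).
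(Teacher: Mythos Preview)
Your proposal is correct and follows essentially the same argument as the paper: expand $\tilde{Y}$ as $(1-Z)\hat{Y}_0 + Z\hat{Y}_1$, apply the tower rule to replace $Z$ by $1-p$, use the i.i.d.\ symmetry $\mathbb{E}[\hat{Y}_0]=\mathbb{E}[\hat{Y}_1]=\mathbb{E}[\hat{Y}]$ to write $\mathbb{E}[\hat{Y}]=\tfrac12\mathbb{E}[\hat{Y}_0+\hat{Y}_1]$, and subtract to obtain the identity that makes the equivalence immediate. Your added remark on integrability is a reasonable caveat but not something the paper itself addresses.
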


\begin{proof}

Let us first simplify  $\mathbb{E}_{X,\tilde{\theta}}[\tilde{Y}]$

\begin{equation}
\begin{split}
   \mathbb{E}_{X,\tilde{\theta}}[\tilde{Y}] & =    \mathbb{E}_{X,\tilde{\theta}}[g(\tilde{\theta},X)] \\ 
   & =   \mathbb{E}_{X,\tilde{\theta}}[g((1-Z)\hat{\theta}_0 + Z\hat{\theta}_1,X)] \\ 
   & = \mathbb{E}_{X,\tilde{\theta}}[(1-Z)g(\hat{\theta}_0,X)   + Zg(\hat{\theta}_1,X)] \\
   & = \mathbb{E}_{X,\hat{\theta}_0, \hat{\theta}_1, Z}[(1-Z)g(\hat{\theta}_0,X)   + Zg(\hat{\theta}_1,X)] \\ 
   & = \mathbb{E}_{X,\hat{\theta}_0, \hat{\theta}_1}[(1-\mathbb{E}_{Z|X,\hat{\theta}_1, \hat{\theta}_2}[Z])g(\hat{\theta}_0,X)   + \mathbb{E}_{Z|X,\hat{\theta}_0, \hat{\theta}_1}[Z]g(\hat{\theta}_1,X)]  \\
   & = \mathbb{E}_{X,\hat{\theta}_0, \hat{\theta}_1}[p(\hat{\theta}_0, \hat{\theta}_1,X)g(\hat{\theta}_0,X)   + (1-p(\hat{\theta}_0, \hat{\theta}_1,X))g(\hat{\theta}_1,X)] 
   \end{split}
    \label{proof1:eq1}
\end{equation}

 Since $\hat{\theta}_0$ and $\hat{\theta}_1$ are identical draws from $\mathbb{P}_{\hat{\theta}}$  we obtain
 \begin{equation}\mathbb{E}_{X,\hat{\theta}_{1}}[g(\hat{\theta}_1,X)] =\mathbb{E}_{X,\hat{\theta}_{0}}[g(\hat{\theta}_0,X)] =  \mathbb{E}_{X,\hat{\theta}}[g(\hat{\theta},X)] = \mathbb{E}_{\hat{\theta},X}[\hat{Y}] 
 \label{proof1:eq2}
 \end{equation}
 
 Therefore, we can use the above condition \eqref{proof1:eq2} to obtain
 
\begin{equation}\mathbb{E}_{X,\hat{\theta}}[\hat{Y}] = \mathbb{E}_{X,\hat{\theta}_0,\hat{\theta}_1}[\frac{1}{2}g(\hat{\theta}_0,X) + \frac{1}{2}g(\hat{\theta}_1,X)]  
\label{proof1:eq3}
\end{equation}

Take the difference of LHS of \eqref{proof1:eq1} and \eqref{proof1:eq3} to obtain 

\begin{equation}
       \mathbb{E}_{X,\tilde{\theta}}[\tilde{Y}] - \mathbb{E}_{X,\hat{\theta}}[\hat{Y}] = \mathbb{E}_{X,\hat{\theta}_0, \hat{\theta}_1}[(p(\hat{\theta}_0, \hat{\theta}_1,X)-\frac{1}{2})(g(\hat{\theta}_0,X)   -g(\hat{\theta}_1,X))] 
\end{equation}
 
  Therefore,  $\mathbb{E}_{X,\hat{\theta}_0, \hat{\theta}_1}[(p(\hat{\theta}_0, \hat{\theta}_1,X)-\frac{1}{2})(g(\hat{\theta}_0,X)   -g(\hat{\theta}_1,X))] <0 \iff  \mathbb{E}_{X,\tilde{\theta}}[\tilde{Y}] - \mathbb{E}_{X,\hat{\theta}}[\hat{Y}] <0 $
 This proves the theorem. 
 \end{proof}

 We restate Proposition \ref{prop2} below.
 \begin{proposition}
 Assumption \ref{assm2}  $\implies \mathbb{E}_{X,\hat{\theta}_0, \hat{\theta}_1}[\min\{\hat{Y}_0, \hat{Y}_1\}] \leq  \mathbb{E}_{X,\tilde{\theta}}[\tilde{Y}]  \leq \gamma \mathbb{E}_{X,\hat{\theta}_0, \hat{\theta}_1}[\min\{\hat{Y}_0, \hat{Y}_1\}] + (1-\gamma)\mathbb{E}_{X,\hat{\theta}_0, \hat{\theta}_1}[ \max\{\hat{Y}_{0}, \hat{Y}_1\}]$
 \label{prop2_append}
\end{proposition}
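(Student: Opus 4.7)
The plan is to reduce both inequalities to \emph{pointwise} inequalities inside the expectation, and then to argue by a short case analysis driven by Assumption \ref{assm2}. First, I would reuse the identity derived in the proof of Proposition \ref{prop1_append}: writing $p := p(\hat{\theta}_0, \hat{\theta}_1, X)$ for brevity, the law of total expectation and the Bernoulli structure of $Z$ give
\begin{equation*}
\mathbb{E}_{X,\tilde{\theta}}[\tilde{Y}] \;=\; \mathbb{E}_{X,\hat{\theta}_0,\hat{\theta}_1}\bigl[p\,\hat{Y}_0 + (1-p)\,\hat{Y}_1\bigr].
\end{equation*}
So it suffices to compare the integrand $p\,\hat{Y}_0 + (1-p)\,\hat{Y}_1$ to $\min\{\hat{Y}_0,\hat{Y}_1\}$ and to $\gamma \min\{\hat{Y}_0,\hat{Y}_1\} + (1-\gamma)\max\{\hat{Y}_0,\hat{Y}_1\}$ almost surely.

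The lower bound is immediate and does not use Assumption \ref{assm2}: since $p \in [0,1]$, the integrand is a convex combination of $\hat{Y}_0$ and $\hat{Y}_1$, hence at least $\min\{\hat{Y}_0,\hat{Y}_1\}$. Taking expectations yields the left inequality.

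For the upper bound I would split into three cases based on the sign of $\hat{Y}_0 - \hat{Y}_1$. If $\hat{Y}_0 < \hat{Y}_1$, Assumption \ref{assm2} gives $p \geq \gamma$; since the map $q \mapsto q\,\hat{Y}_0 + (1-q)\,\hat{Y}_1$ is decreasing in $q$ (because $\hat{Y}_0 < \hat{Y}_1$), we have $p\,\hat{Y}_0 + (1-p)\,\hat{Y}_1 \leq \gamma\,\hat{Y}_0 + (1-\gamma)\,\hat{Y}_1 = \gamma \min + (1-\gamma)\max$. If $\hat{Y}_0 > \hat{Y}_1$, Assumption \ref{assm2} gives $1 - p \geq \gamma$, and now the map $q \mapsto q\,\hat{Y}_0 + (1-q)\,\hat{Y}_1$ is increasing in $q$, so $p\,\hat{Y}_0 + (1-p)\,\hat{Y}_1 \leq (1-\gamma)\,\hat{Y}_0 + \gamma\,\hat{Y}_1 = (1-\gamma)\max + \gamma\min$. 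Finally, if $\hat{Y}_0 = \hat{Y}_1$ the integrand equals $\min = \max$ and the bound holds trivially. Taking expectations of the pointwise inequality gives the right inequality.

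There is no real obstacle here beyond the case analysis: the whole argument is a bookkeeping exercise once one notices that Assumption \ref{assm2} always biases the Bernoulli coin toward whichever of $\hat{Y}_0,\hat{Y}_1$ is smaller, which is exactly what is needed to push a convex combination away from the midpoint and toward the minimum by at least a factor $\gamma$.
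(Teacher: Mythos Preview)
Your proposal is correct and follows essentially the same route as the paper: reuse the identity from the proof of Proposition~\ref{prop1_append} to write $\mathbb{E}[\tilde{Y}]$ as $\mathbb{E}[p\hat{Y}_0+(1-p)\hat{Y}_1]$, then do a pointwise case split on the sign of $\hat{Y}_0-\hat{Y}_1$ using Assumption~\ref{assm2} for the upper bound and the trivial convex-combination observation for the lower bound. Your treatment is in fact slightly cleaner than the paper's, since you make the monotonicity of $q\mapsto q\hat{Y}_0+(1-q)\hat{Y}_1$ explicit and handle the equality case $\hat{Y}_0=\hat{Y}_1$ separately rather than folding it into one of the strict-inequality cases.
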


\begin{proof}
From \eqref{proof1:eq1} it follows that 
\begin{equation}
\begin{split}
  \mathbb{E}_{X,\tilde{\theta}}[\tilde{Y}]  = \mathbb{E}_{X,\hat{\theta}_0, \hat{\theta}_1}[p(\hat{\theta}_0, \hat{\theta}_1,X)\hat{Y}_0   + (1-p(\hat{\theta}_0, \hat{\theta}_1,X))\hat{Y}_1] 
  \end{split}
\end{equation}

 Consider the term inside the expectation given above.
 
 Suppose $\hat{Y}_{0} \leq \hat{Y}_{1}$
then from Assumption \ref{assm2} it follows
 
 \begin{equation}
     p(\hat{\theta}_0, \hat{\theta}_1,X)\hat{Y}_0   + (1-p(\hat{\theta}_0, \hat{\theta}_1,X))\hat{Y}_1 \leq \gamma \hat{Y}_0 + (1-\gamma)\hat{Y}_1 
     \label{proof2:eqn1}
 \end{equation}
 Suppose $\hat{Y}_1<\hat{Y}_0$ then from Assumption \ref{assm2} it follows
  \begin{equation}
     p(\hat{\theta}_0, \hat{\theta}_1,X)\hat{Y}_0   + (1-p(\hat{\theta}_0, \hat{\theta}_1,X))\hat{Y}_1 \leq \gamma \hat{Y}_1 + (1-\gamma)\hat{Y}_0 
     \label{proof2:eqn2}
 \end{equation}
 
 Combining equations \eqref{proof2:eqn1} and \eqref{proof2:eqn2}, we get 

 \begin{equation}
 \begin{split}
   &   \mathbb{E}_{X, \hat{\theta}_{0}, \hat{\theta}_1}[  p(\hat{\theta}_0, \hat{\theta}_1,X)\hat{Y}_0   + (1-p(\hat{\theta}_0, \hat{\theta}_1,X))\hat{Y}_1] \leq \\& \gamma \mathbb{E}_{X, \hat{\theta}_{0}, \hat{\theta}_1}[\min\{\hat{Y}_0, \hat{Y}_1\}] + (1-\gamma)\mathbb{E}_{X, \hat{\theta}_{0}, \hat{\theta}_1}[\max \{\hat{Y}_0,\hat{Y}_1\}]
   \end{split}
 \end{equation}

LHS of the above Proposition follows trivially. Observe that $\tilde{Y}$ takes one of the values $\hat{Y}_0$ or $\hat{Y}_1$. Therefore, $\min\{\hat{Y}_0, \hat{Y}_1\} \leq \tilde{Y} \implies \mathbb{E}_{X, \hat{\theta}_0, \hat{\theta}_1}[\min\{\hat{Y}_0, \hat{Y}_1\}] \leq \mathbb{E}_{X, \hat{\theta}_0, \hat{\theta}_1}[\tilde{Y}] $. 
 \end{proof}
 
 We restate Proposition \ref{prop3_van} below.
 \begin{proposition}

If $\rho(p_{\mathsf{vnila}}(\hat{\theta}_0, \hat{\theta}_1,X),\hat{Y}_1-\hat{Y}_0)>0$, then the Vanilla approach performs better than the independent random initializer, i.e.,  $\mathbb{E}_{X, \tilde{\theta}_{\mathsf{vnila}}}[\tilde{Y}_{\mathsf{vnila}} ]< \mathbb{E}_{X, \hat{\theta}}[\hat{Y} ]$.

\end{proposition}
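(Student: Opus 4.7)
The plan is to reduce the claim to Proposition \ref{prop1} by showing that the correlation hypothesis implies Assumption \ref{assm1} for the Vanilla initializer. Concretely, the Vanilla initializer has the form $\tilde{\theta}_{\mathsf{vnila}} = (1-Z_{\mathsf{vnila}})\hat{\theta}_0 + Z_{\mathsf{vnila}}\hat{\theta}_1$ with $\Pr(Z_{\mathsf{vnila}}=0 \mid \hat{\theta}_0,\hat{\theta}_1,X) = p_{\mathsf{vnila}}(\hat{\theta}_0,\hat{\theta}_1,X)$. This fits exactly the family covered by Proposition \ref{prop1}, so it suffices to verify the expected probabilistic ordering $\mathbb{E}[(p_{\mathsf{vnila}}-\tfrac{1}{2})(\hat{Y}_0-\hat{Y}_1)]<0$.

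First I would translate the correlation assumption into a statement about a covariance. By definition, $\rho(p_{\mathsf{vnila}},\hat{Y}_1-\hat{Y}_0)>0$ means that the two random variables are nondegenerate and have strictly positive covariance, i.e.\ $\mathrm{Cov}(p_{\mathsf{vnila}},\hat{Y}_1-\hat{Y}_0)>0$. Next I would exploit the crucial symmetry: $\hat{\theta}_0$ and $\hat{\theta}_1$ are i.i.d., and hence $\hat{Y}_0=g(X,\hat{\theta}_0)$ and $\hat{Y}_1=g(X,\hat{\theta}_1)$ are identically distributed, giving $\mathbb{E}[\hat{Y}_1-\hat{Y}_0]=0$. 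This makes the covariance collapse to the raw expectation, $\mathrm{Cov}(p_{\mathsf{vnila}},\hat{Y}_1-\hat{Y}_0)=\mathbb{E}[p_{\mathsf{vnila}}(\hat{Y}_1-\hat{Y}_0)]$.

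Using the same vanishing-mean identity again, I can subtract the constant $\tfrac{1}{2}$ for free: $\mathbb{E}[(p_{\mathsf{vnila}}-\tfrac{1}{2})(\hat{Y}_1-\hat{Y}_0)] = \mathbb{E}[p_{\mathsf{vnila}}(\hat{Y}_1-\hat{Y}_0)] - \tfrac{1}{2}\mathbb{E}[\hat{Y}_1-\hat{Y}_0] = \mathbb{E}[p_{\mathsf{vnila}}(\hat{Y}_1-\hat{Y}_0)]>0$. Flipping the sign of the second factor gives exactly $\mathbb{E}[(p_{\mathsf{vnila}}-\tfrac{1}{2})(\hat{Y}_0-\hat{Y}_1)]<0$, which is Assumption \ref{assm1} instantiated with $p=p_{\mathsf{vnila}}$. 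Invoking the forward direction of Proposition \ref{prop1} then yields $\mathbb{E}_{X,\tilde{\theta}_{\mathsf{vnila}}}[\tilde{Y}_{\mathsf{vnila}}]<\mathbb{E}_{X,\hat{\theta}}[\hat{Y}]$, as required.

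There is no real obstacle here: the proof is a two-line manipulation that rides entirely on (i) positive correlation equals positive covariance for nondegenerate variables and (ii) the i.i.d.\ symmetry $\mathbb{E}[\hat{Y}_0]=\mathbb{E}[\hat{Y}_1]$. The only mild subtlety is making sure that $p_{\mathsf{vnila}}$ and $\hat{Y}_1-\hat{Y}_0$ are nondegenerate so that $\rho$ is well-defined; I would note this as a standing assumption implicit in writing the correlation, so that the move from $\rho>0$ to $\mathrm{Cov}>0$ is unambiguous.
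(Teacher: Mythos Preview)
Your proposal is correct and follows essentially the same route as the paper: use $\mathbb{E}[\hat{Y}_1-\hat{Y}_0]=0$ to turn the positive-correlation hypothesis into $\mathbb{E}[p_{\mathsf{vnila}}(\hat{Y}_1-\hat{Y}_0)]>0$, subtract $\tfrac{1}{2}$ for free, flip the sign, and invoke Proposition~\ref{prop1}. The paper's proof is just a terser version of exactly this argument; your added remark about nondegeneracy of $p_{\mathsf{vnila}}$ and $\hat{Y}_1-\hat{Y}_0$ is a welcome clarification that the paper leaves implicit.
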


\begin{proof}

Since the Pearson correlation is positive $  \mathbb{E}_{X, \hat{\theta}_{0}, \hat{\theta}_{1}}\Big[(p(\hat{\theta}_0, \hat{\theta}_1,X)) (\hat{Y}_1 - \hat{Y}_0) \Big] >0 $, $\implies$
$  \mathbb{E}_{X, \hat{\theta}_{0}, \hat{\theta}_{1}}\Big[(p(\hat{\theta}_0, \hat{\theta}_1,X)-\frac{1}{2}) (\hat{Y}_0 - \hat{Y}_1) \Big] <0 $
where we use the fact $\mathbb{E}[\hat{Y}_0 - \hat{Y}_1]=0$. The rest follows from Proposition \ref{prop1}.

\end{proof}
 We restate Proposition \ref{prop4} below.
\begin{proposition}
If   $\rho(Z_{k}, \hat{Y}_{M-1}-\hat{Y}_{k})>0$  $\forall k\in\{0,\cdots, M-2\}$, then Val-Init performs better  than the independent random initializer, i.e., $\mathbb{E}_{X, \tilde{\theta}_{\mathsf{val}}}[\tilde{Y}_{\mathsf{val}} ]< \mathbb{E}_{X, \hat{\theta}}[\hat{Y} ]$. 
\end{proposition}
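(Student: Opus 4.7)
The plan is to mimic the symmetry argument used in Proposition \ref{prop1_append}: rewrite both $\mathbb{E}_{X,\tilde{\theta}_{\mathsf{val}}}[\tilde{Y}_{\mathsf{val}}]$ and $\mathbb{E}_{X,\hat{\theta}}[\hat{Y}]$ as sums indexed by $k \in \{0,\ldots,M-1\}$ over the \emph{same} joint law of $(X,\hat{\theta}_0,\ldots,\hat{\theta}_{M-1})$, collapse the difference to a single sum over $k \le M-2$, and apply the correlation hypothesis term-by-term. The two key tricks are (i) using the identity $\sum_k Z_k = 1$ so the baseline can be expressed against the same selection indicators, and (ii) using the i.i.d.\ symmetry of the $\hat{\theta}_k$ to reduce the Pearson correlation hypothesis to a bare cross-moment inequality.

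First I would expand $\tilde{Y}_{\mathsf{val}}=\sum_{k=0}^{M-1} Z_k\hat{Y}_k$ to obtain $\mathbb{E}[\tilde{Y}_{\mathsf{val}}]=\sum_{k=0}^{M-1}\mathbb{E}[Z_k\hat{Y}_k]$. Then, because each $\hat{\theta}_k$ is a fresh draw from $\mathbb{P}_{\hat{\theta}}$ and hence $\hat{Y}_k$ has the same marginal as $\hat{Y}$, and because $\sum_k Z_k=1$ almost surely, I would rewrite
\[
\mathbb{E}[\hat{Y}] \;=\; \mathbb{E}[\hat{Y}_{M-1}] \;=\; \mathbb{E}\!\left[\left(\sum_{k=0}^{M-1} Z_k\right)\hat{Y}_{M-1}\right] \;=\; \sum_{k=0}^{M-1}\mathbb{E}[Z_k\hat{Y}_{M-1}].
\]
Subtracting term-by-term and noting that the $k=M-1$ contribution vanishes yields
\[
\mathbb{E}[\tilde{Y}_{\mathsf{val}}] - \mathbb{E}[\hat{Y}] \;=\; -\sum_{k=0}^{M-2}\mathbb{E}\big[Z_k\,(\hat{Y}_{M-1}-\hat{Y}_k)\big].
\]

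To finish, I would invoke the correlation hypothesis. By i.i.d.\ symmetry of the draws $\hat{\theta}_k$ and $\hat{\theta}_{M-1}$ relative to $X$, we have $\mathbb{E}[\hat{Y}_{M-1}-\hat{Y}_k]=0$, so for each $k\le M-2$ the covariance collapses to the raw cross-moment, $\mathrm{Cov}(Z_k,\hat{Y}_{M-1}-\hat{Y}_k)=\mathbb{E}[Z_k(\hat{Y}_{M-1}-\hat{Y}_k)]$. Since Pearson correlation has the same sign as covariance, the assumption $\rho(Z_k,\hat{Y}_{M-1}-\hat{Y}_k)>0$ makes each summand in the displayed expression strictly positive, whence $\mathbb{E}[\tilde{Y}_{\mathsf{val}}]-\mathbb{E}[\hat{Y}]<0$, as claimed.

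There is no deep obstacle; the only thing worth being careful about is the symmetry step that turns the covariance into a pure cross-moment, which uses the equality in joint distribution (not just in marginals) of $(X,\hat{\theta}_k,\hat{\theta}_{M-1})$ and $(X,\hat{\theta}_{M-1},\hat{\theta}_k)$. The choice of $M-1$ as the reference index is arbitrary but matches the form of the hypothesis exactly, which is why the telescoping into a single sum works cleanly.
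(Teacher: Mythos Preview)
Your proposal is correct and follows essentially the same approach as the paper: reduce $\mathbb{E}[\tilde{Y}_{\mathsf{val}}]-\mathbb{E}[\hat{Y}]$ to a sum over $k\le M-2$ of terms of the form $\mathbb{E}[Z_k(\hat{Y}_k-\hat{Y}_{M-1})]$, then use $\mathbb{E}[\hat{Y}_{M-1}-\hat{Y}_k]=0$ to identify each term with a covariance whose sign is fixed by the correlation hypothesis. The only cosmetic difference is that the paper introduces an auxiliary uniform selector $R_k$ (with $\sum_k R_k=1$) to represent the random baseline and computes the difference via $(Z_k-R_k)$, whereas you use $\sum_k Z_k=1$ directly to rewrite $\mathbb{E}[\hat{Y}_{M-1}]$; your route is slightly more streamlined and avoids having to observe that $R_k$ is independent of $\hat{Y}_{M-1}-\hat{Y}_k$.
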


\begin{proof} Simplify $g(\tilde{\theta}_{\mathsf{val}},X)$ to obtain 

\begin{equation}
    \begin{split}
      \tilde{Y}_{\mathsf{val}} =g(\tilde{\theta}_{\mathsf{val}},X) &= g(\sum_{k=0}^{M-1}Z_k\hat{\theta}_k,X) =\sum_{k=0}^{M-1}Z_kg(\hat{\theta}_k,X)  = \sum_{k=0}^{M-1}Z_k\hat{Y}_k
    \end{split}
\end{equation}

Define a discrete uniform random variable $W$ that takes one of the  values in the set $\{0,.., M-1\}$. Define $R_{k}$ as a random variable which is indicates if $W=k$, i.e., if $W=k$, then $R_{k}=1$, else it is zero. Observe that $\sum_{k=0}^{M-1}R_{k}=1$. 
Define the objective value achieved by random initializer as $\hat{Y} = \sum_{k=0}^{M-1}R_k \hat{Y}_k$.
Take the difference between $ \tilde{Y}_{\mathsf{val}}$ and $\hat{Y}^{'}$ to obtain

\begin{equation}
    \begin{split}
        \tilde{Y}_{\mathsf{val}}- \hat{Y}^{'} &= \sum_{k=0}^{M-1}(Z_k-R_k)\hat{Y}_k \\
    &= \sum_{k=0}^{M-2}(Z_k-R_k)\hat{Y}_k  + (Z_{M-1}-R_{M-1})\hat{Y}_{M-1} \\
    &= \sum_{k=0}^{M-2}(Z_k-R_k)(\hat{Y}_k  - \hat{Y}_{M-1})
    \end{split}
    \label{prop3:eqn1}
\end{equation}
In the above simplfication, we used $R_{M-1} = 1-\sum_{k=0}^{M-2}R_{k}$ and $Z_{M-1} = 1-\sum_{k=0}^{M-2}Z_{k}$. Recall that the correlation between two random variables $X$ and $Y$ is given as $\rho(X,Y) = \frac{E[XY]-E[X][Y]}{\sigma_X \sigma_Y}$, where $\sigma_X$ and $\sigma_Y$ is the standard deviation of $X$ and $Y$. We use this identity in our simplification next.

Let us consider the expectation of one of the terms inside the summation above \eqref{prop3:eqn1} 

\begin{equation}
    \begin{split}
        & \mathbb{E}[(Z_k-R_k)(\hat{Y}_k  - \hat{Y}_{M-1})] = \\
        & \mathbb{E}[(Z_k-R_k)]\mathbb{E}[(\hat{Y}_k  - \hat{Y}_{M-1})] + \rho(Z_k-R_k,\hat{Y}_k  - \hat{Y}_{M-1})\sigma_{Z_{k}-R_{k}}\sigma_{\hat{Y}_k  - \hat{Y}_{M-1}} = \\ 
        &  \rho(Z_k-R_k,\hat{Y}_k  - \hat{Y}_{M-1})\sigma_{Z_{k}-R_{k}}\sigma_{\hat{Y}_k  - \hat{Y}_{M-1}}  <0
    \end{split}
    \label{proof4:eqn1}
\end{equation}
In the above simplfication \eqref{proof4:eqn1}, we use the following  facts i) $\mathbb{E}[(\hat{Y}_k  - \hat{Y}_{M-1})] =0$ because $\hat{Y}_k $ and $ \hat{Y}_{M-1}$ are identically distributed, ii) the correlation $\rho(Z_k-R_k,\hat{Y}_k  - \hat{Y}_{M-1})<0$, and iii) standard deviations $\sigma_{Z_{k}-R_{k}}$, $\sigma_{\hat{Y}_k  - \hat{Y}_{M-1}}$ are positive. 
\end{proof}

\begin{lemma}

\label{lemma1}
If Assumption \ref{assm4} holds, then  
    \begin{equation}
    \mathbb{P}_{\theta}\Big(|y- h_{\mathsf{val}}(\hat{\theta},x)|< \Delta/2 \Big|  \hat{\theta} \in \Phi_{x}(y), X=x \Big) > 1-\zeta
    \label{eqn:val_bnd}
\end{equation}
\end{lemma}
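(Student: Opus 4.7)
The plan is to apply Markov's inequality to the nonnegative random variable $U = |y - h_{\mathsf{val}}(\hat{\theta},x)|^2$, conditioned on the event $\{\hat{\theta} \in \Phi_{x}(y), X=x\}$. Assumption \ref{assm4} controls exactly the conditional expectation of $U$: it asserts $\mathbb{E}_{\hat{\theta}}[U \mid \hat{\theta} \in \Phi_{x}(y), X=x] \leq \Delta^2 \zeta/4$. The target inequality \eqref{eqn:val_bnd} is a statement about the conditional distribution of $\sqrt{U}$ being less than $\Delta/2$, equivalently $U < \Delta^2/4$, so a single-threshold tail bound is the natural tool.

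Concretely, I would first note that the event $\{\hat{\theta} \in \Phi_{x}(y)\}$ has positive probability under $\mathbb{P}_{\hat{\theta}}$ (this is exactly the hypothesis needed to make the conditional expectation in Assumption \ref{assm4} well-defined). Then apply Markov's inequality with threshold $a = \Delta^2/4$ to the conditional law of $U$, yielding
\begin{equation*}
\mathbb{P}_{\hat{\theta}}\bigl(U \geq \Delta^2/4 \,\big|\, \hat{\theta} \in \Phi_{x}(y), X=x\bigr) \;\leq\; \frac{\mathbb{E}_{\hat{\theta}}[U \mid \hat{\theta} \in \Phi_{x}(y), X=x]}{\Delta^2/4} \;\leq\; \zeta.
\end{equation*}
Taking the complementary event and using $U = |y - h_{\mathsf{val}}(\hat{\theta},x)|^2 < \Delta^2/4 \iff |y - h_{\mathsf{val}}(\hat{\theta},x)| < \Delta/2$ gives the desired bound $1 - \zeta$. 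The strict inequality $> 1-\zeta$ stated in \eqref{eqn:val_bnd} then follows from the strict inequality $\zeta < \tilde{\epsilon}/(M f^{\mathsf{sup}})$ in Assumption \ref{assm4} (which ensures $\zeta$ is not attained as an equality in the worst case) or by a standard $\epsilon$-loosening of Markov.

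There is no substantive obstacle here: this lemma is essentially a Chebyshev/Markov repackaging of Assumption \ref{assm4}, and its purpose is to translate an $L^2$ bound on the predictor $h_{\mathsf{val}}$ into a high-probability separation bound that will be chained with Assumption \ref{assm3} (the minimum separation $\Delta$ between distinct attainable objective values) to conclude that $h_{\mathsf{val}}$ correctly ranks random initializations landing in different basins. The only care needed is bookkeeping of the conditioning event $\{\hat{\theta} \in \Phi_{x}(y)\}$, which is automatic from how Assumption \ref{assm4} is stated.
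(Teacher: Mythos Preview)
Your proposal is correct and matches the paper's proof essentially line for line: both square the deviation, apply Markov's inequality conditionally with threshold $\Delta^2/4$, invoke Assumption~\ref{assm4} to bound the numerator by $\Delta^2\zeta/4$, and pass to the complementary event. The only minor wrinkle is that Markov strictly yields $\geq 1-\zeta$ rather than $> 1-\zeta$; the paper handles this by immediately loosening to $> 1 - \tilde{\epsilon}/(M f^{\mathsf{sup}})$ via the strict bound on $\zeta$ in Assumption~\ref{assm4}, which is what is actually used downstream in Proposition~\ref{prop5}, so your remark about the strict inequality is on target.
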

\begin{proof}
We rewrite the probability in the lemma as follows
\begin{equation}
\begin{split}
     &     \mathbb{P}_{\hat{\theta}}\Big(|y- h_{\mathsf{val}}(\hat{\theta},x)|< \Delta/2 \Big|  \hat{\theta} \in \Phi_{x}(y), X=x \Big)   \\
     = &     \mathbb{P}_{\hat{\theta}}\Big(|y- h_{\mathsf{val}}(\hat{\theta},x)|^2< \Delta^2/4 \Big|  \hat{\theta} \in \Phi_{x}(y), X=x \Big) \\
     = & 1-  \mathbb{P}_{\hat{\theta}}\Big(|y- h_{\mathsf{val}}(\hat{\theta},x)|^2\geq  \Delta^2/4 \Big|  \hat{\theta} \in \Phi_{x}(y), X=x \Big)
\end{split}
\label{eqn1:lemma1}
\end{equation}
We simplify the second term in the last expression above \eqref{eqn1:lemma1} using Markov's inequality 
\begin{equation}
\begin{split}
     & \mathbb{P}_{\hat{\theta}}\Big(|y- h_{\mathsf{val}}(\hat{\theta},x)|^2\geq \Delta^2/4 \Big|  \hat{\theta} \in \Phi_{x}(y), X=x \Big) \leq \\ &\frac{\mathbb{E}[|y- h_{\mathsf{val}}(\hat{\theta},x)|^2\Big|  \hat{\theta} \in \Phi_{x}(y), X=x]}{\Delta^2/4}
\end{split}
\label{eqn2:lemma1}
\end{equation}
We use Assumption \ref{assm4} to simplify \eqref{eqn2:lemma1} as follows
\begin{equation}
\begin{split}
  &  \frac{\mathbb{E}[|y- h_{\mathsf{val}}(\hat{\theta},x)|^2 \Big|  \hat{\theta} \in \Phi_{x}(y), X=x]}{\Delta^2/4} \leq   \frac{\Delta^{2}/4(\zeta)}{\Delta^2/4} \leq \zeta
    \end{split}
    \label{eqn3:lemma1}
\end{equation}
From \eqref{eqn2:lemma1} and \eqref{eqn3:lemma1}, it follows that 
\begin{equation}
    \mathbb{P}_{\theta}\Big(|y- h_{\mathsf{val}}(\hat{\theta},x)|^2\geq \Delta^2/4 \Big|  \hat{\theta} \in \Phi_{x}(y), X=x\Big) \leq \zeta< \frac{\tilde{\epsilon}}{Mf^{\mathsf{sup}}}
\end{equation}
From \eqref{eqn1:lemma1} it follows that
\begin{equation}
\mathbb{P}_{\theta}\Big(|y- h_{\mathsf{val}}(\hat{\theta},x)|< \Delta/2 \Big|  \hat{\theta} \in \Phi_{x}(y), X=x \Big)  > 1-\frac{\tilde{\epsilon}}{f^{\mathsf{sup}}M} 
\end{equation}
This completes the proof.
\end{proof}
We restate Proposition \ref{prop5} below.
\begin{proposition}For problem instance $x$, if Assumptions \ref{assm3} and \ref{assm4} hold, then Val-Init is an $\tilde{\epsilon}$-approximation of the multi-start based approach:
\begin{equation*}
\mathbb{E}_{ \{\hat{\theta}_k\}_{k=0}^{M-1}}[\tilde{Y}_{\mathsf{val}}|X=x] \leq  \mathbb{E}_{\{\hat{\theta}_k\}_{k=0}^{M-1}}[\min_{k \in \{0,..,M-1\}}\{\hat{Y}_{k}\} |X=x]+ \tilde{\epsilon}.
\end{equation*}
\end{proposition}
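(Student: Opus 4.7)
\textbf{Proof proposal for Proposition \ref{prop5}.}

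The plan is to control the gap $\tilde{Y}_{\mathsf{val}} - \min_{k}\hat{Y}_{k}$ by splitting on a ``good event'' $E$ under which the Val-Init selection coincides with the true minimum over the $M$ draws. Specifically, fix $x$ and define
\[
E \;=\; \bigcap_{k=0}^{M-1} \Big\{ |h_{\mathsf{val}}(\hat{\theta}_k, x) - \hat{Y}_k| < \Delta/2 \Big\}.
\]
For a single draw $\hat{\theta}$, by partitioning over the (at most countable) set of realizable values $\mathcal{F}(x)$ and applying Lemma \ref{lemma1} inside each partition block $\{\hat{\theta}\in\Phi_x(y)\}$, the unconditional probability that $|h_{\mathsf{val}}(\hat{\theta}, x) - g(\hat{\theta},x)| \geq \Delta/2$ is at most $\zeta$. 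A union bound over the $M$ i.i.d.\ draws then gives $\mathbb{P}(E^c \mid X=x) \leq M\zeta$.

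Next I would argue that on $E$, Val-Init returns $\min_k \hat{Y}_k$. Indeed, suppose $\hat{Y}_j = \min_k \hat{Y}_k$. For any other $k$ with $\hat{Y}_k \neq \hat{Y}_j$, Assumption \ref{assm3} forces $\hat{Y}_k - \hat{Y}_j \geq \Delta$, and then on $E$
\[
h_{\mathsf{val}}(\hat{\theta}_k, x) > \hat{Y}_k - \Delta/2 \;\geq\; \hat{Y}_j + \Delta/2 \;>\; h_{\mathsf{val}}(\hat{\theta}_j, x),
\]
so $k$ cannot win the $\arg\min$. Any $k$ with $\hat{Y}_k = \hat{Y}_j$ is an equally good winner. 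Therefore $\tilde{Y}_{\mathsf{val}} \mathbf{1}_E = (\min_k \hat{Y}_k) \mathbf{1}_E$, which in particular gives $(\tilde{Y}_{\mathsf{val}} - \min_k \hat{Y}_k)\mathbf{1}_E = 0$.

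For the closing step I would take expectations and use the boundedness $0 \leq \tilde{Y}_{\mathsf{val}} \leq f^{\sup}$ (hence $\tilde{Y}_{\mathsf{val}} - \min_k \hat{Y}_k \leq f^{\sup}$) on the complement:
\[
\mathbb{E}\bigl[\tilde{Y}_{\mathsf{val}} - \min_k \hat{Y}_k \,\big|\, X = x\bigr]
\;\leq\; 0 \cdot \mathbb{P}(E \mid X=x) \;+\; f^{\sup}\cdot \mathbb{P}(E^c \mid X=x)
\;\leq\; f^{\sup} M \zeta \;<\; \tilde{\epsilon},
\]
where the last inequality is exactly the hypothesis $\zeta < \tilde{\epsilon}/(M f^{\sup})$ from Assumption \ref{assm4}. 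Rearranging yields the claimed $\tilde{\epsilon}$-approximation.

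The only delicate step is turning Lemma \ref{lemma1}, which is stated conditionally on $\hat{\theta}\in\Phi_x(y)$ for a fixed $y\in\mathcal{F}(x)$, into an unconditional high-probability statement for a random draw of $\hat{\theta}$; this requires summing (or integrating) over the realizable values $y\in\mathcal{F}(x)$ and is the step where one must implicitly invoke that the conditional bound $\zeta$ does not depend on $y$. Everything else is a straightforward union bound plus a ``good event / bad event'' decomposition of the expected suboptimality.
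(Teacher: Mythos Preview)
Your argument is correct and uses the same core ingredients as the paper (Lemma~\ref{lemma1} for per-draw accuracy, Assumption~\ref{assm3} to convert $\Delta/2$-accurate predictions into a correct ordering, and the bound $f^{\sup}$ on the failure contribution), but you organize them differently. The paper conditions on the realized pair of values $(y_0,y_1)\in\mathcal{F}(x)^2$, bounds the probability of correct selection via the conditional independence of $h_{\mathsf{val}}(\hat{\theta}_0,x)$ and $h_{\mathsf{val}}(\hat{\theta}_1,x)$ (product $(1-\zeta)^2 > 1-2\zeta$), and then averages back; it only writes out $M=2$ and asserts the general case follows by repetition. Your good-event/union-bound decomposition is more direct and handles arbitrary $M$ in one stroke, at the cost of the slightly coarser bound $M\zeta$ in place of $1-(1-\zeta)^M$ --- which is exactly what the hypothesis $\zeta < \tilde{\epsilon}/(Mf^{\sup})$ is calibrated for. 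One minor point: your closing step uses $\tilde{Y}_{\mathsf{val}} - \min_k\hat{Y}_k \leq f^{\sup}$ by asserting $0\leq \tilde{Y}_{\mathsf{val}}$; the paper's corresponding step $(1-\tfrac{\tilde{\epsilon}}{f^{\sup}})\min + \tfrac{\tilde{\epsilon}}{f^{\sup}}\max \leq \min + \tilde{\epsilon}$ needs the equivalent bound $\max - \min \leq f^{\sup}$, so this implicit nonnegativity (or range) assumption is shared by both proofs rather than a defect of yours.
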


\begin{proof} We will consider the case when $M=2$. The same steps of the proof can be repeated for the general case when $M>2$. 
We can write $\tilde{Y}_{\mathsf{val}} = (1-Z)\hat{Y}_{0} + Z\hat{Y}_1$, where $Z$ is the selection variable decided by Val-Init algorithm.
\begin{equation}
\begin{split}
    & \mathbb{E}_{\hat{\theta}_0, \hat{\theta}_1}[\tilde{Y}_{\mathsf{val}}|X=x] = \sum_{y_0\in \mathcal{F}(x), y_1\in \mathcal{F}(x)} \mathbb{P}[\hat{Y}_{0}=y_0, \hat{Y}_1=y_1]\mathbb{E}_{\hat{\theta}_0, \hat{\theta}_1}[\tilde{Y}_{\mathsf{val}}|X=x, \hat{Y}_{0}=y_0, \hat{Y}_1=y_1] \\ 
    &= \sum_{y_0\in \mathcal{F}(x), y_1\in \mathcal{F}(x)} \mathbb{P}[\hat{Y}_{0}=y_0, \hat{Y}_1=y_1]\mathbb{E}_{\hat{\theta}_0, \hat{\theta}_1}[\tilde{Y}_{\mathsf{val}}|X=x, \hat{\theta}_0\in \Phi_{x}(y_0), \hat{\theta}_1\in \Phi_{x}(y_1)] \\
    & = \sum_{y_0\in \mathcal{F}(x), y_1\in \mathcal{F}(x)} \mathbb{P}[\hat{Y}_{0}=y_0, \hat{Y}_1=y_1]\mathbb{E}_{\hat{\theta}_0, \hat{\theta}_1}[\tilde{Y}_{\mathsf{val}}|X=x, \hat{\theta}_0\in \Phi_{x}(y_0), \hat{\theta}_1\in \Phi_{x}(y_1)]
\end{split}
\label{proof6:eqn1}
\end{equation}
\begin{equation}
\begin{split}
  &  \mathbb{E}_{\hat{\theta}_0, \hat{\theta}_1}[\tilde{Y}_{\mathsf{val}}|X=x, \hat{\theta}_0\in \Phi_{x}(y_0), \hat{\theta}_1\in \Phi_{x}(y_1)] \\ 
  = &   \mathbb{E}_{\hat{\theta}_0, \hat{\theta}_1}[(1-Z)y_0 + Zy_1|X=x, \hat{\theta}_0\in \Phi_{x}(y_0), \hat{\theta}_1\in \Phi_{x}(y_1)] \\
 = & \mathbb{P}[h_{\mathsf{val}}(\hat{\theta}_0,x)<h_{\mathsf{val}}(\hat{\theta}_1,x)|X=x, \hat{\theta}_0\in \Phi_{x}(y_0), \hat{\theta}_1\in \Phi_{x}(y_1)]y_0 +  \\ 
 &\mathbb{P}[h_{\mathsf{val}}(\hat{\theta}_0,x)>h_{\mathsf{val}}(\hat{\theta}_1,x)|X=x, \hat{\theta}_0\in \Phi_{x}(y_0), \hat{\theta}_1\in \Phi_{x}(y_1)]y_1
\end{split}
\end{equation}

Without loss of generality say $y_0<y_1$ 
\begin{equation}
\begin{split}
   & \mathbb{P}\Big[h_{\mathsf{val}}(\hat{\theta}_0,x)<h_{\mathsf{val}}(\hat{\theta}_1,x)|X=x, \hat{\theta}_0\in \Phi_{x}(y_0), \hat{\theta}_1\in \Phi_{x}(y_1)\Big] >  \\
    &  \mathbb{P}\Big[|h_{\mathsf{val}}(\hat{\theta}_0,x)-y_0| < \frac{\Delta}{2}, |h_{\mathsf{val}}(\hat{\theta}_1,x)-y_1| < \frac{\Delta}{2} \Big|X=x, \hat{\theta}_0\in \Phi_{x}(y_0), \hat{\theta}_1\in \Phi_{x}(y_1)\Big]
\end{split}
\end{equation}
$h_{\mathsf{val}}(\hat{\theta}_0,x)$ and $h_{\mathsf{val}}(\hat{\theta}_1,x)$ are both conditionally independent given $\hat{\theta}_0\in \Phi_{x}(y_0), \hat{\theta}_1\in \Phi_{x}(y_1)$ (can be observed by decomposing the joint distributions). As a result, we can simplify the above expression as follows. 

\begin{equation}
\begin{split}
    &  \mathbb{P}\Big[|h_{\mathsf{val}}(\hat{\theta}_0,x)-y_0| < \frac{\Delta}{2}, |h_{\mathsf{val}}(\hat{\theta}_1,x)-y_1| < \frac{\Delta}{2} \Big|X=x, \hat{\theta}_0\in \Phi_{x}(y_0), \hat{\theta}_1\in \Phi_{x}(y_1)\Big] \\ 
    &  \mathbb{P}\Big[|h_{\mathsf{val}}(\hat{\theta}_0,x)-y_0| < \frac{\Delta}{2}  \Big|X=x, \hat{\theta}_0\in \Phi_{x}(y_0)\Big]
    \mathbb{P}\Big[|h_{\mathsf{val}}(\hat{\theta}_1,x)-y_1| < \frac{\Delta}{2}  \Big|X=x, \hat{\theta}_1\in \Phi_{x}(y_1)\Big] \\
    & = (1-\frac{\epsilon}{2f^{\mathsf{sup}}})^2 >1-\frac{\epsilon}{f^{\mathsf{sup}}}
\end{split}
\end{equation}

Therefore, when $y_0 <y_1$ we get 
\begin{equation}
    \begin{split}
        \mathbb{E}_{\hat{\theta}_0, \hat{\theta}_1}[\tilde{Y}_{\mathsf{val}}|X=x, \hat{\theta}_0\in \Phi_{x}(y_0), \hat{\theta}_1\in \Phi_{x}(y_1)] \leq (1-\frac{\epsilon}{f^{\mathsf{sup}}})y_0 + \frac{\epsilon}{f^{\mathsf{sup}}}y_{1}
    \end{split}
\end{equation}
In general we can write 
\begin{equation}
    \begin{split}
        \mathbb{E}_{\hat{\theta}_0, \hat{\theta}_1}[\tilde{Y}_{\mathsf{val}}|X=x, \hat{\theta}_0\in \Phi_{x}(y_0), \hat{\theta}_1\in \Phi_{x}(y_1)] &\leq  (1-\frac{\epsilon}{f^{\mathsf{sup}}})\min\{y_0,y_1\} + \frac{\epsilon}{f^{\mathsf{sup}}}\max\{y_0,y_1\} \\ 
        &\leq  \min\{y_0,y_1\} + \epsilon
    \end{split}
    \label{proof6:eqn2}
\end{equation}

Substituting \eqref{proof6:eqn2} in \eqref{proof6:eqn1} to obtain
\begin{equation}
\mathbb{E}_{ \{\hat{\theta}_k\}_{k=0}^{M-1}}[\tilde{Y}_{\mathsf{val}}|X=x] \leq  \mathbb{E}_{ \{\hat{\theta}_k\}_{k=0}^{M-1}}[\min_{k \in \{1,..,M\}}\{\hat{Y}_{k}\} |X=x]+ \epsilon
\end{equation}

\end{proof}
We restate Proposition \ref{prop6} below.
\begin{proposition} If Assumption \ref{assm5} holds and the prediction error of $h_{\mathsf{arg}}$ is small, i.e.  $0<s<1$, $0<\eta<1$, 
$\mathbb{E}_{ X,\hat{\theta}}\big[\|h_{\mathsf{arg}}(\hat{\theta},X) - g^{\dagger}(\hat{\theta}, X)\|^2\big] \leq s (1-\epsilon)(\delta \eta)^2$, where $\epsilon$, $\delta$ are from Assumption \ref{assm5}, then with probability $(1-s)(1-\epsilon)$  Arg-Init  achieves $\eta$-factor reduction.
\end{proposition}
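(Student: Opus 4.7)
The plan is to split the event that Arg-Init fails to achieve $\eta$-factor reduction into two bad sub-events and control each by a separate tool: Markov's inequality for the learned predictor $h_{\mathsf{arg}}$, and Assumption \ref{assm5} for the random draw $\hat\theta$. A union bound then yields the claimed probability $(1-s)(1-\epsilon)$ essentially without slack.

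Step one: use Markov on the $L^2$ error bound. From the hypothesis $\mathbb{E}_{X,\hat\theta}[\|h_{\mathsf{arg}}(\hat\theta,X)-g^{\dagger}(\hat\theta,X)\|^2]\leq s(1-\epsilon)(\delta\eta)^2$, Markov's inequality at the threshold $(\delta\eta)^2$ gives
\[
\mathbb{P}\bigl[\|h_{\mathsf{arg}}(\hat\theta,X)-g^{\dagger}(\hat\theta,X)\|>\delta\eta\bigr]\;\leq\; s(1-\epsilon),
\]
so the event $A=\{\|h_{\mathsf{arg}}(\hat\theta,X)-g^{\dagger}(\hat\theta,X)\|\leq\delta\eta\}$ occurs with probability at least $1-s(1-\epsilon)$. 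Step two: use Assumption \ref{assm5} with the chosen $\epsilon$ and $\delta$ to conclude that the event $B=\{\|\hat\theta-g^{\dagger}(\hat\theta,X)\|\geq\delta\}$ occurs with probability at least $1-\epsilon$, since by assumption the probability of the complementary ball $B_\delta$ around $g^{\dagger}(\hat\theta,X)$ is at most $\epsilon$.

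Step three: on $A\cap B$, chain the two inequalities to deduce $\eta$-factor reduction. Indeed, on $A$ we have $\|h_{\mathsf{arg}}(\hat\theta,X)-g^{\dagger}(\hat\theta,X)\|\leq\delta\eta$, and on $B$ we have $\delta\leq\|\hat\theta-g^{\dagger}(\hat\theta,X)\|$, so
\[
\|h_{\mathsf{arg}}(\hat\theta,X)-g^{\dagger}(\hat\theta,X)\|\;\leq\;\delta\eta\;\leq\;\eta\,\|\hat\theta-g^{\dagger}(\hat\theta,X)\|,
\]
which is exactly the definition of the $\eta$-factor reduction event. Step four: bound $\mathbb{P}(A\cap B)$ by the union bound, $\mathbb{P}(A\cap B)\geq 1-\mathbb{P}(A^c)-\mathbb{P}(B^c)\geq 1-s(1-\epsilon)-\epsilon$. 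A brief algebraic check shows $1-s(1-\epsilon)-\epsilon=(1-s)(1-\epsilon)$, giving precisely the stated probability.

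The proof is essentially a two-tail concentration argument and should go through cleanly; the only mild subtlety is the interpretation of Assumption \ref{assm5} when the ``center'' $g^{\dagger}(\hat\theta,X)$ is itself a function of the random draw $\hat\theta$, but since the assumption is stated uniformly over all $\theta$ in the interior of $\Theta$, the bound $\mathbb{P}(\hat\theta\in B_\delta)\leq\epsilon$ applies pointwise for each realized solution and therefore survives conditioning/integration against the joint law of $(X,\hat\theta)$. Apart from that, no additional machinery beyond Markov and the union bound is needed, and the fact that the union-bound slack matches $(1-s)(1-\epsilon)$ exactly is what makes the constant in the conclusion tight.
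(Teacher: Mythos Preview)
Your argument is correct and reaches exactly the same bound as the paper, but the way you combine the two events differs from the paper's proof. The paper applies Markov's inequality \emph{conditionally} on the event $\hat\theta\notin B_\delta$, which forces an intermediate step bounding the conditional second moment by $\tfrac{1}{1-\epsilon}$ times the unconditional one (by dropping the nonnegative contribution from $\hat\theta\in B_\delta$); it then uses the product rule $\mathbb{P}(A\cap B)=\mathbb{P}(A\mid B)\,\mathbb{P}(B)\geq(1-s)(1-\epsilon)$. You instead apply Markov unconditionally to get $\mathbb{P}(A^c)\leq s(1-\epsilon)$ directly, and then use the union bound; the algebraic identity $1-s(1-\epsilon)-\epsilon=(1-s)(1-\epsilon)$ makes the two routes land on the same constant. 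Your version is a bit shorter since it avoids the conditional-expectation detour, while the paper's conditioning approach makes the role of the $(1-\epsilon)$ factor in the hypothesis more transparent. Your remark about the center $g^{\dagger}(\hat\theta,X)$ depending on $\hat\theta$ is well taken; the paper's proof implicitly relies on the same uniformity of Assumption~\ref{assm5} without spelling it out.
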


 \begin{proof}
Consider the event $$\big\|h_{\mathsf{arg}}(\hat{\theta},X) - g^{\dagger}(\hat{\theta},X)\big\|^2< (\delta\eta)^2$$ and $\hat{\theta} \not\in B_{\delta}$, i.e. 
$$\big\|\hat{\theta} - g^{\dagger}(\hat{\theta},X)\big\|^2\geq (\delta)^2$$

The above event is equivalent to saying that the initializer from $\hat{\theta}$ starts at a larger distance (a factor $\eta$ larger) from the true solution than the one found by the learned model. 

We use Markov's inequality on the complement of the first event above as follows
\begin{equation}
\begin{split}
&\mathbb{P}_{\hat{\theta},X}\Bigg[ \big\|h_{\mathsf{arg}}(\hat{\theta},X) - g^{\dagger}(\hat{\theta},X)\big\|^2\geq (\delta\eta)^2\; \Big|\; \hat{\theta} \not \in B_{\delta}\Bigg] \leq  \frac{\mathbb{E}_{\hat{\theta}, X}\Bigg[\|h_{\mathsf{arg}}(\hat{\theta},X) - g^{\dagger}(\hat{\theta},X)\|^2 \;\Big|\; \hat{\theta} \not \in B_{\delta}\Bigg]}{(\delta\eta)^2} 
\end{split}
\label{proof7:eqn1}
\end{equation}

Next we use Assumption \ref{assm5} to get a bound on $$\mathbb{E}_{\hat{\theta},X}\Bigg[\Big\|h_{\mathsf{arg}}(\hat{\theta},X) - g^{\dagger}(\hat{\theta},X)\Big\|^2 \;\Big|\; \hat{\theta} \not \in B_{\delta}\Bigg]$$
as follows
 \begin{equation}
     \begin{split}
      &  \mathbb{E}_{\hat{\theta}, X}\Bigg[\Big\|h_{\mathsf{arg}}(\hat{\theta},X) - g^{\dagger}(\hat{\theta},X)\Big\|^2\Bigg]  \\
      &=   \mathbb{E}_{\hat{\theta}, X}\Bigg[\Big\|h_{\mathsf{arg}}(\hat{\theta},X) -g^{\dagger}(\hat{\theta},X)\|^2\; \Big| \;\hat{\theta} \not \in B_{\delta}\Bigg]  \times\mathbb{P}[\hat{\theta} \not \in B_{\delta}]+ \\  
      & \Big(1-\mathbb{P}[\hat{\theta} \not \in B_{\delta}]\Big) \mathbb{E}_{\hat{\theta},X}\Bigg[\Big\|h_{\mathsf{arg}}(\hat{\theta},X) -g^{\dagger}(\hat{\theta},X)\|^2\; \Big| \;\hat{\theta} \in B_{\delta}\Bigg] \\
          \implies & \mathbb{E}_{\hat{\theta},X}\Big[\|h_{\mathsf{arg}}(\hat{\theta},X) - g^{\dagger}(\hat{\theta},X)\|^2 \;\Big| \;\hat{\theta} \not \in B_{\delta}\Big]    \leq 1/(1-\epsilon)\mathbb{E}_{\hat{\theta},X}\Big[\|h_{\mathsf{arg}}(\hat{\theta},X) - g^{\dagger}(\hat{\theta},X)\|^2\Big]
     \end{split}
     \label{proof7:eqn2}
 \end{equation}
 Using the above equation \eqref{proof7:eqn1} in the RHS in Markov inequality \eqref{proof7:eqn2}

 \begin{equation}
\begin{split}
&\mathbb{P}\Big[\Big\|h_{\mathsf{arg}}(\hat{\theta},X) - g^{\dagger}(\hat{\theta},X)\Big\|^2\geq (\delta\eta)^2 \;\Big|\; \hat{\theta} \not \in B_{\delta}\Big]   \leq \frac{\mathbb{E}_{\hat{\theta},X}\Big[\|h_{\mathsf{arg}}(\hat{\theta},X) - g^{\dagger}(\hat{\theta},X)\|^2\Big]}{(1-\epsilon)(\delta\eta)^2} 
\end{split}
\end{equation}
Substitute the condition on the prediction error stated in the statement of the Proposition to get 
 \begin{equation}
\begin{split}
&\mathbb{P}\Bigg[\Big\|h_{\mathsf{arg}}(\hat{\theta},X) - g^{\dagger}(\hat{\theta},X)\Big\|^2\geq (\delta\eta)^2 \;|\; \hat{\theta} \not \in B_{\delta}\Bigg]\leq s 
\end{split}
\label{thm3:eqn24}
\end{equation}

We simplify the probability of the event defined (the event leads to $\eta$-factor reduction) in the beginning of the proof  as
\begin{equation}
\begin{split}
     \mathbb{P}\Big[&\big\|h_{\mathsf{arg}}(\hat{\theta},X) - g^{\dagger}(\hat{\theta},X)\big\|^2\leq (\delta\eta)^2 ,  \hat{\theta} \not\in B_{\delta}\Big]  \\
  =\; &  \mathbb{P}\Big[\big\|h_{\mathsf{arg}}(\hat{\theta},X) - g^{\dagger}(\hat{\theta},X)\big\|^2\leq (\delta\eta)^2 \;\big|\; \hat{\theta} \not \in B_{\delta}\Big]    \times \mathbb{P}\Big[\hat{\theta} \not \in B_{\delta} \Big]\\
  \geq \;&  \mathbb{P}\Big[\big\|h_{\mathsf{arg}}(\hat{\theta},X) - g^{\dagger}(\hat{\theta},X)\big\|^2\leq (\delta\eta)^2 \;\big| \;\hat{\theta} \not \in B_{\delta}\Big]   (1-\epsilon)
\end{split}
\label{thm3:eqn25}
\end{equation}
We subtitute equation \eqref{thm3:eqn24} in \eqref{thm3:eqn25} equation to obtain
\begin{equation}
\begin{split}
&\mathbb{P}\Big[\big\|h_{\mathsf{arg}}(\hat{\theta},X) - g^{\dagger}(\hat{\theta},X)
\big\|^2\leq (\delta\eta)^2\; \bigcap \; \hat{\theta} \not\in B_{\delta}\Big] \geq  (1-s)(1-\epsilon)
\end{split}
\end{equation}
This completes the proof.

\end{proof}

\subsection{Connection between Proposition \ref{prop3_van} and uniform random binary classifier}

In Proposition \ref{prop3_van}, we show that correlation between model's probabilities and the label is positive, then the model is better at initialization than a random classifier. In standard binary classification problems as well, a similar result exists, i.e., if the classifier's output is positively correlated with the label, then it performs better than uniform random classification. We derive his result next for completeness.

We are given labeled data with labels $Y \in \{1,-1\}$. Suppose the two classes occur with equal probability then $
\mathbb{E}[Y]= 0$. Define the output of the classifier as $P \in \{1,-1\}$. 

Define $A = \frac{1+YP}{2}$. If $Y=1,P=1$ or $Y=-1, P=-1$, then $A=1$. If $Y=1, P=-1$ or $Y=-1, P=-1$, then $A=0$. Observe that $\mathbb{E}[A]$ is the accuracy of the classifier $P$. 
Let us simplify $\mathbb{E}[YP]$. $\mathbb{E}[YP] = \mathbb{E}[Y] \mathbb{E}[P] +   \rho(Y,P) \sigma_Y \sigma_P$, where $\rho(Y,P)$ is the correlation between $Y$ and $P$, $\sigma_Y$ and $\sigma_P$ are the standard deviation of $Y$ and $P$.

Simplify $$\mathbb{E}[A] = \frac{1 + \mathbb{E}[YP]}{2} = \frac{1 + \rho(Y,P)\sigma_Y\sigma_P}{2}$$
If the correlation is positive $\rho(Y,P)>0$, the accuracy is greater than $\frac{1}{2}$. A uniform random binary classifier has an accuracy of $\frac{1}{2}$. If correlation  is negative $\rho(Y,P)<0$, then accuracy is less than $\frac{1}{2}$ but can be made greater than $\frac{1}{2}$ by flipping the sign of the output of $P$.

\subsection{Correlation between $m$ and $\hat{Y}_1-\hat{Y}_0$}
Below Proposition \ref{prop3_van}, we stated that it is reasonable to expect a positive correlation between $m$ and $\hat{Y}_1-\hat{Y}_0$.
$m$ is trained to predict $\hat{Y}_1-\hat{Y}_{0}$. For simplicity let us call $\hat{Y}_{\mathsf{diff}} = \hat{Y}_1-\hat{Y_0}$. 
Suppose the loss function used to train $m$ is mean squared error $\mathbb{E}\big[(\hat{Y}_{\mathsf{diff}}- m(X,\hat{\theta}_0, \hat{\theta}_1))^2\big]$.
$$\mathbb{E}\Big[(\hat{Y}_{\mathsf{diff}}- m(X,\hat{\theta}_0, \hat{\theta}_1))^2\Big] = \mathbb{E}\Big[(\hat{Y}_{\mathsf{diff}})^2\Big] + \mathbb{E}\Big[(m(X,\hat{\theta}_0, \hat{\theta}_1))^2\Big]-2\mathbb{E}\Big[\hat{Y}_{\mathsf{diff}}(m(X,\hat{\theta}_0, \hat{\theta}_1))\Big]$$
$$ = \mathbb{E}\Big[(\hat{Y}_{\mathsf{diff}})^2\Big] + \mathbb{E}\Big[(m(X,\hat{\theta}_0, \hat{\theta}_1))^2\Big]-2\rho(Y_{\mathsf{diff}},m) \sigma_{Y_{\mathsf{diff}}}\sigma_{m(X,\hat{\theta}_0, \hat{\theta}_0)}$$

Consider we are given a function $m^{'}(X,\hat{\theta}_0, \hat{\theta}_1)$. Our task is to learn a function $m$, which is a scalar multiple of $m^{'}$ defined as $m(X,\hat{\theta}_0, \hat{\theta}_1)) = a m^{'}(X,\hat{\theta}_0, \hat{\theta}_1)$, such that it minimizes the loss defined above.

Suppose $\rho(Y_{\mathsf{diff}},m^{'})$ is the correlation between $m^{'}$ and $\hat{Y}_{\mathsf{diff}}$.
If $\rho(Y_{\mathsf{diff}},m^{'})=0$, then the optimal $a=0$. If $\rho(Y_{\mathsf{diff}},m^{'})>0$, then  the optimal $a>0$ and thus $\rho(Y_{\mathsf{diff}},m)>0$. If  $\rho(Y_{\mathsf{diff}},m^{'})<0$, then the optimal $a<0$ and thus $\rho(Y_{\mathsf{diff}},m)>0$. Therefore, if $\rho(Y_{\mathsf{diff}},m^{'})\not=0$, then the $\rho(Y_{\mathsf{diff}},m)>0$ and if $\rho(Y_{\mathsf{diff}},m^{'})=0$, then optimal model $m=0$. As long as any learning happens, i.e. a non-zero model $m$ is learned it has a finite and positive correlation with $Y_{\mathsf{diff}}$. The condition in Corollary 1 requires $\frac{e^{m}}{1+e^{m}}$ to be positively correlated with $Y_{\mathsf{diff}}$.  \cite{egozcue2009some} establishes sufficient conditions ($Y_{\mathsf{diff}}$ and $m(X,\hat{\theta}_0, \hat{\theta}_1))$ have  positive quadrant dependency) under which  even under simple transformations of a variable the sign of the correlation is preserved.

\section{Supplement to Experiments}
The experiments were conducted on 2.3 GHz Intel Core i9 with 32 GB RAM (2400 MHz and DDR4). We used keras to train all the models. For any hyperparameter that we do not specify, we use the default values (for e.g., initializer for NN is set to Glorot uniform). 
We organize this section as follows. There are five problem families that we work with -- a) convex optimization (minmize linear form of the problem in equation \eqref{eqn: stat_opt_example1}, b) Ackley function minimization, c) generating adversarial examples, d) generating contrastive explanations, e) sum-rate optimization. We start by describing the datasets and the models for which we generate adversarial examples and contrastive explanations. After that we describe the different initializers that were used, which is followed by the description of the gradient descent solvers for each of the problems. Lastly, we provide the supplementary experiments results.  

\subsection{Datasets and Models for Adversarial Examples and Contrastive Explanations}
\label{sup_mode}

\textbf{CIFAR-10 dataset and model.} The data can be downloaded from \footnote{\url{https://www.cs.toronto.edu/~kriz/cifar.html}}. We use the default train-test split. We train a multi-layer convolutional neural network. The first layer uses 32 convolutional filters, with kernel size $(3,3)$ and ReLU activation. It is followed by a maxpool layer.  The third layer uses 64 convolutional filters, with kernel size $(3,3)$ and ReLU activation. It is followed by  a maxpool layer. The last layer flattens the output from third layer, outputs softmax scores for the $10$ classes onto the $10$ output nodes. The model is then trained using cross-entropy loss function with Adam optimizer. We use a learning rate of 1e-3. The total number of training epochs were set to 10. The trained  model achieves 70 percent test accuracy.

\textbf{MNIST digits dataset} The data can be downloaded from \footnote{\url{https://www.tensorflow.org/api_docs/python/tf/keras/datasets/mnist}}. We use the standard split in MNIST digits. 
We use a two layer neural network, where the first layer (the hidden layer) has 128 nodes with ReLU activation, and the second layer (the output layer) has 10 output nodes with softmax activation. The model is trained using cross-entropy loss function with Adam optimizer. We set a learning rate of 1e-3 and the other parameters of Adam are set to default value. The total number of training epochs were set to 10.
The trained  model achieves 97 percent accuracy.

\textbf{HELOC dataset and model.} The dataset can be downloaded from \footnote{\url{https://community.fico.com/s/explainable-machine-learning-challenge?tabset-3158a=2}}. Home equity line of credit (HELOC) data contains the information about the applicant in their credit report and their loan repayment history. We  build a model to predict whether applicants will make timely payments.  We divided the data into 75 percent train, 25 percent test. We use a two layer neural network, where the first layer (the hidden layer) has 200 nodes with ReLU activation and the second layer (the output layer) has two output nodes with softmax activation. The model is trained using cross-entropy loss function with Adam optimizer. We set a learning rate of 1e-3 and the other parameters of Adam are set to default value. The total number of training epochs were set to 20.  The trained   model achieves 72 percent accuracy.

\textbf{Waveform dataset} The dataset can be downloaded from \footnote{\url{https://archive.ics.uci.edu/ml/machine-learning-databases/waveform/}}. The dataset consists of 40 attributes that help identify the wave type (from three types of waves). We divided the data into 75 percent train, 25 percent test. We use a two layer neural network, where the first layer (the hidden layer) has 200 nodes with ReLU activation and the second layer (the output layer) has three output nodes with softmax activation. The model is trained using cross-entropy loss function with Adam optimizer. We set a learning rate of 1e-3 and the other parameters of Adam are set to default value. The total number of training epochs were set to 20.  The trained  model achieves 85 percent accuracy.

\subsection{Initializers}
\label{init_sec}
We discuss the different initializers. Each of these initializers pick a value for $\theta_{\mathsf{in}}$ for gradient descent solver in the Algorithm 1.  

\textbf{Zero initializer.} This is the simplest of the benchmarks. We start gradient descent with zero vector as initialization. Note that when using it for adversarial examples, and the contrastive explanations, we use the loss function in \cite{cheng2018query} whose gradient involves $\frac{\theta}{\|\theta\|}$, where $\theta$ is the perturbation. This term undefined at zero, which is why we add a very small noise term to zero vector to avoid undefined gradient values. 

\textbf{Random initializer.} In this benchmark, we have to choose a distribution from which we sample the random initializations.  For convex optimization, we initialize the optimization variable $\theta$ in equation \eqref{eqn: stat_opt_example1} is drawn from standard uniform distribution (each component of the optimization variable is sampled from a uniform distribution over $[0,1]$). For the two dimensional Ackley function minimization defined in the main body of the paper, we use a uniform random initializer over the set $[-5,5]\times [-5,5]$. For  contrastive explanations and adversarial examples on tabular datasets, we draw an initialization from standard uniform distribution (the dimension of the initializer is the same as the dimensionality of the dataset). For contrastive explanations and adversarial examples on MNIST digits data, we found that standard uniform random did not perform well. Instead, we found that sampling points from the training dataset itself worked better. For these settings we used uniform sampling over the training data set. For sum-rate optimization problem, each component of the optimization variable is sampled from a uniform distribution over $[0,1]$

\textbf{MAML initializer.} In \cite{finn2017model}, MAML was proposed for learning models that perform well on few-shot learning tasks. MAML procedure can be understood as a way of generating an effective initialization for a model which quickly adapts to new tasks.  The steps in MAML initialization  applied to the optimization problem in equation \eqref{eqn: stat_opt}  are given in Algorithm \ref{alg1}. The output of the algorithm is the initialization vector we use to solve the problem for new instances. 
For the different methods, we use a batch size $K=32$ for MAML initializer, $\alpha=0.01$, and $\beta=0.01/K$, where $\alpha$ and $\beta$ are the learning rates in the Algorithm \ref{alg1}.
\begin{algorithm}[tb]
  \caption{MAML Initializer}
\begin{algorithmic}
  \STATE {\bfseries Input:} Instances $\{X_i\}_{i=0}^{\infty}$
\STATE $\Pi_{\Theta}$: projection on the set $\Theta$
\STATE  $\theta \leftarrow$ random initialize
\FOR{$i \in \{0,..N-1\}$}
\item  Sample a batch of size $K$ of optimization tasks $\{X_t\}_{t=1}^{K} \in \mathbb{P}_{X}$
\FOR{$t\in \{1,..,K\}$}
\STATE Update $\theta_t = \Pi_{\Theta}(\theta -\alpha\nabla_{\theta}f(\theta, X_t))$
\ENDFOR
\STATE $\theta = \Pi_{\Theta}(\theta -\beta\sum_{t=1}^{K}\nabla_{\theta}f(\theta_t,X_t))$
\ENDFOR
\STATE \textbf{Output:} $\theta$
\end{algorithmic}
\label{alg1}
\end{algorithm} 

\textbf{Val-Init Initializer.} Recall Val-Init Algorithm (described in the main body of the paper) has two phases. In the first phase, we use random initializer (described above) and solve the problem to generate training data for the next phase. We learn a model from a hypothesis class $\mathcal{H}_{\mathsf{val}}$, which we define next. The model for Val-Init is a 3 layer neural network. The shape of the input layer depends on the problem (for generating adversarial examples and contrastive explanations the size of the layer is $2*\mathsf{input}$-$\mathsf{shape}$, where $\mathsf{input}$-$\mathsf{shape}$ is the dimension of each data instance). The two hidden layers have 200 nodes and  ReLU activations. The output layer has one node. 
 The model is trained using mean squared error loss function with Adam optimizer. We set a learning rate of 1e-3 and the other parameters of Adam are set to default value. The total number of training epochs were set to 100.

\textbf{Arg-Init Initializer.} Recall Arg-Init Algorithm (described in the main body of the paper) has two phases. In the first phase, we use random initializer (the choice of random initializer was already described above) and solve the problem to generate training data for the next phase. We learn a model from a hypothesis class $\mathcal{H}_{\mathsf{arg}}$, which we define next. The model for Arg-Init is  a 3 layer neural network.  The shape of the input layer depends on the problem (for generating adversarial examples and contrastive explanations the size of the layer is $2*\mathsf{input}$-$\mathsf{shape}$, where $\mathsf{input}$-$\mathsf{shape}$ is the dimension of each data instance).  The two hidden layers have 200 nodes and  ReLU activations. The output layer has the same number of dimensions as the dimension of each data instance. The model is trained using mean squared error loss function with Adam optimizer. We set a learning rate of 1e-3 and the other parameters of Adam are set to default value. The total number of training epochs were set to 100.
 
\textbf{Remark about $h_{\mathsf{val}}$ and $h_{\mathsf{arg}}$ CIFAR-10.}
When learning  $h_{\mathsf{val}}$ and $h_{\mathsf{arg}}$ for CIFAR-10 dataset, we do not feed raw images to the network of the three layer MLPs described above. Instead, we first extract the representation of these images from the convolutional neural network model (described in Section \ref{sup_mode}) trained to predict the labels in CIFAR-10. We extract these representations  from the layer before the output layer.

\subsection{Solvers and other details}
In all the problems, we use  gradient descent based solvers described in Algorithm \ref{alg1} in the main body of the paper. The gradient descent solver in Algorithm 1 is supplied an input of step update rules to use, and we for our experiments use-- in iteration $k$ take a step length $\frac{p}{q+k}$

\textbf{Convex optimization setting.}  Consider the problem in equation \eqref{eqn: stat_opt_example1}, we set $\beta=1$. We set $u$ to be a linear classifier $u(x) = a^tx$, where $a \in \mathbb{R}^{m}$  and $x \in \mathbb{R}^{m}$ are the parameters that are fixed for an instance of the optimization. We run experiments for $m=50,75,100$. We keep the classifier $a$ fixed (a random normal vector) and vary the input example $x$ across problem instances (each new instance $x$ is also a random normal vector). Our goal is to solve for the optimal perturbation $\theta$. We use projected subgradient descent (PGD) based approach to solve this problem. The projection part is needed to ensure constraints are exactly satisfied. We use subgradients as the loss function involves an $\ell_1$ norm term. We set $p=1$ and $q=25$ in the step update rule.

For training Val-Init and Arg-Init, we need to generate training data (See Algorithm \ref{alg:VIA} in the main body of the paper). We use a random initializer (described in Section \ref{init_sec}) in the first phase of the algorithm. The number of examples for which we solve the problem in Phase 1 of the Algorithm \ref{alg:VIA}, $N$ is set to 1500. In the second phase, we test on 500 instances.  

\textbf{Ackley function minimization.} Ackley functions are used to study non-convex optimization problems \cite{ackley2012connectionist}. The general form of a two dimensional Ackley function is    
\begin{equation}
\begin{split}
    & \mathsf{A(x,y,a,b,c)} = -\mathsf{a exp(-b\frac{\sqrt{(x-c)^2+(y-c)^2}}{2})}  -\mathsf{exp(\frac{cos(2\pi(x-c)) + cos(2\pi(y-c))}{2}) +exp(1) +a}  
    \end{split}
\end{equation}
The global minimum of the function is at $\mathsf{(c,c)}$ and the minimum value is zero.  We are given a sequence of functions with different values of $\mathsf{a,b,c}$, where $\mathsf{a\sim 20 + U[0,10]}$, $\mathsf{b\sim 0.2 + U[0,0.1]}$ and $\mathsf{c\sim U[0,2]}$ and our goal is to learn an initializer that takes $\mathsf{a,b,c}$ as input and generates an initialization. We set $p=0.25$ and $q=1$ in the step update rule. For training Val-Init and Arg-Init, we need to generate data in the first phase. We use a random initializer as described in the previous section. The number of examples for which we solve the problem in Phase 1 of the Algorithm \ref{alg:VIA}, $N$ is set to 1500. In the second phase, we test on 500 instances.

\textbf{Adversarial examples.} Consider the problem in equation \eqref{eqn: stat_opt_example1} in the main body of the paper, we set $\beta=0$. We set $u$ to be a neural network model that we learn for the specific dataset (CIFAR-10, HELOC, MNIST, Waveform). As we explained in the manuscript, we use a penalty based version of the problem in equation \eqref{eqn: stat_opt_example1} in the main body (following the approach in \cite{cheng2018query}). We set the value of the margin, which is a parameter in the penalty term, to be 0.2. We set the penalty value such that the fraction of constraints satisfied in equation \eqref{eqn: stat_opt_example1} by random initialization is high. Hence, the penalty value varies across datasets. For MNIST digits the penalty value is set to be $2.5$. For HELOC and Waveform datasets the penalty value is set to be $10$. For CIFAR-10 dataset the penalty value is set to be $50$. We use a standard gradient descent based solver as the objective is differentiable. For CIFAR-10, MNIST digits we set the step update parameters as follows $p=1$, $q=1$. For tabular datasets (HELOC, Waveform) we set the step update parameters as follows $p=1$, $q=5$. 
Recall we described the split of HELOC, Waveform and MNIST data into train and test, which were used to learn models (Section \ref{sup_mode}).  We use the same training data to generate the adversarial examples (Phase 1 of the Algorithm \ref{alg:VIA} in the main body) and use the same testing data to compare the different initializers.

\textbf{Contrastive explanations.} Consider the problem in equation (2) in the main body, we set $\beta=0.1$ and constrain the perturbations to be valid contrastive explanations (as described in \cite{dhurandhar2018explanations}). We set $u$ to be a neural network model that we learn for the specific dataset (MNIST, Waveform). We consider a penalty based version of the problem in equation \eqref{eqn: stat_opt_example1} in the main body (following the approach in \cite{dhurandhar2018explanations}). We use the same values for the margin and penalties as in the case for adversarial examples. We use subgradient descent based solver (as described in \cite{dhurandhar2018explanations}). For MNIST digits we set the step update parameters as follows $p=1$, $q=1$. For tabular datasets (HELOC, Waveform) we set the step update parameters as follows $p=1$, $q=5$. 
Recall we described the split of HELOC, Waveform and MNIST data into train and test, which were used to learn models (Section \ref{sup_mode}).  We use the same training data to generate the contrastive explanations (Phase 1 of the Algorithm \ref{alg:VIA} in the main body) and use the same testing data to compare the different initializers.

\textbf{Sum-rate optimization.} We use projected gradient descent based solver, where projections are on the set of constraints of the optimization problem, i.e., $[0,1]^{m}$. We set the step update parameters as $p=1$, $q=1$.  For sum-rate optimization, we train using 5000 problem instances for 15 users. Each problem instance is a new independent draw of the channel matrix from a standard uniform random distribution (each element in the matrix is i.i.d.). We learn initializations for Val-Init, Arg-Init, and MAML.  We test on the 500 instances.

Please note that we did not specify the number of iterations that we fix in gradient descent because we show a comparison of the performance with varying number of iterations in the solver.

\subsection{ Results}

  \subsubsection{Convex optimization experiments}

We compare the methods in terms of the average value of the objective function across the test instances (the best method should have the smallest value) for the same number of iterations. In Figure \ref{fig1_n}, we plot the objective function value achieved by the different approaches vs.\ the number of iterations when $m=75$. Arg-Init is able to find a much better solution (35-70\% reduction) when the number of iterations are small. When the iterations increase all the methods perform well (since the problem is convex) and the method eventually finds the minimum. In Figure \ref{fig4c}, we present the distribution of the objective function values for the different methods for $m=75$. Observe that Arg-Init's distribution is skewed towards the left and its performance stochastically dominates the other methods. We also present results for additional 
  cases here for $m=50$ and $m=100$. See the results in Table \ref{table1}, where we show Arg-Init continues to beat other methods. 
  In addition to the comparisons, we provide validation mean square error values for learners $h_{\mathsf{val}}$, $h_{\mathsf{arg}}$ to show that these methods models were able to learn based on the instances supplied to them at the end of Phase 1 of Algorithm 1 in the main body. In Table \ref{table_learn}, we provide initial and final validation mean square error values.
  
  \begin{figure}[H]
  \begin{center}
    \includegraphics[trim = 0cm 0cm 1cm 1.15cm, clip, width=2.5in]{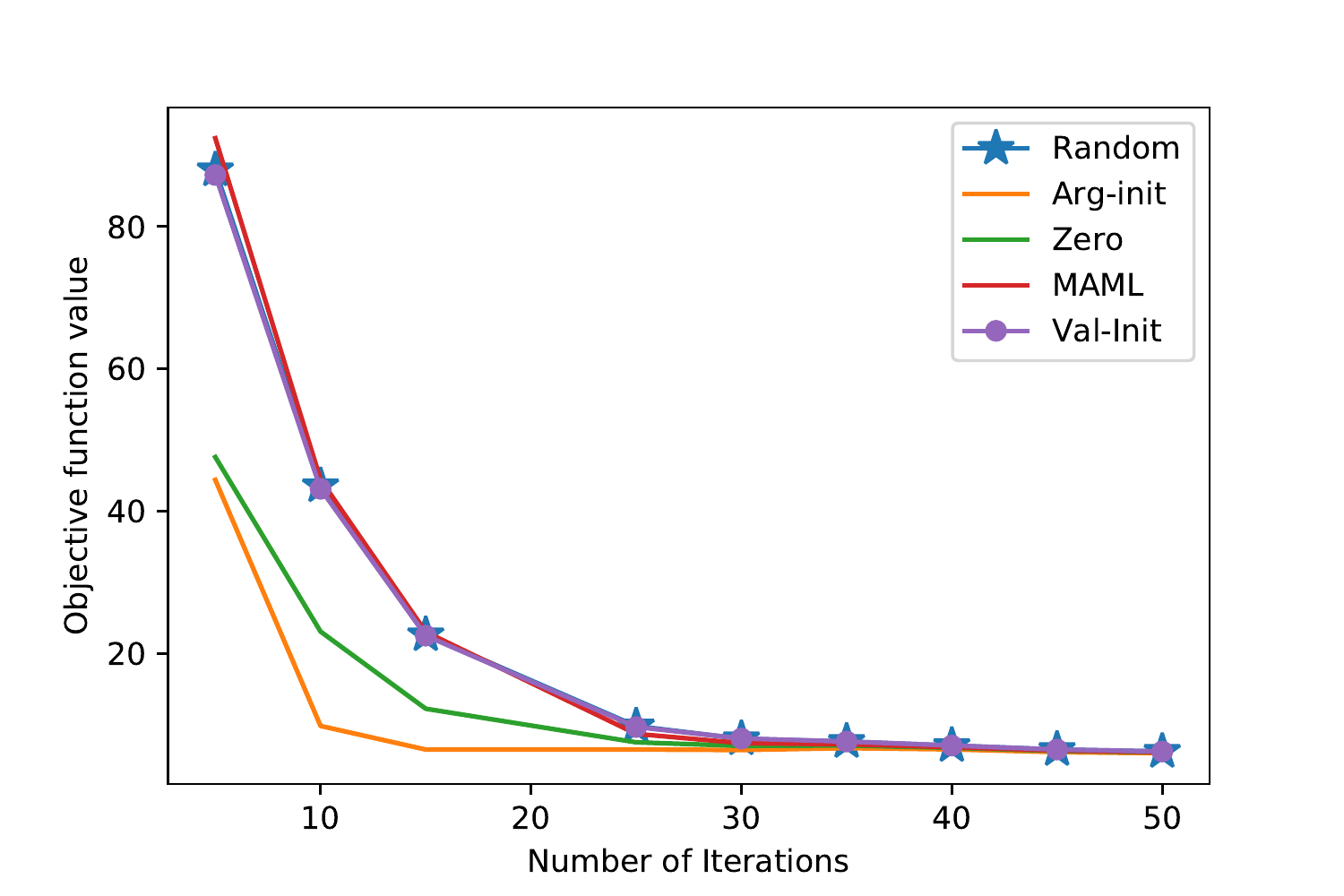}
  \end{center}
  \caption{Objective fun vs.\ number of iterations}
  \label{fig1_n}
\end{figure}

    \begin{table}[H]
\caption{\textbf{Convex optimization setting:} Objective's average value, fraction of instances when constraints not satisfied and the number of iterations per instance in test phase. }
\begin{center}
\begin{small}
\begin{sc}
\begin{tabular}{lcccr}
\toprule
Method & $m=50$ & $m=75$ & $m=100$ \\
\midrule
Random    & 29.69, 0.0, 10 & 43.56, 0.0, 10  & 57.68, 0.0, 10\\
Zero &  16.07, 0.0, 10 & 23.08, 0.0, 10 &  29.75, 0.0, 10\\  
MAML & 29.84, 0.0, 10 & 44.43, 0.0, 10 & 59.51, 0.0, 10\\
Arg-init &  \textbf{7.70, 0.0, 10} & \textbf{9.82, 0.0, 10} & \textbf{12.75, 0.0, 10}\\
Val-Init & 28.95, 0.0, 10& 43.14,0.0,10 & 57.18.0.0, 10\\ 
\bottomrule
\end{tabular}
\end{sc}
\end{small}
\end{center}
\vskip -0.25in
\label{table1}
\end{table}
  \begin{figure}[H]
\begin{center}
\centerline{\includegraphics[trim = 0cm 0cm 0cm 1cm, clip,width=2.75in]{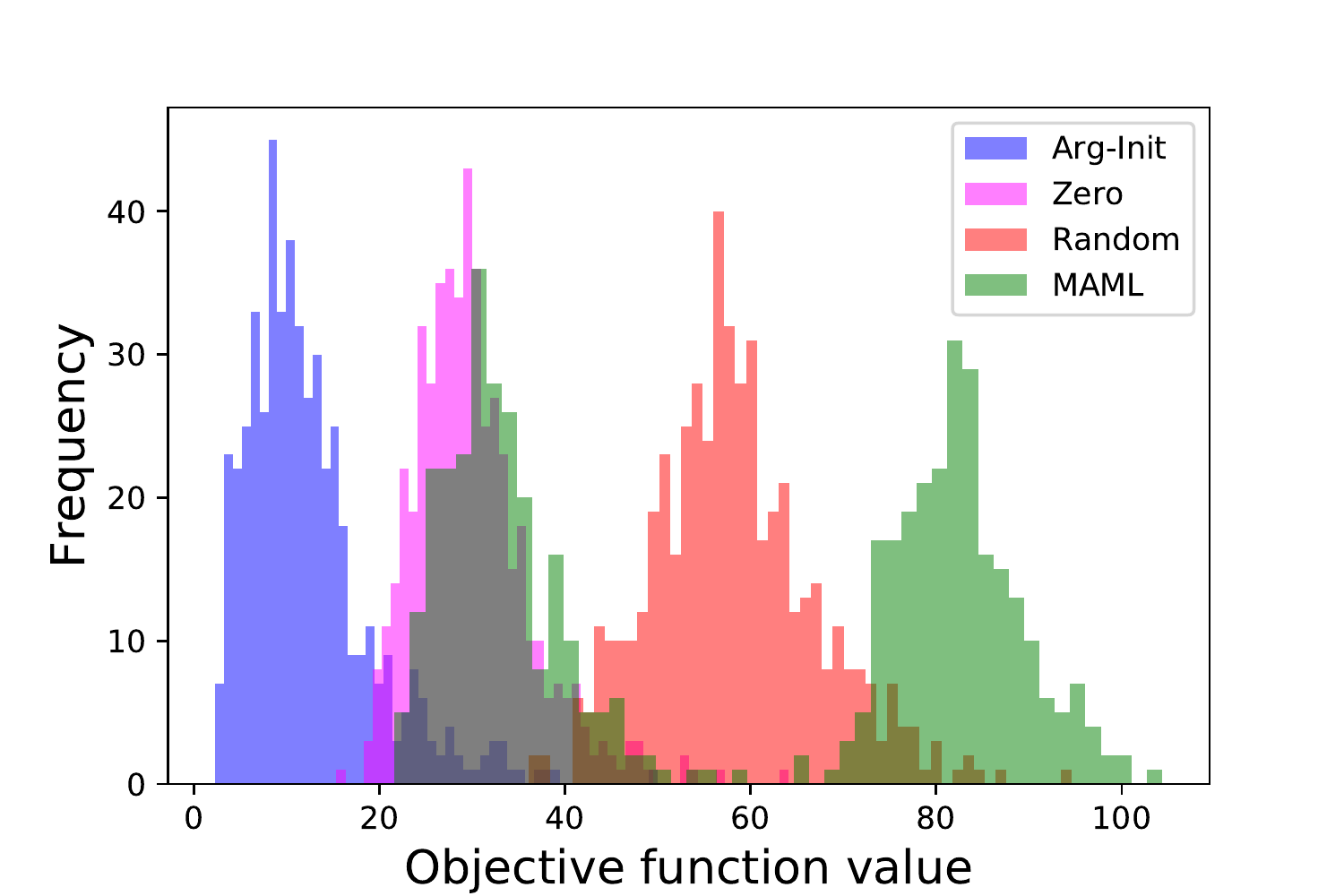}}
\caption{Convex optimization setting: Histogram comparing the objective function value distribution }
\label{fig4c}
\end{center}
\end{figure}

\begin{table}[H]
\caption{\textbf{Convex optimization setting:} Learner errors $h_{\mathsf{val}}$ ($h_{\mathsf{arg}}$)}
\begin{center}
\begin{small}
\begin{sc}
\begin{tabular}{lcccr}
\toprule
Method & $m=50$ & $m=75$ & $m=100$ \\
\midrule
Epoch 1 (first)    &  9.22 (0.026)&  13.34  (0.0020)& 33.06 (0.0015) \\
Epoch 100 (last) & 0.11 (0.010)&  0.24 (0.0004)& 0.02 (0.0005)\\ 
\bottomrule
\end{tabular}
\end{sc}
\end{small}
\end{center}
\label{table_learn}
\end{table}

  \subsubsection{ Ackley function minimization}
  
  In the main body, we presented the experiments for Ackley function minimization: objective function value vs number of iterations. Here we provide the plot of distribution of the objective function values (when number of iterations are fixed to 50). In Figure \ref{fig5}, we observe that the scores of Val-Init lie most to the left. Since the distributions are very skewed, we provide a Zoomed in view in Figure \ref{fig6}. We also provide the validation mean square error  when learning $h_{\mathsf{val}}$ and $h_{\mathsf{arg}}$. The initial (epoch 1) and final (epoch 100) mean square error values for $h_{\mathsf{val}}$ ($h_{\mathsf{arg}}$) are 18.48 (1.46) and 5.52 (0.10), which reflect these models can learn their respective labels well.

  \begin{figure}[H]
\begin{center}
\centerline{\includegraphics[trim = 1cm 0cm 1cm 1cm, clip,width=2.75in]{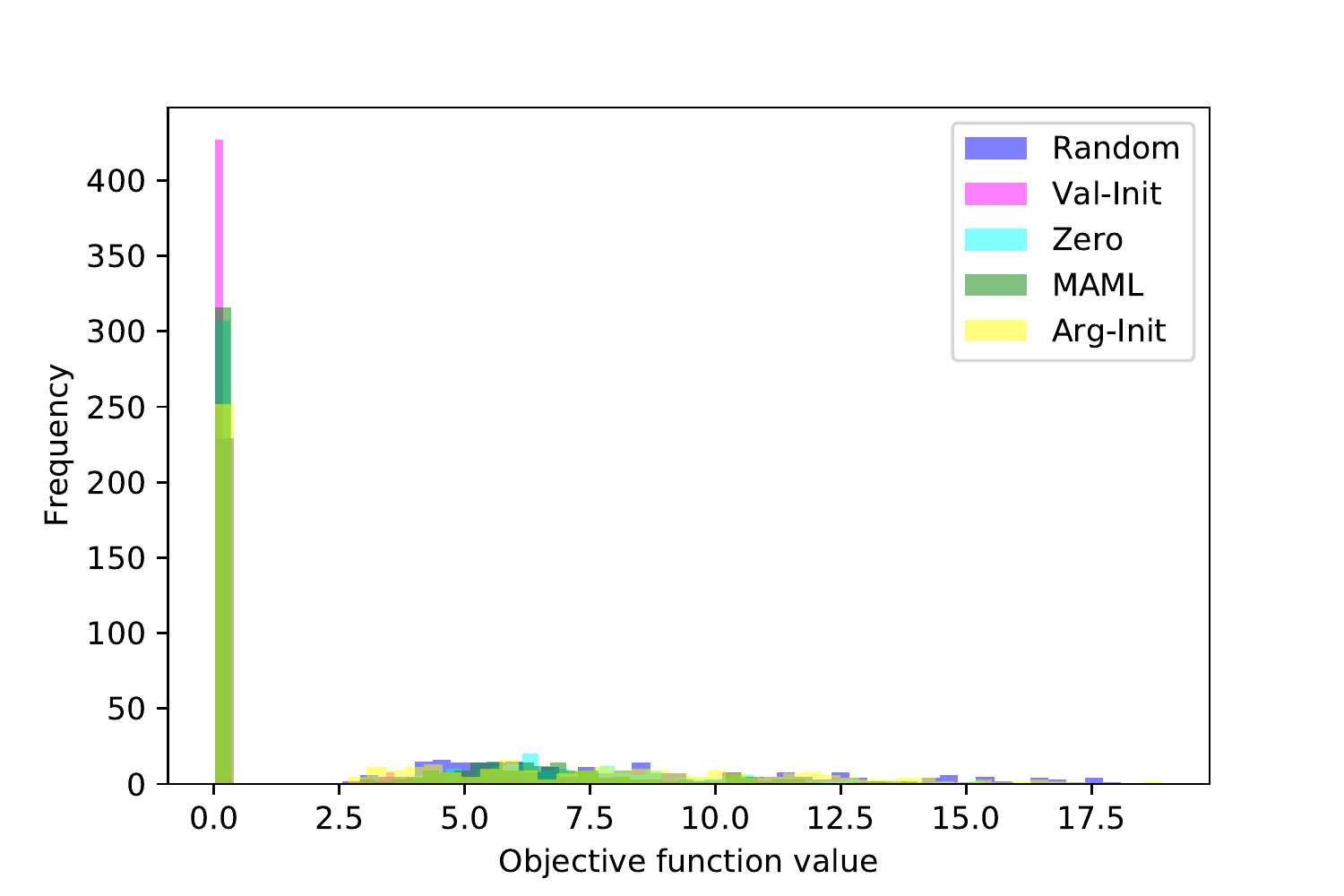}}
\caption{Ackley function minimization: Histogram comparing the objective function value distribution.}
\label{fig5}
\end{center}
\end{figure}
\begin{figure}[H]
\begin{center}
\centerline{\includegraphics[trim = 1cm 0cm 1cm 1cm, clip,width=2.75in]{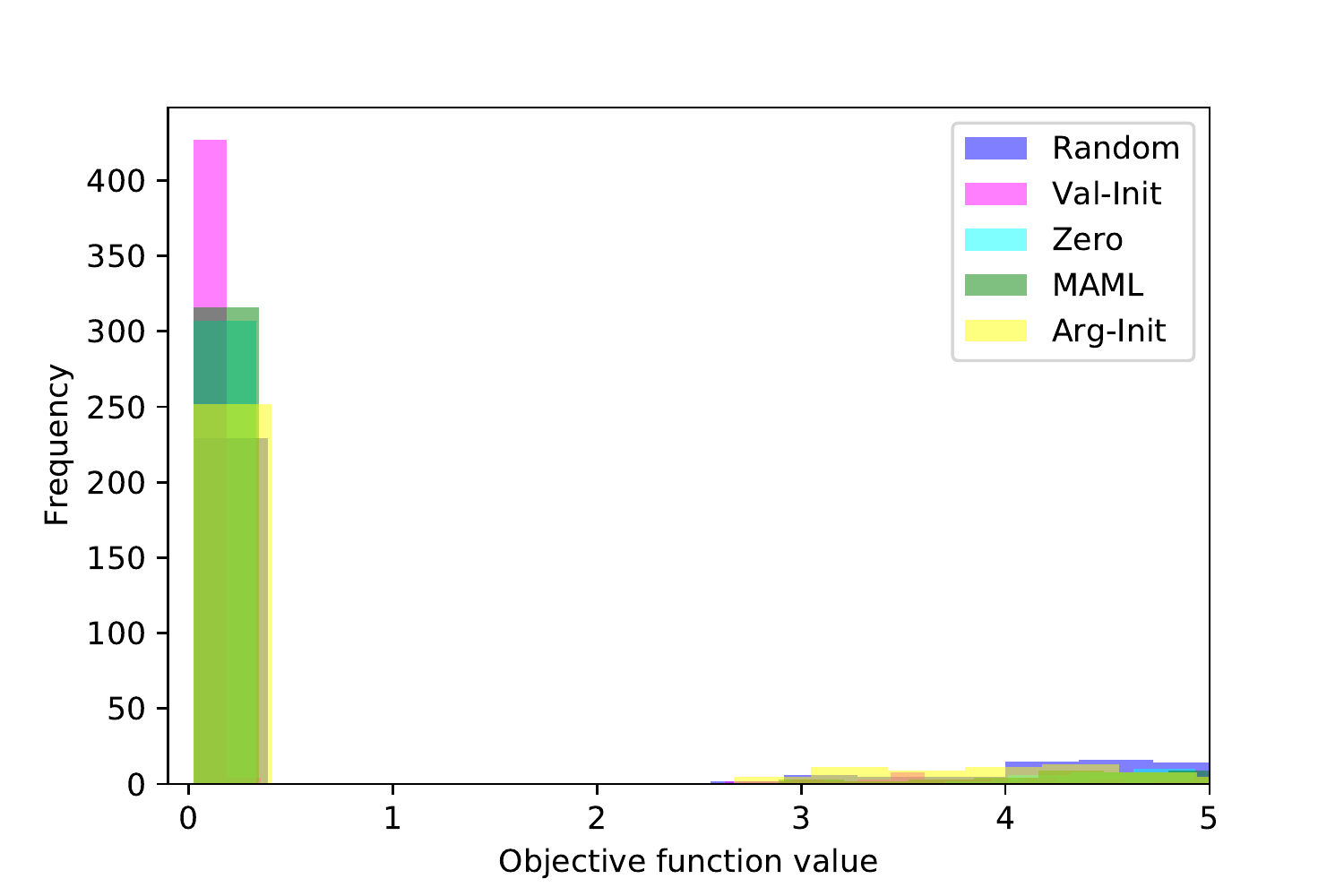}}
\caption{Ackley function minimization: Histogram comparing the objective function value distribution (Zoomed in version). }
\label{fig6}
\end{center}
\end{figure}

 \subsubsection{Generating adversarial examples}
  \textbf{CIFAR-10 dataset.} In the main body, we presented the experiments for CIFAR-10 dataset: performance of the method vs. number of iterations. We provide additional details on this comparison below. In Figure \ref{fig22_cifar}, we provide a histogram of distribution of performance values of the methods over different problem instances (for the case when number of iterations is set to $100$). Both Val-Init and Arg-Init have a consistently good performance over all instances (as their distributions towards the left) unlike zero initializer which has an erratic performance distribution -- in some instances it can successfully generate good initializers while in others it is far off.

\begin{figure}[H]
\begin{center}
\centerline{\includegraphics[trim = 0cm 0cm 0cm 1cm, clip,width=2.75in]{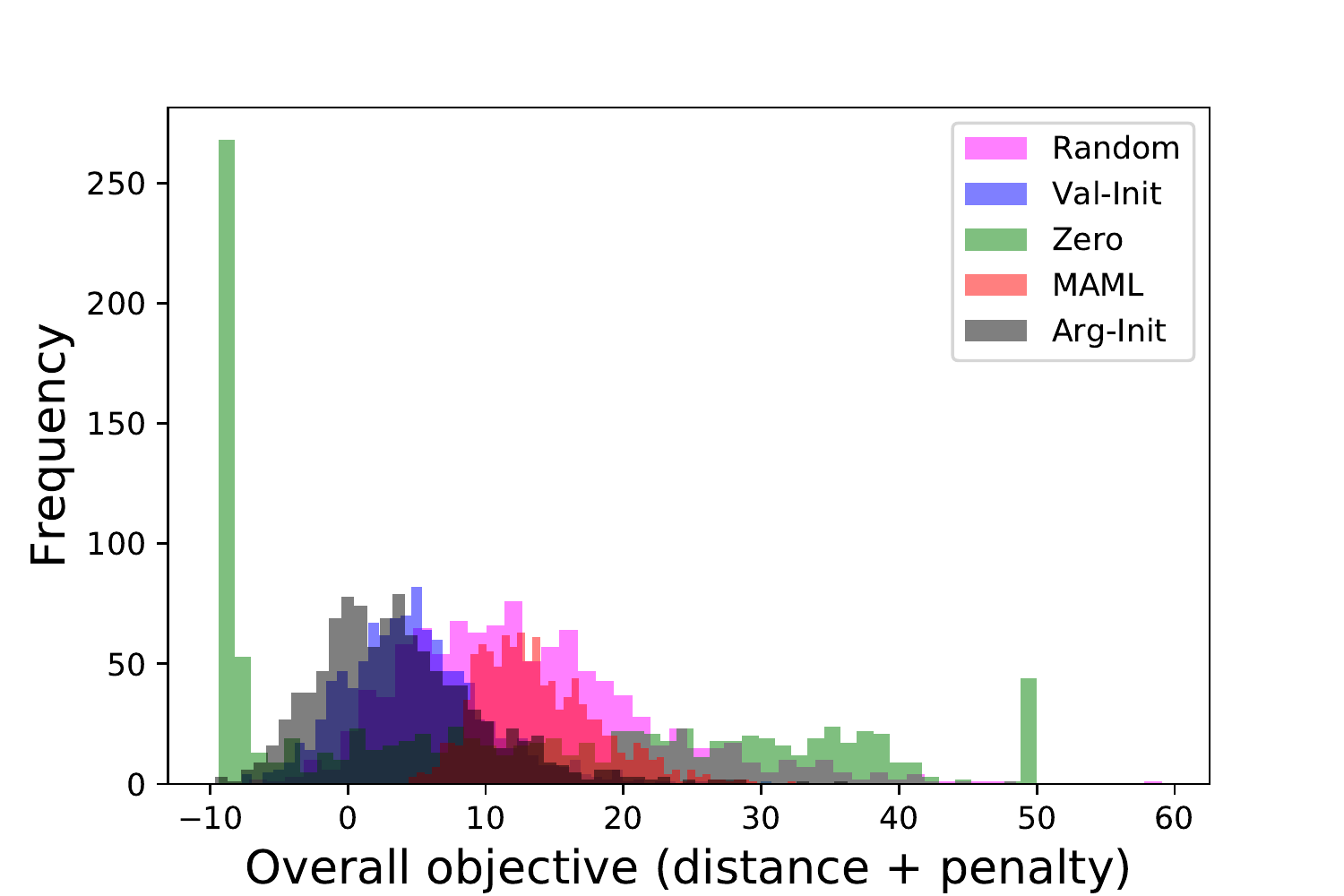}}
\caption{CIFAR-10: Histogram of objective values for Arg-Init, Random, Zero, MAML initialzation }
\label{fig22_cifar}
\end{center}
\end{figure}

  \textbf{MNIST Digits.} In the main body in Table \ref{table3}, we showed the average performance of the different initializers on MNIST.  In Figure \ref{fig11_n}, we present the histogram comparing the distribution of performance over the different instances. Since the comparison in Table \ref{table3} in the main body is for a fixed number of iterations, in Figure \ref{fig7_n}, we present the plot the performance of the method vs.\ the number of iterations.  We also provide the validation mean square error  when learning $h_{\mathsf{val}}$ and $h_{\mathsf{arg}}$. For Arg-Init (Val-Init) the MSE at the end of first epoch:  0.0089 (0.79) and at the end of last epoch: 0.0026 (0.33). 

  \begin{figure}[H]
\begin{center}
\centerline{\includegraphics[trim = 0cm 0cm 0cm 1cm, clip,width=2.75in]{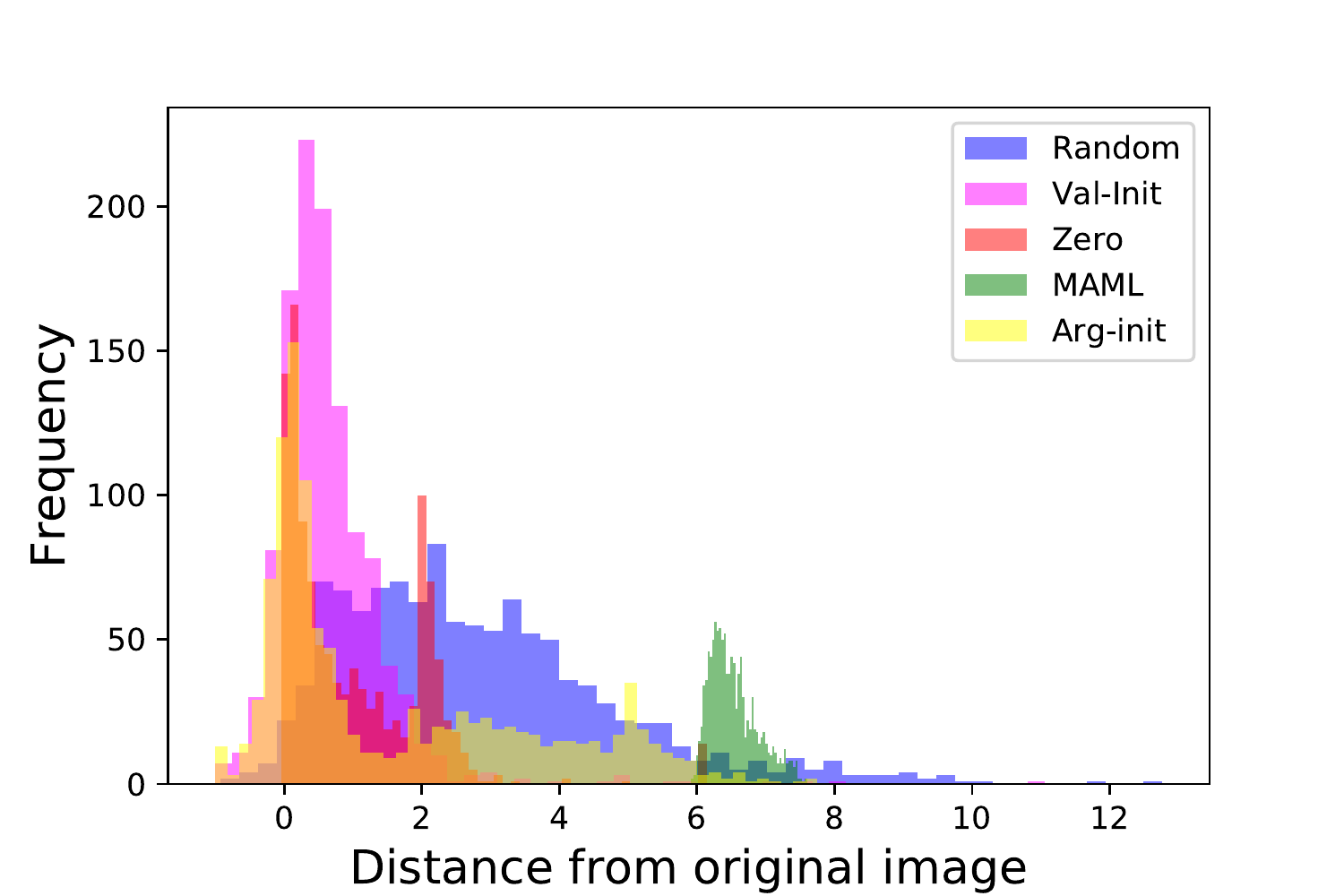}}
\caption{Adversarial example MNIST digits:  Histogram comparing the distribution of the average penalized objective.}
\label{fig11_n}
\end{center}
\end{figure}

      \begin{figure}[H]
\begin{center}
\centerline{\includegraphics[trim = 0cm 0cm 0cm 0cm, clip,width=2.75in]{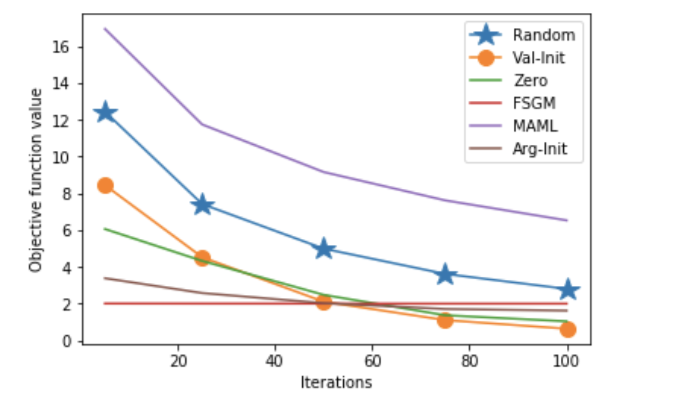}}
\caption{Adversarial example MNIST digits: Compare  performance (average penalized objective function value across test instances) of the method vs. iterations for Val-Init, Arg-Init, Random, Zero, MAML initialzation.}
\label{fig7_n}
\end{center}
\end{figure}

  \textbf{Waveform dataset.}  In the main body in Table \ref{table3}, we showed the average performance of the different initializers on Waveform and found Arg-Init, Zero and MAML were very close. In Figure \ref{fig15_n}, we show the distribution of the performance of the methods over the different instances.  Arg-Init, Zero and MAML lie on the left most side of the histogram, while Val-Init and Random are on the right side. In Figure \ref{fig13_n}, we compare the performance of the methods for different number of iterations and we find that MAML performs the best and as number of iterations increase Arg-Init closes in on MAML's performance.  We also provide the validation mean square error  when learning $h_{\mathsf{val}}$ and $h_{\mathsf{arg}}$.
   The initial (epoch 1) and final (epoch 100) mean square error values for $h_{\mathsf{val}}$ ($h_{\mathsf{arg}}$) are 0.0296 (0.0047) and 0.0233 (0.0008), which reflect these models can learn their respective labels well.

  \begin{figure}[H]
\begin{center}
\centerline{\includegraphics[trim = 0cm 0cm 0cm 1cm, clip,width=2.75in]{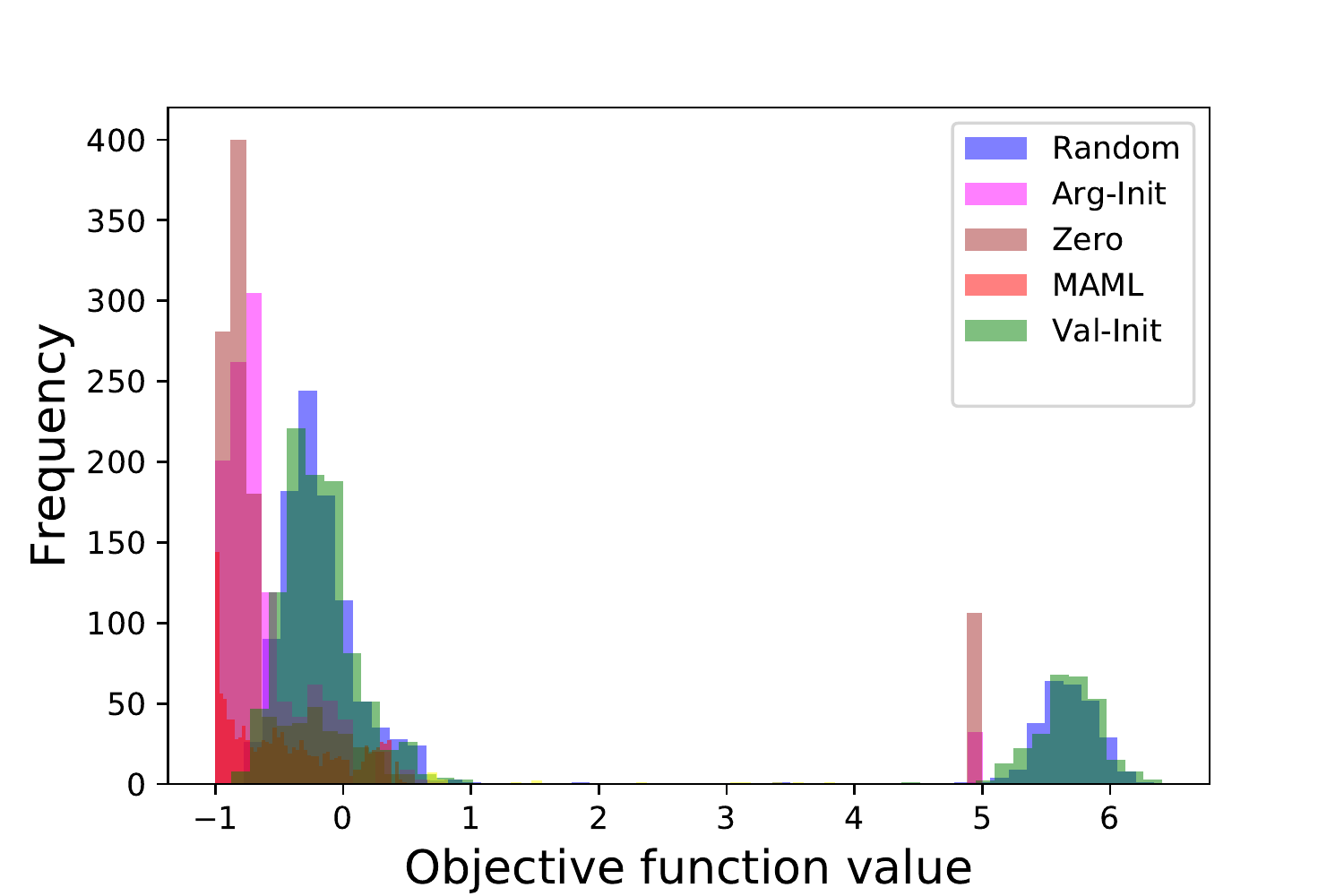}}
\caption{Adversarial example waveform data: Histogram of performance of methods Val-Init, Arg-Init, Random, Zero, MAML initialzation }
\label{fig15_n}
\end{center}
\end{figure}

    \begin{figure}[H]
\begin{center}
\centerline{\includegraphics[trim = 0cm 0cm 0cm 0cm, clip,width=2.75in]{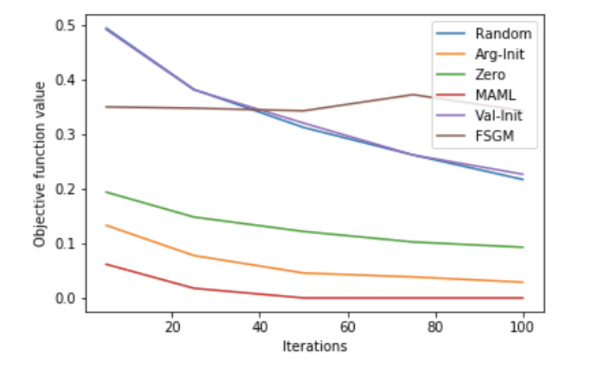}}
\caption{Adversarial example waveform data: Compare the distance from the original point vs iterations for Val-Init, Arg-Init, Random, Zero, MAML initialzation}
\label{fig13_n}
\end{center}
\end{figure}

  \textbf{HELOC dataset.} In the main body in Table \ref{table3}, we showed the average performance of the different initializers for a fixed number of iterations. In Figure \ref{fig23_n}, we compare the different methods in terms of performance of methods vs. the number of iterations.   We also provide the validation mean square error  when learning $h_{\mathsf{val}}$ and $h_{\mathsf{arg}}$.
  The initial (epoch 1) and final (epoch 100) mean square error values for $h_{\mathsf{val}}$ ($h_{\mathsf{arg}}$) are 0.0369 (0.0046) and 0.0152 (0.0007), which reflect these models can learn their respective labels well.

\begin{figure}[H]
\begin{center}
\centerline{\includegraphics[trim = 0cm 0cm 0cm 0cm, clip,width=2.75in]{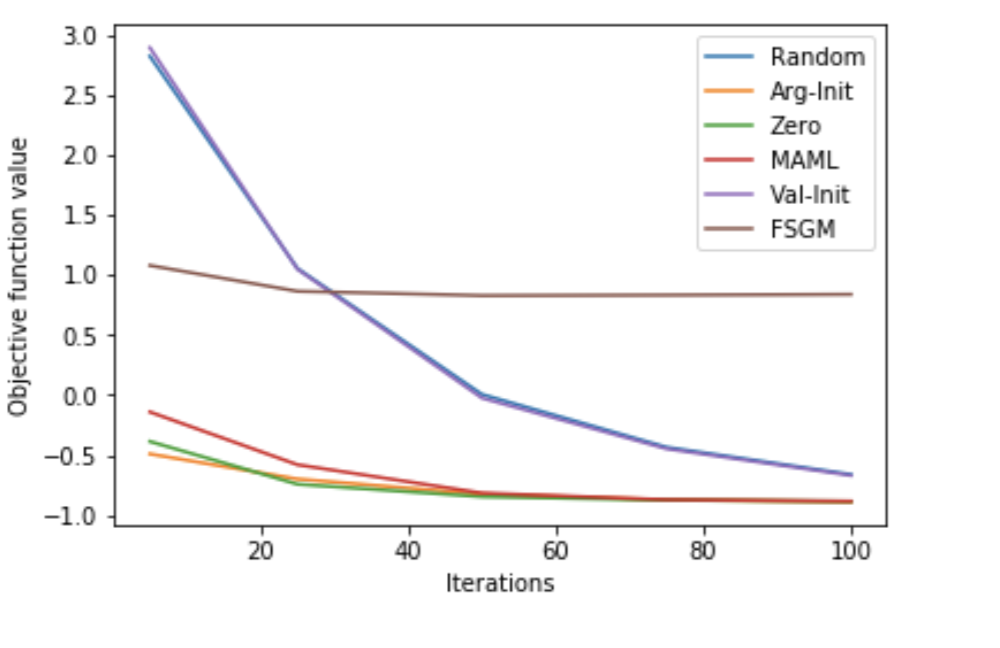}}
\caption{Adversarial example HELOC data: Compare the distance from the original point vs. iterations for Val-Init, Arg-Init, Random, Zero, MAML initialzation}
\label{fig23_n}
\end{center}
\end{figure}

  \subsection{Contrastive Explanations}

  \subsubsection{MNIST Digits}
  
  In the main body, we presented the comparison for MNIST digits for a fixed number of iterations. In the main body in Table \ref{table3_n}, we showed the average performance of the different initializers on MNIST.  In Figure \ref{fig18}, we present the histogram comparing the distribution of performance over the different instances. Observe that Arg-Init is the most towards the left. Since the comparison in Table \ref{table3_n} in the main body is for a fixed number of iterations, in Figure \ref{fig17}, we present the plot the performance of the method vs.\ the number of iterations.  We also provide the validation mean square error  when learning $h_{\mathsf{val}}$ and $h_{\mathsf{arg}}$.   For Arg-Init (Val-Init) the MSE at the end of first epoch:  0.0175 (0.097) and at the end of last epoch: 0.0071 (0.057). 

 \begin{figure}[H]
\begin{center}
\centerline{\includegraphics[trim = 0cm 0cm 0cm 0cm, clip,width=2.75in]{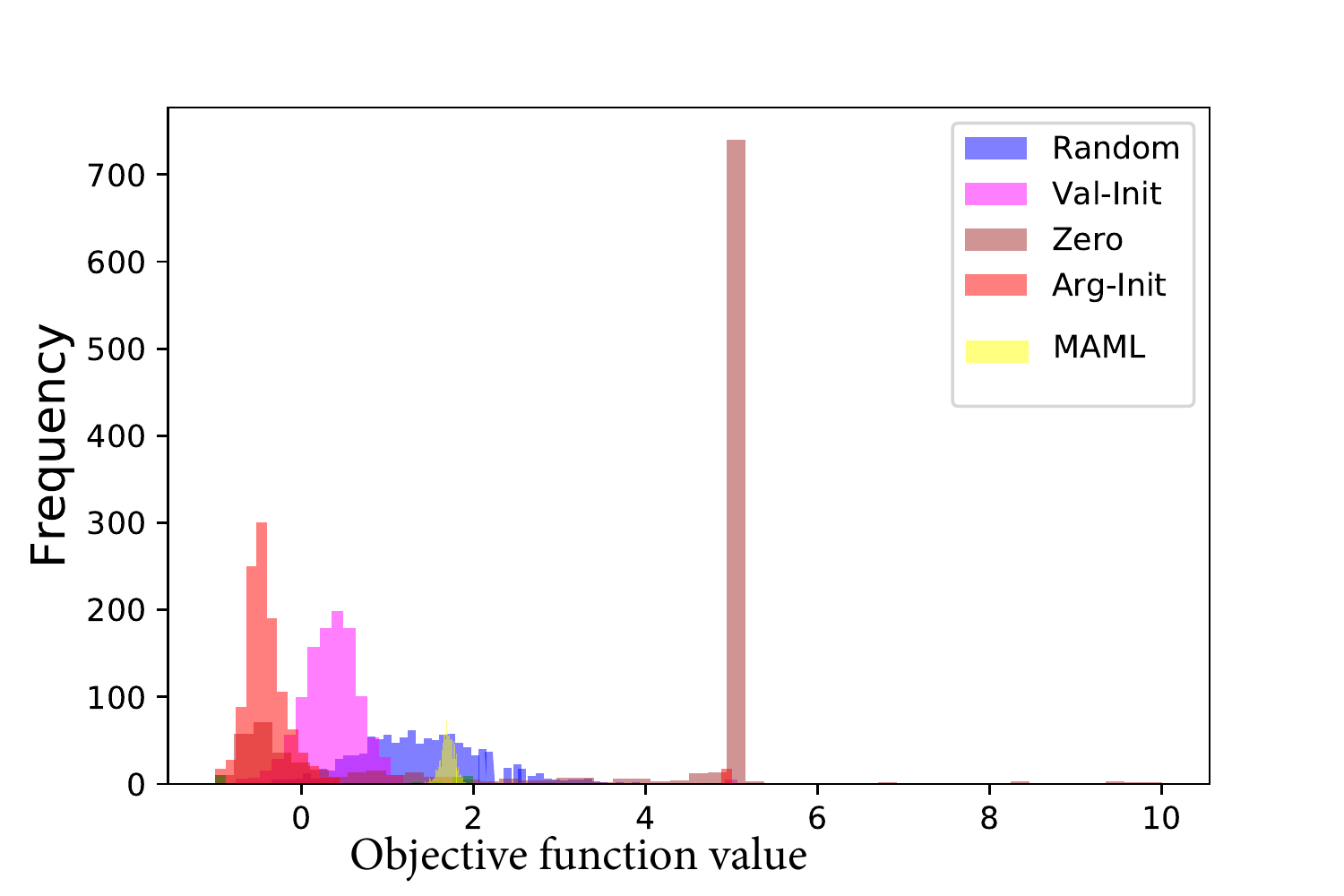}}
\caption{Contrastive explanations MNIST digits:  Compare the distance from the original point vs iterations for Val-Init, Arg-Init, Random, Zero, MAML initialzation.}
\label{fig18}
\end{center}
\end{figure}
 \begin{figure}[H]
\begin{center}
\centerline{\includegraphics[trim = 0cm 0cm 0cm 0cm, clip,width=2.75in]{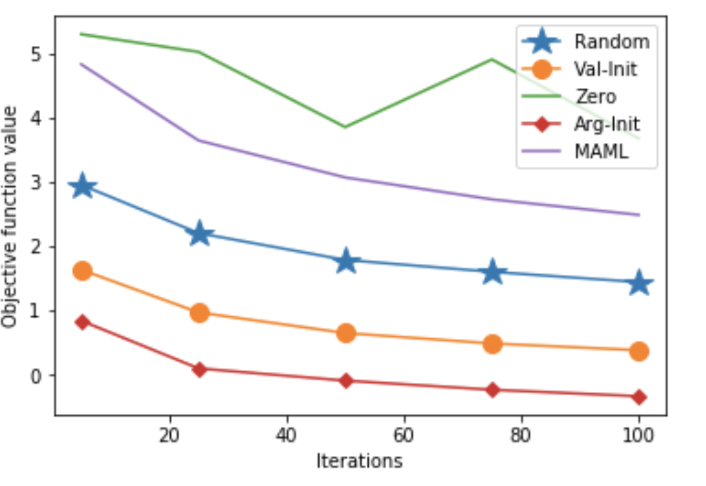}}
\caption{Contrastive explanations MNIST digits:  Compare the performance of method vs iterations for Val-Init, Arg-Init, Random, Zero, MAML initialzation.}
\label{fig17}
\end{center}
\end{figure}

  \subsubsection{Waveform dataset}
  In the main body in Table \ref{table3_n}, we showed the average performance of the different initializers.
 In Figure \ref{fig21}, we compare the histogram of distance from the original point (for iterations fixed to $100$). Arg-Init is the most skewed towards the left compared to the other distributions. 
   In Figure \ref{fig20}, we compare the objective function value vs. the iterations.
 We also provide the validation mean square error  when learning $h_{\mathsf{val}}$ and $h_{\mathsf{arg}}$. 
  For Arg-Init (Val-Init) the MSE at the end of first epoch:  $0.027$ ($6.99$) and at the end of last epoch: $0.0076$ ($2.69$). 

  \begin{figure}[H]
\begin{center}
\centerline{\includegraphics[trim = 0cm 0cm 0cm 0cm, clip,width=2.75in]{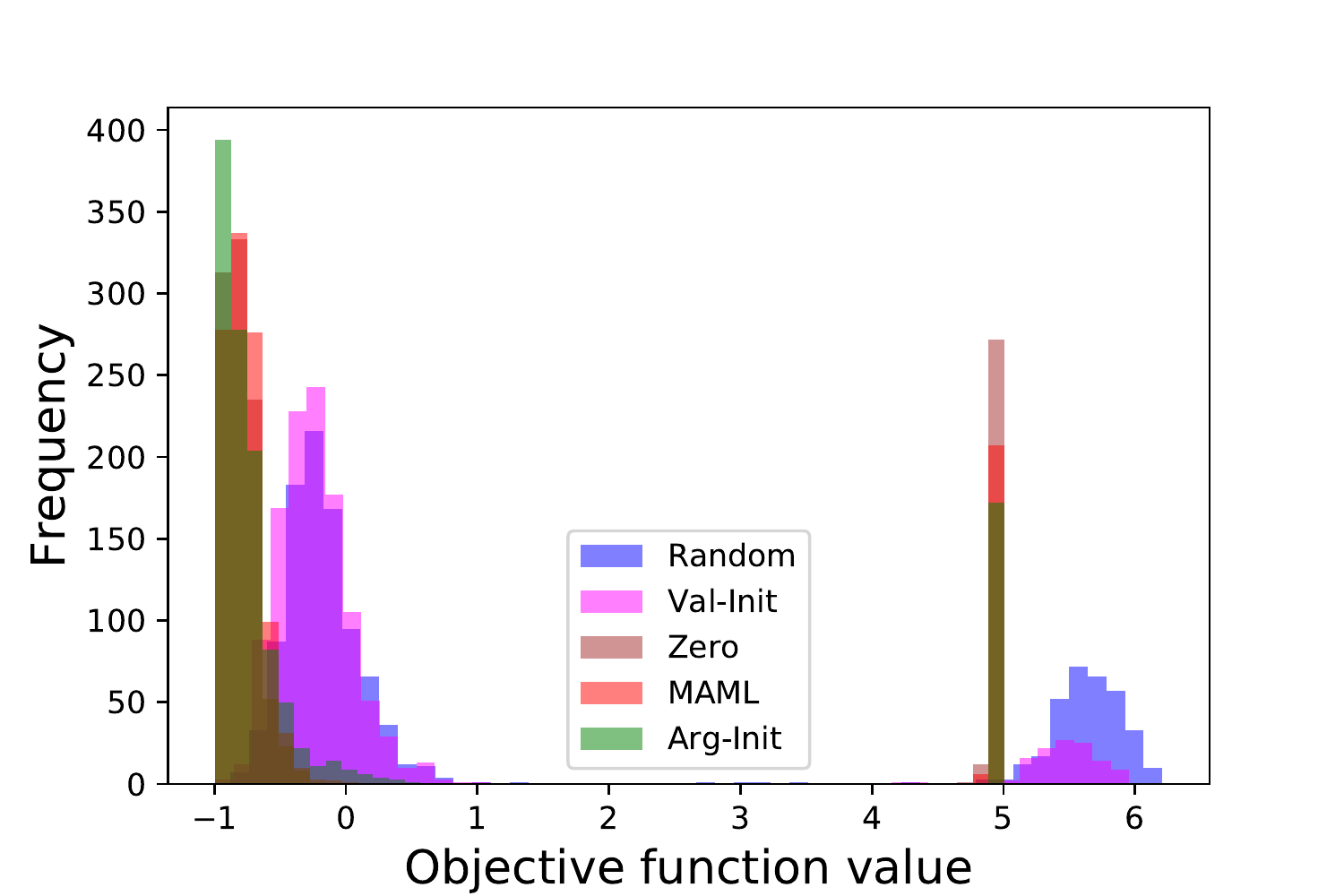}}
\caption{Contrastive explanations Waveform dataset:  Compare the performance of method vs iterations for Val-Init, Arg-Init, Random, Zero, MAML initialzation.}
\label{fig21}
\end{center}
\end{figure}
 \begin{figure}[H]
\begin{center}
\centerline{\includegraphics[trim = 0cm 0cm 0cm 0cm, clip,width=2.75in]{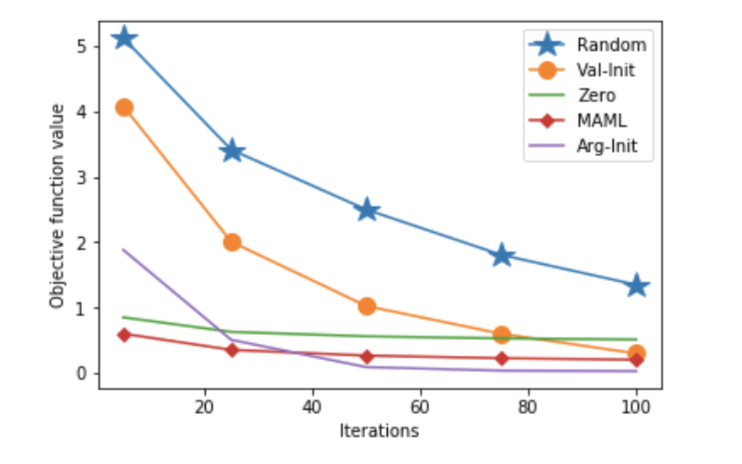}}
\caption{Contrastive explanations Waveform dataset:  Compare the performance of method vs iterations for Val-Init, Arg-Init, Random, Zero, MAML initialzation.}
\label{fig20}
\end{center}
\end{figure}

\subsubsection{Sum-rate optimization}
In the main body we showed how for sum-rate optimization problem Arg-Init was the best performing method in terms of the average performance across different number of iterations.  We fix the iterations to 100 and  show the distribution of the performance of the methods across different problem instances in Figure \ref{fig2}. As we can see from the Figure \ref{fig2} the distribution of Arg-Init is the most towards the left among all the distributions.  We also provide the validation mean square error  when learning $h_{\mathsf{val}}$ and $h_{\mathsf{arg}}$. 
  For Arg-Init (Val-Init) the MSE at the end of first epoch:  $0.100$ ($0.0200$) and at the end of last epoch: $0.0135$ ($0.0113$).

\begin{figure}[ht]
\begin{center}
\centerline{\includegraphics[trim=1cm 0cm 1.5cm 0.5cm,clip=true,width=2in]{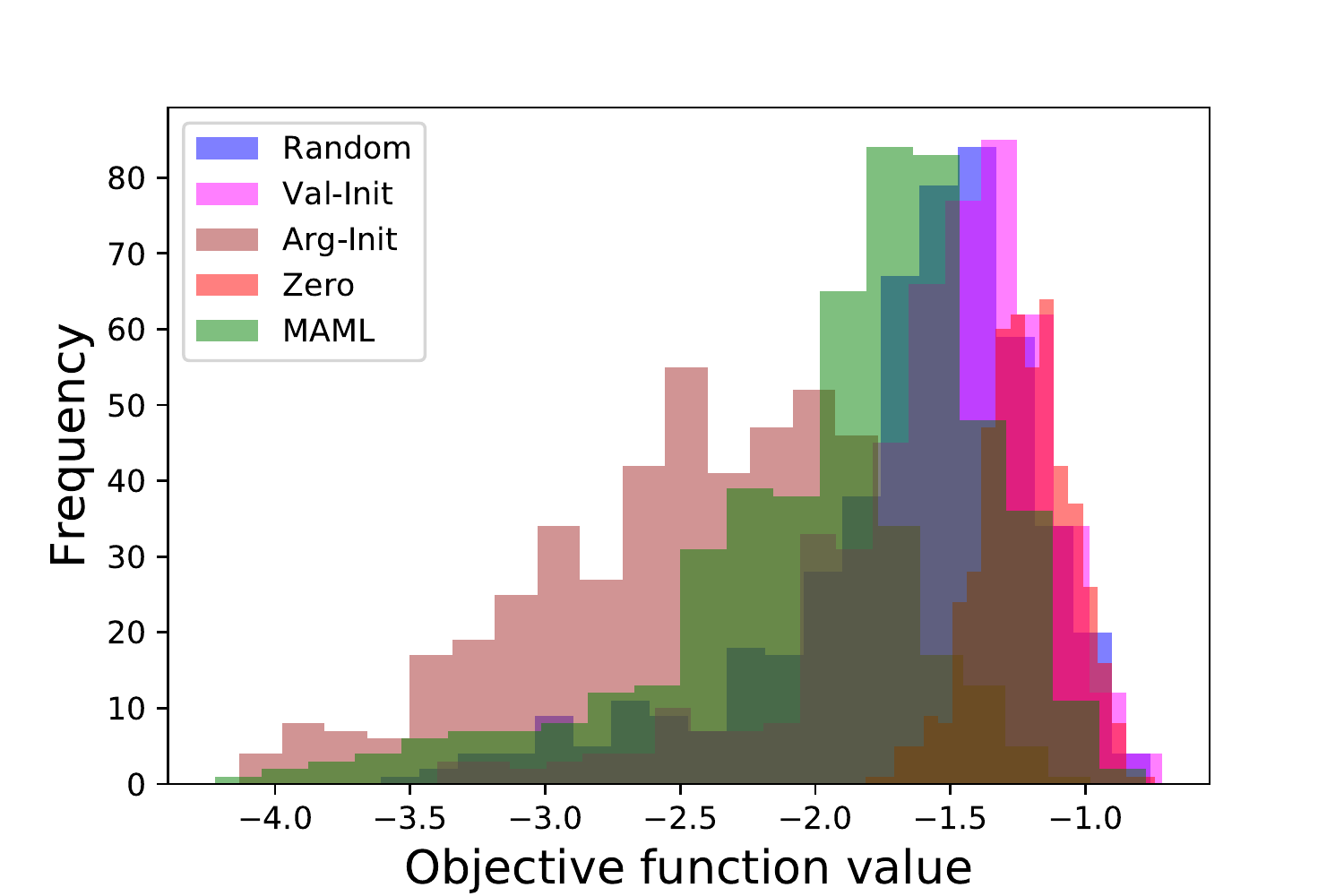}}
\caption{Sum rate optimization: Histogram comparing objective function values }
\label{fig2}
\end{center}
\vskip -0.1in
\end{figure}

\section{Acknowledgement}
We would like to acknowledge Karthikeyan Shanmugam for the valuable comments and feedback on the work.

\bibliographystyle{apalike}
\bibliography{aistats_learn_init}

\end{document}